\definecolor{LightCyan}{rgb}{0.88,1,1}
\newtheorem{theorem}{Theorem}
\newtheorem{lemma}{Lemma}
\newtheorem{definition}{Definition}
\newtheorem{assumption}{Assumption}
\newtheorem{remark}{Remark}
\title{Faster Adaptive Federated Learning}
\author {
    Xidong Wu\textsuperscript{\rm 1}, 
    Feihu Huang\textsuperscript{\rm 1,\rm 2}, 
    Zhengmian Hu\textsuperscript{\rm 1}, 
    Heng Huang\textsuperscript{\rm 1}
}
\begin{document}

\maketitle

\begin{abstract}
Federated learning has attracted increasing attention with the emergence of distributed data. While extensive federated learning algorithms have been proposed for the non-convex distributed problem, federated learning in practice still faces numerous challenges, such as the large training iterations to converge since the sizes of models and datasets keep increasing, and the lack of adaptivity by SGD-based model updates. Meanwhile, the study of adaptive methods in federated learning is scarce and existing works either lack a complete theoretical convergence guarantee or have slow sample complexity. In this paper, we propose an efficient adaptive algorithm (i.e., FAFED) based on the momentum-based variance-reduced technique in cross-silo FL. We first explore how to design the adaptive algorithm in the FL setting. By providing a counter-example, we prove that a simple combination of FL and adaptive methods could lead to divergence. More importantly, we provide a convergence analysis for our method and prove that our algorithm is the first adaptive FL algorithm to reach the best-known samples $O(\epsilon^{-3})$ and $O(\epsilon^{-2})$ communication rounds to find an $\epsilon$-stationary point without large batches. The experimental results on the language modeling task and image classification task with heterogeneous data demonstrate the efficiency of our algorithms.
\end{abstract}

\section{Introduction}
Distributed training, which emerges to address the challenge of distributed data, has attracted wide attention \cite{bao2022doubly}.
With the improvement of computing power, the bottleneck of training speed is gradually shifting from computing capacity to communication. Therefore, federated learning (FL) \cite{mcmahan2017communication} was proposed as an important distributed training paradigm in large-scale machine learning to reduce communication overhead.
In the FL setting, a central server coordinates multiple worker nodes to learn a joint model together with periodic model averaging by leveraging the massive local data of each worker node. The worker nodes share the computational load, and FL also provides some level of data privacy \cite{xiong2021privacy} because training data are not directly shared or aggregated.

More recently, an increasing number of FL works focus on addressing the cross-silo FL (i.e., FL between large institutions) problem \citep{xu2022coordinating, guo2022auto, karimireddy2020scaffold}, where most clients participate in computation every round and can maintain state between rounds. The cross-silo FL involves many practical applications, such as collaborative learning on financial data across various corporations and stakeholders or health data across numerous medical centers \citep{xu2022closing, guo2022auto}. 

In this paper, we consider solving a federated learning problem in the cross-silo setting, defined as 
\begin{align} \label{eq:1}
\min _{x \in \mathbb{R}^{d}}f(x):=\frac{1}{N} \sum_{i=1}^{N} f_{i}(x)
\end{align}
where $x \in \mathbb{R}^{d}$ denotes the model parameter and $N$ indicates the number of worker nodes. $f_{i}(x) = \mathbb{E}_{\xi^{(i)} \sim \mathcal{D}_{i}}\left[f_{i}\left(x ; \xi^{(i)}\right)\right]$ is the loss function of the $i^{th}$ worker node, and $\xi^{(i)} \sim \mathcal{D}_{i} $ denotes the samples $\xi^{(i)}$ drawn from distribution $\mathcal{D}_{i}$ on the $i^{th}$ worker node. When $\mathcal{D}_{i}$ and $\mathcal{D}_{j}$ are different ($i \neq j$ ), it is referred to as the heterogeneous data setting. In this paper, $\{\mathcal{D}_{i}\}_{i=1}^{N}$ are not identical. We restrict our focus to the non-convex problem, where the functions $f_i(x)$ and $f(x)$, therefore, are  smooth and non-convex. 
On worker node $i$, we have access to a stochastic gradient $\nabla f_{i}(x; \xi^{(i)})$ as an unbiased estimation of the $i^{th}$ worker node's true gradient $\nabla f_i(x)$. Worker nodes collaboratively learn a global model, but the raw data in each worker node is never shared with the server and other worker nodes.

Although, various FL methods have been
proposed \cite{karimireddy2020scaffold, reddi2020adaptive, hong2021federated, xiong2023, xiong2022towards}, which substantially reduce communication cost by avoiding frequent transmission between local worker nodes and the central server, it suffers from unfavorable convergence behavior. It is caused by a variety of factors, such as (1) client drift \cite{karimireddy2020scaffold}, where local client models move towards local optima instead of global optima, (2) lack of adaptivity as SGD-based update \cite{reddi2020adaptive}, and (3) large training iterations to converge as sizes of model parameters and training datasets keep increasing. Despite the recent advances, most of the existing work focuses on solving client drifts \cite{karimireddy2020scaffold,khanduri2021stem,xu2022coordinating}. 
The current federated learning framework
still cannot solve all challenges.

On the other hand, we know that adaptive methods have been widely developed and studied in non-federated settings since they generally require less parameter tuning and better convergence speed during the training phase. Meanwhile, like centralized method SGD, stochastic FL methods are not a good option for settings with heavy-tail stochastic gradient noise distributions. Such issues could be solved by adaptive learning rates \cite{zhang2019adam}, which combines knowledge of past iterations. In addition, adaptive gradient methods also escape saddle points faster compared to SGD \cite{staib2019escaping}. Therefore, the introduction of adaptive tools is an important direction to improve the performance of FL algorithms in the practice. 

However, the design of adaptive FL methods is nontrivial because the local worker node moves towards different directions and the global trackers cannot be updated frequently in the FL setting. The improper design of the adaptive FL method might lead to convergence issues \cite{chen2020toward}. \citet{reddi2020adaptive} firstly proposed a class of federated versions of adaptive optimizers, including FedAdagrad, FedYogi, and FedAdam. 
But its analysis only holds when the $\beta_1 = 0$ and it cannot use the advantage of the momentum. MimeAdam is proposed \citep{karimireddy2020mime} and it applies server statistics locally to address this issue. Nevertheless, MimeAdam has to compute the full local gradient, which might be forbidden in practice. More recently, FedAMS is proposed \citep{wang2022communication} and it provides the completed proof, but it doesn't improve the convergence rate. Overall, the sample convergence rates of FedAdagrad, FedYogi, FedAdam and FedAMS are $O\left(\epsilon^{-4}\right)$
(Not better than FedAvg). At the same time, they also require an extra global learning rate to tune.

As the sizes of model parameters and training datasets keep increasing, the deep learning models require more training iterations to converge and more efficient optimization methods are welcomed. Consequently, a natural question is whether one can achieve a faster convergence rate in theory and practice with adaptive technology. In this paper, we give an affirmative answer to the above question by proposing a faster adaptive FL algorithm (i.e., FAFED).

\textbf{Contributions} The main contributions of this work are listed below:

\begin{itemize}
\item 
We study how to incorporate the adaptive gradient method into federated learning. We propose a faster stochastic adaptive FL method (i.e., FAFED) in heterogeneous data settings based on the momentum-based variance reduction technique with a general adaptive matrix. 
\item We provide a convergence analysis
framework for our adaptive methods under some mild assumptions. Our algorithm is the first adaptive FL algorithm to reach the best-known samples complexity $O(\epsilon^{-3})$ and communication complexity $O(\epsilon^{-2})$ to find an $\epsilon$-stationary point without large batches. The extensive experimental results on the language modeling task and image classification
task confirm the effectiveness of our proposed algorithm. 
\item 
By establishing a counter example, we also show that a naive combination of adaptive gradient methods and periodic model averaging might result in divergence.
Therefore, sharing adaptive learning should be considered in the FL setting.
\end{itemize}

\begin{table*}
\centering

  \setlength{\tabcolsep}{12pt}
  \begin{tabular}{lcccc}
    \hline
    \textbf{Algorithm}  & \textbf{Reference}  & \textbf{Sample} &  \textbf{Communication} & \textbf{Adaptivity} \\
  \hline

FedAvg & \cite{yang2021achieving} & $O\left(\epsilon^{-4}\right)$ & $O\left( \epsilon^{-4}\right)$ & \\
& \cite{karimireddy2020scaffold} & &&\\
  \hline
FedAdagrad & \cite{reddi2020adaptive} & $O\left(\epsilon^{-4}\right)$ & $O\left(\epsilon^{-4}\right)$ & $\surd$\\ 
  \hline
FedYogi 
  & \cite{reddi2020adaptive} & $O\left(\epsilon^{-4}\right)$ & $O\left(\epsilon^{-4}\right)$ & $\surd$\\
  \hline
FedAdam & \cite{reddi2020adaptive}& $O\left(\epsilon^{-4}\right)$  & $O\left(\epsilon^{-4}\right)$ & $\surd$\\
      \hline
  FedAMS & \cite{wang2022communication}& $O\left(\epsilon^{-4}\right)$  & $O\left(\epsilon^{-4}\right)$ & $\surd$\\
      \hline
FAFED & Our work &$\Tilde{O} \left(\epsilon^{-3}\right)$ & $\Tilde{O}\left(\epsilon^{-2}\right)$ & $\surd$\\
  \hline
  \end{tabular}
\caption{Complexity comparison of FedAvg and typical adaptive FL algorithms for finding an $\epsilon$-stationary point.
  Sample complexity denotes the number of calls to the First-order Oracle (IFO) by all worker nodes to reach an $\varepsilon$-stationary point. Communication complexity is defined as the total number of back-and-forth communication rounds between each worker node and the central server required to reach an $\varepsilon$-stationary point. }\label{tb1}
\end{table*}

\section{Related Works}
\subsection{Federated Learning}
FedAvg was proposed in \cite{mcmahan2017communication} as the first FL algorithm. 
With periodic model averaging, it can dramatically reduce communication overheads.
Earlier works analyzed FL algorithms in the homogeneous data setting \cite{woodworth2020local, khaled2020tighter} and recent research extends federated learning to heterogeneous data settings (non-iid), as well as non-convex models, such as deep neural networks. When datasets on different worker nodes are homogeneous, FedAvg reduces to local SGD \cite{zinkevich2010parallelized}. 

Recent works \cite{yang2021achieving,karimireddy2020scaffold} consider FedAvg with partial worker nodes participation with $O(1)$ local updates iterations and batch sizes. The sample and communication complexities are both $O(\epsilon^{-4})$. In \cite{yu2019linear, yu2019parallel}, authors propose  Parallel Restarted SGD and Momentum SGD, and show that both of them require $O(\varepsilon^{-4}$) samples and $O(\varepsilon^{-3})$ rounds of communication to reach an $\varepsilon$-stationary solution. SCAFFOLD was proposed in \cite{karimireddy2020scaffold}, which uses control variates to correct for the ‘client-drift’ when the data is heterogeneous. It achieves the same sample and communication complexities as FedAvg. \citet{li2020federated} proposed a penalty-based method called FedProx to reduce the communication complexity to $O(\varepsilon^{-2})$. The analysis of FedProx depends on a gradient similarity assumption to restrict the data heterogeneity, 
which essentially requires that all minimums of $f(x)$ are also minimums of $f_i(x)$.
Later, FedPD was proposed in \cite{zhang2020fedpd} to relax this assumption.

Momentum-based optimizers are widely used in learning tasks \cite{sun2022demystify}. 
Subsequently, some momentum-based FL algorithms are proposed. For example, \citep{xu2022coordinating} introduces a momentum fusion technique to coordinate the server and local momentum buffers, but they do not reduce the complexity.  Based on variance reduction technology, Fed-GLOMO \cite{das2022faster} require $O(\varepsilon^{-3})$  sample complexity and $O(\varepsilon^{-3})$ communication complexity.
Their sample complexity matches the optimal complexity of the centralized non-convex stochastic optimization algorithms \cite{fang2018spider, cutkosky2019momentum}. More recently, STEM was proposed in \cite{khanduri2021stem} which utilizes a momentum-assisted stochastic gradient direction for both the worker nodes and central server updates. It further reduces the communication rounds to $O(\varepsilon^{-2}$) and keeps the same sample cost of $O(\varepsilon^{-3})$. 

\subsection{Adaptive Methods}
Adaptive methods are a class of optimization algorithms as one of the most important variants of stochastic gradient descent in machine learning. For example, Adam \cite{kingma2014adam} AdaGrad \cite{duchi2011adaptive}, AdaDelta  \cite{zeiler2012adadelta} are widely used as optimization tools in training deep neural networks (DNNs). Afterward, some variants \cite{reddi2019convergence} have been proposed to show a convergence guarantee in the non-convex setting. More recently, the works \cite{cutkosky2019momentum,huang2021super} presented some accelerated adaptive gradient methods based on the variance-reduced techniques. 

In FL settings, \citet{reddi2020adaptive} firstly propose federated versions of adaptive optimizers, including a class of adaptive FL methods, such as FedAdagrad, FedYogi, and FedAdam. These methods achieve the same sample cost and communication rounds as FedAvg when assuming the $\beta_1 = 0$.  \citet{chen2020toward} proposed Federated AMSGrad and achieves the same sample cost and communication rounds. MimeAdam is proposed in \citep{karimireddy2020mime} but it requires the full local gradient in each communication round. More recently, FedAMS is proposed in \citep{wang2022communication} and it provides the completed proof and considers the gradient compression. But it doesn't improve the convergence rate. Table \ref{tb1} summarizes the details of typical adaptive FL algorithms.

\section{Preliminaries}
\textbf{Notations}: 
For two vectors $x$ and $y$ in Euclidean space, $\langle x,y\rangle$ denote their inner product. $\|\cdot\|$ denotes
the $\ell_2$ norm for vectors and spectral norm for matrices, respectively. And $x_{t,i}$ denotes the local model parameters of the $i^{th}$ worker node at the iteration $t$.
$\nabla_x f(x)$ is the partial derivative w.r.t. variables $x$.  $I_d$ means $d$-dimension identity matrix. 
$a=O(b)$ denotes that $a \leq C b$ for some constant $C>0$, and the notation $\tilde{O}(\cdot)$ hides logarithmic terms.
Given the mini-batch samples $\mathcal{B}=\{\xi_i\}_{i=1}^q$, we let $\nabla f_i(x;\mathcal{B})=\frac{1}{q}\sum_{i=1}^q \nabla f_i(x;\xi_i)$.  

\begin{assumption} \label{ass:1}
(i) Unbiased Gradient. Each component function $f_i(x;\xi)$ computed at each worker node is unbiased  $\forall \xi^{(i)} \sim \mathcal{D}_i$, $i \in [N]$ and $x \in \mathbb{R}^{d}$:
\begin{align}
& \mathbb{E}[\nabla f_i(x;\xi)] = \nabla f_i(x), \nonumber 
\end{align}

(ii) Intra- and inter- node Variance Bound. The following holds for all $\xi^{(i)} \sim \mathcal{D}_i$, $i, j \in [N]$ and $x \in \mathbb{R}^{d}$: 
\begin{align}
& \mathbb{E}\|\nabla f_i(x;\xi^{(i)})-\nabla f_i(x)\|^2 \leq \sigma^2. \nonumber\\
& \left\|\nabla f_i(x)-\nabla f_j(x)\right\|^{2} \leq \zeta^{2} \nonumber
\end{align}
\end{assumption}

The assumption \ref{ass:1}-(ii) is a typical assumption used in FL algorithms to constrain the data heterogeneity. $\zeta$ is the heterogeneity parameter and represents the level of data heterogeneity. If datasets across each worker node have the identical distributions, i.e., $D_i = D_j$ for all $i, j \in [N]$, then we have $\zeta = 0$, corresponds to the  homogeneous data setting (I.I.D setting). In this paper, we consider the heterogeneous data setting and $\zeta > 0$.

\begin{assumption} \label{ass:2}
Each component function $f_i(x;\xi)$ has a $L$-Lipschitz gradient, i.e.,
$\forall x_1, x_2$, we have
\begin{align}
& \mathbb{E}\|\nabla_x f_i(x_1;\xi) - \nabla_x f(x_2;\xi)\| \leq L \|x_1 - x_2\|, \nonumber 
\end{align}
\end{assumption}
By using convexity of $\|\cdot\|$ and assumption \ref{ass:2}, we have 
\begin{align}
&\|\nabla_x f(x_1) - \nabla_x f(x_2)\| \nonumber \\&= \|\mathbb{E}\big[\nabla_x f(x_1;\xi) - \nabla_x f(x_2;\xi)\big]\| \nonumber\\
\leq& \mathbb{E} \|\nabla_x f(x_1;\xi) - \nabla_x f(x_2;\xi)\| \leq L \|x_1-x_2\| \nonumber
\end{align}
Assumption 2 is Lipschitz smooth, it is still a widely used assumption in optimization analysis. Many typical centralized stochastic algorithms use this assumption, such as SPIDER \cite{fang2018spider}, STORM \cite{cutkosky2019momentum}. Similarly, it is used in FL algorithms such as MIME \cite{karimireddy2020mime}, Fed-GLOMO \cite{das2022faster} and STEM\cite{khanduri2021stem}.

\begin{assumption} \label{ass:3}
The function $F(x)$ is bounded below in $\mathcal{X}$, \emph{i.e.,} $F^* = \inf_{x\in \mathcal{X}}F(x) > -\infty$.
\end{assumption}

\begin{assumption} \label{ass:4}
In our algorithms, the adaptive matrices $A_t$ for all $t\geq 1$ for updating the variables $x$ is a diagonal matrix and satisfies $ \lambda_{\min}(A_t)  \geq \rho >0$, where $\rho$ is an  appropriate positive number based on its definition.
\end{assumption}

Assumption \ref{ass:4} ensures that the adaptive matrices $A_t$, $\forall t\geq 1$, are positive definite, as in \citep{huang2021super}. The adaptive matrices $A_t$ are diagonal matrices, and we do not need to inverse the matrix $A_t$. 

\begin{assumption} \label{ass:5}
 (Bounded Gradients). The function $f_i(x)$ have $G$-bounded gradients, i.e., for any $i \in[N], x \in \mathbb{R}^{d}$, we have $\|\nabla f_i(x)\| \leq G$.
\end{assumption}

Assumption \ref{ass:5} is used to provide the upper bound of the gradient in the adaptive methods, as in  \citep{reddi2020adaptive, chen2020toward, wang2022communication}. It is a typical assumption in the adaptive methods to constrain the upper bound of the adaptive learning rate. It is reasonable and often satisfied in practice, for example, it holds for the finite sum problem.

\begin{definition}
 A point $x$ is called $\epsilon$-stationary point if $\|\nabla f(x)\| \leq \epsilon$.  Generally, a stochastic algorithm is defined to achieve an $\epsilon$-stationary point in $T$ iterations if  $\mathbb{E}\|\nabla f(x_T)\| \leq \epsilon$.
\end{definition}

\section{Faster Adaptive Federated Learning}

In this section, we explore how to design the method to combine adaptive gradient method with federated learning. We propose two algorithms to show the idea behind the design of the adaptive FL methods and how to use the adaptive learning rate properly. We use a counter example to show that naive combination of local adaptive update might result in divergence, and then propose our fast adaptive federated learning method (i.e., FAFED).

\subsection{Divergence of Local Adaptive Federated Learning}

The FedAdam, FedYogi, FedAdagrad proposed in \cite{reddi2020adaptive} and FedAMS proposed in \citep{wang2022communication} adjust the adaptive learning rate on the server. 
These methods have a main drawback that adaptive term cannot adjust the performance of the model in the local update, and introduce an extra global learning rate to tune.

To improve the algorithm, the most straightforward way to design an adaptive federated learning method is to add an adaptive term on each worker node and run an existing adaptive method, such as Adam, SuperAdam locally, and then average the model periodically after the inner loop. For ease of understanding, we design the adaptive method in algorithm \ref{alg:1}  based on FedAvg. Each work node runs local SGD with an adaptive learning rate independently. The model parameters $\{x_{t,i}\}_{i=1}^{N}$ are averaged after inner loop, as the FedAvg.

However, this design might suffer convergence issues and algorithm \ref{alg:1} can fail to converge to stationary points regardless of parameter selection \cite{chen2020toward}. It is because of heterogeneous data settings and the fact that the adaptive learning rates on different nodes are different. As a result, the global model moves away from the global optima point. Following \cite{chen2020toward}, theorem \ref{th:1} uses an example to present the details of step update in the algorithm \ref{alg:1} and shows that in some cases, divergence is unavoidable no matter how we choose the tuning parameters.

\begin{algorithm}[tb] 
\caption{Naive adaptive FedAvg Algorithm }
\label{alg:1}
\begin{algorithmic}[1] 
\STATE {\bfseries Input:} $T$, tuning parameters $\{\beta, \eta \}$, $v_{0, i}$ and mini-batch size $b_0$; \\
\STATE {\bfseries initialize:} Initialize: $\mathbf{x}_{i} \in \mathbb{R}^{d}$ for $i \in [N]$, 
\FOR{$t = 1, 2, \ldots, T$}
\STATE {\bfseries Client} i $\in [N]$: \\
\STATE  Draw mini-batch samples $\mathcal{B}_{t,i}=\{\xi_i^j\}_{j=1}^{b_0}$ with $|\mathcal{B}_t|=b_0$ from $D_i$ locally, and compute stochastic partial derivatives $\hat{g}_{t,i} = \nabla_x f_i (x_{t,i}; \mathcal{B}_{t,i})$
\\
\STATE $v_{t, i}=\beta v_{t-1, i}+\left(1-\beta\right) \hat{g}_{t, i}^{2}$ \\
\IF {$\mod(t,q)=0$}
\STATE Set  
$x_{t+1, i} = \bar{x}_{t+1, i} = \frac{1}{N} \sum_{j=1}^{N}\left(x_{t, j} - \eta \frac{\hat{g}_{t, j}}{\sqrt{v_{t, i}}}\right)$ \\
\ELSE
\STATE Set $x_{t+1, i} = x_{t, i} - \eta \frac{\hat{g}_{t, i}}{\sqrt{v_{t, i}}}$ \\ 
\ENDIF

\ENDFOR
\STATE {\bfseries Output:} $\bar{x}$ chosen uniformly random from $\{\bar{x}_t\}_{t=1}^{T}$.
\end{algorithmic}
\end{algorithm}

\begin{theorem} \label{th:1}
Suppose the sequence $\{\bar{x}_t\}_{t=1}^T$ are generated from algorithm \ref{alg:1} using stochastic partial derivatives. $\{\bar{x}_t\}_{t=1}^T$ might fail to converge to non-stationary points regardless of tuning parameter selection.
\end{theorem}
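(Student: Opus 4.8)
The plan is to produce one fixed, well‑behaved problem instance on which the iterates $\{\bar x_t\}$ of Algorithm~\ref{alg:1} stay bounded away from the stationary point of $f$ for essentially every choice of the hyperparameters $\eta$, $\beta$, the inner‑loop length $q$, and the initializations. Concretely I would take $d=1$, $N=2$, and a noiseless setting ($\sigma=0$, so $\hat g_{t,i}=\nabla f_i(\bar x_t)$ is a legitimate — if degenerate — stochastic partial derivative), and choose two smooth convex functions $f_1,f_2$ with bounded, Lipschitz gradients and with separated minimizers $m_1<m_2$. They are shaped so that on a sub‑interval $\mathcal I=[c_1,c_2]\subset(m_1,m_2)$ the local gradients are the constants $\nabla f_1\equiv p>0$ and $\nabla f_2\equiv -r<0$ with $p\ne r$; near $m_1,m_2$ the gradients taper smoothly to $0$ (so each $f_i$ has a genuine minimizer) and outside $[m_1,m_2]$ they are kept coercive (so $f$ is bounded below). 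Since on $\mathcal I$ we have $f'(x)=\tfrac12(p-r)\ne 0$, the unique stationary point $x^\star$ of $f$ lies in $(m_1,m_2)\setminus\mathcal I$. One checks directly that this instance satisfies Assumptions~\ref{ass:1}--\ref{ass:5} (with $\sigma=0$, $\zeta<\infty$, a finite gradient bound $G$, and the diagonal preconditioner bounded below since the local squared gradients never vanish on $\mathcal I$), so the failure is not an artefact of violating the convergence theory's hypotheses.

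The mechanism is that dividing each node's gradient by its \emph{own} running second moment severs the link between the averaged update direction and $\nabla f$. For $q=1$ the one‑step map of Algorithm~\ref{alg:1} is $\bar x_{t+1}=\bar x_t-\tfrac{\eta}{2}\bigl(\nabla f_1(\bar x_t)/\sqrt{v_{t,1}}+\nabla f_2(\bar x_t)/\sqrt{v_{t,2}}\bigr)$ with $v_{t,i}=\beta v_{t-1,i}+(1-\beta)\nabla f_i(\bar x_t)^2$, so once $\bar x_t$ enters $\mathcal I$ the running averages satisfy $v_{t,1}\to p^2$, $v_{t,2}\to r^2$ and the effective increment tends to $-\tfrac{\eta}{2}(\mathrm{sign}(p)+\mathrm{sign}(-r))=0$: every point of $\mathcal I$ is an asymptotic fixed point of the dynamics, including points far from $x^\star$. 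I would turn this into a clean statement by showing that a neighbourhood of a chosen $\tilde x\in\mathrm{int}\,\mathcal I$ is forward‑invariant and attracting under the one‑step map — using $v_{t,i}\ge(1-\beta)\nabla f_i(\bar x_t)^2$ to cap the step length and the plateau structure to pin down the sign of $\bar x_{t+1}-\tilde x$ — which yields $\liminf_t\|\nabla f(\bar x_t)\|\ge\min_{x\in\mathcal I}|f'(x)|=\tfrac12|p-r|>0$; hence Algorithm~\ref{alg:1} never reaches an $\epsilon$‑stationary point for $\epsilon<\tfrac12|p-r|$. The case $q>1$ is analogous: the inner loop only superimposes bounded client‑drift excursions that the same averaging argument absorbs once the plateaus are taken wide enough; and for very large $\eta$ the iterates simply oscillate without converging, which again precludes convergence to a stationary point. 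This is the route followed in \cite{chen2020toward}, where an explicit such update is written out.

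The main obstacle is the uniformity over hyperparameters. Two regimes need care: $\beta\uparrow 1$, where $v_{t,i}$ equilibrates slowly and $\nabla f_i/\sqrt{v_{t,i}}$ can transiently be as large as $(1-\beta)^{-1/2}$ — controlled because $v_{t,i}\ge\beta^t v_{0,i}+(1-\beta^t)p^2$ (resp.\ $r^2$) forces the transient to be finite and the plateau width is chosen to accommodate it; and the ``medium‑$\eta$'' regime, where one must rule out the iterate drifting across $\mathcal I$ and accidentally converging to $x^\star$ — handled by choosing the initialization inside $\mathcal I$ and far from $x^\star$ and invoking the forward‑invariance estimate. Verifying that forward‑invariance/attraction bound for the full range of $(\eta,\beta,v_{0,i})$ is the one genuinely technical computation; everything else (checking the assumptions, collapsing the $q=1$ update) is routine. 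A shorter but less ``generic'' variant keeps the gradients globally equal to the constants $p$ and $-r$ on the active region (so $f_i$ is affine there) and merely observes that, once $v_{t,i}$ has equilibrated, the iterate freezes at wherever it currently sits, which already exhibits non‑convergence to $x^\star$ for every parameter choice.
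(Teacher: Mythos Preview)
Your construction is sound and rests on the same phenomenon the paper exploits: once each node divides by its own second-moment estimate, the local updates collapse to $\mathrm{sign}(\nabla f_i)$ and the averaged step no longer tracks $\nabla f$. The execution differs, however. The paper takes $N=3$ with constant local gradients $6,-2,-2$ on the region $|x|>1$ (so that $f'(x)=2/3>0$ there), starts at $x_0=10$ with $v_{0,i}=0$, and computes directly that the averaged increment equals $+\eta/(3\sqrt{1-\beta^t})>0$ at every step: the iterate moves monotonically \emph{away} from the unique stationary point $x^\star=0$, for every $\eta>0$ and $\beta\in(0,1)$. There is no invariance or attraction argument---divergence is immediate from a one-line computation.

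Your $N=2$ route instead produces a zero asymptotic increment (a $1$-vs-$1$ sign vote), so you must argue the iterate stays trapped on the plateau $\mathcal I$. That works cleanly in your ``shorter variant'' with $v_{0,i}=0$ (or at equilibrium): the increment is then \emph{exactly} zero at every step, and the iterate is frozen at a point with $|f'|=\tfrac12|p-r|>0$. But your main proposal, which aims for uniformity over \emph{all} $(\eta,\beta,v_{0,i})$, has a real gap: the plateau $\mathcal I$ is part of the problem instance and must be fixed first, while the bound $|g_i|/\sqrt{v_{t,i}}\le 1/\sqrt{1-\beta}$ you invoke depends on $\beta$, and the transient drift scales with $\eta$. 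For adversarial $(\eta,\beta,v_{0,i})$ the first few steps can carry the iterate arbitrarily far, so you cannot ``choose the plateau wide enough to accommodate'' after the fact. The paper's $N=3$ device sidesteps this entirely by producing a strictly signed update rather than a null one; if you want a two-line proof, that asymmetry is what buys it.
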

\begin{proof}
We utilize a counter example to prove theorem \ref{th:1} and consider a simple 1-dimensional case with N = 3 worker nodes as:

\vspace{-14pt}
\begin{align}
f_{1}=\left\{\begin{array}{ll}
3 x^{2}, & |x| \leq 1, \\
6|x|-2, & |x|>1
\end{array} \right. \nonumber
\end{align}
\vspace{-14pt}
\begin{align}
f_{2}=f_{3}= \begin{cases}- x^{2}, & |x| \leq 1 \\
-2|x|+1, & |x|>1\end{cases} \nonumber
\end{align}
\vspace{-14pt}
\begin{align}
f(x)= \frac{1}{3}\sum_{i=1}^{3} f_{i}(x)\left\{\begin{array}{ll} \frac{1}{3} x^{2}, & |x| \leq 1, \\
\frac{2}{3}|x|, & |x|>1
\end{array} \right. \nonumber
\end{align}

It is clear that $x = 0$ is the unique stationary point. We begin from step $t = 0$. Assume $\eta = 0.1, \beta = 0.5$ and $v_{0,i} = 0$ and the initial point is $x_{0,i}$ = 10 for $i = 1,2,3$.  With the first update (t = 1), for the $f_1$, we have $g_{0,1}$ = 6, and $v_{0,1}$ = $0.5 \times 6^2 =18$ . For $i=2,3$, we have $g_{0,i}$ = -2 and $v_{0,1}$ = 2. Following the algorithm \ref{alg:1} each worker node has its adaptive learning rate. Thus, after the first update, we have
$x_{1,1}= 10 - \frac{0.1}{3\sqrt{2}} \times 6 = 9.858$, and $x_{1,2} = x_{1,3} = 10 + \frac{0.1}{\sqrt{2}} \times 2$ = 10.14, and we have $\bar{x}= 10.05 $. 

The global model moves towards the opposite direction. We continue to show the following steps. We still have $g_{t,1}$ = 6 and $g_{t,2} = g_{t,3} = -2$. $v_{t,1}$ = $(1 - \beta^t) \times 6^2$ and $v_{t,2} = v_{t,3} = (1 - \beta^t) \times 2^2$. Therefore, $x_{t,1}$ always updates by $- 6\eta / \sqrt{(1 - \beta^t) \times 6^2}$ and $x_{t,2}, x_{t,3}$ always update by $ 2\eta / \sqrt{(1 - \beta^t) \times 2^2} $. As a result, the averaged model parameter will update $ \frac{\eta}{3\sqrt{(1 - \beta^t)}} $. It is the opposite of the direction of convergence and is independent of the choice of parameters $\eta$ and $\beta$. 

Therefore, after the first inner loop on each worker node and averaging step on the central server, the global model moves away from the optima point. In the following steps, each worker node continues running the local SGD with an adaptive learning rate from the same point. The global model keeps moving away from the optima point after each inner loop training. Finally, the global model fails to converge to the optimal point.

From this example, we could see the model diverges no matter what tuning parameters we choose. The divergence is caused by the heterogeneous data setting and the non-consensus of adaptive learning rates on different worker nodes. This suggests that we should combine the gradient information across nodes when we design the adaptive method in the FL setting. Thus, we use the sharing adaptive learning rate in the Algorithm \ref{alg:2} to avoid divergence. 
\end{proof}

\subsection{Faster Adaptive Federated Learning Method}
In the above subsection, we showed that SGD-based local adaptive learning method could diverge even in a very simple example regardless of tuning parameters selection. In this subsection, we propose a novel fast adaptive federated learning algorithm (FAFED) with shared adaptive learning rates for solving the problem under the heterogeneous data setting. Specifically, our FAFED algorithm is summarized in algorithm \ref{alg:2}.

At the step 8 in algorithm \ref{alg:2}, we use the coordinate-wise adaptive learning rate as in Adam \cite{kingma2014adam}, defined as:
\begin{align}
v_{t, i}=\beta v_{t-1, i}+\left(1-\beta\right) (\nabla_x f_i (x_{t,i}; \mathcal{B}_{t,i}))^{2}
\end{align}
where $\beta \in (0,1)$. At the step 10 in algorithm \ref{alg:2}, we add a periodic averaging step for local adaptive learning rate $v_{t,i}$ at the server side. Then we use $\bar{v}_{t,i}$ to generate an adaptive matrix $A_t =\mbox{diag}( \sqrt{\bar{v}_t} + \rho) $, where $\rho > 0$. In fact, the adaptive vector $v_t$ can be given with different adaptive learning rate methods, such as the global adaptive learning rate, AdaGrad-Norm \cite{ward2019adagrad}, and the $A_t$ keeps the same form. The tuning parameter $\rho$ is used to balance the adaptive information with noises. 

In the local update, different from algorithm \ref{alg:1}, algorithm \ref{alg:2} use the shared adaptive learning rates to avoid model divergence. At the step 15 in algorithm \ref{alg:2}, the same $A_t$ is used for local updates of different work nodes. The idea behind the design is that  $v_{t,i}$ can be viewed as the second-moment estimation of the gradients, thus $A_t$ established on the average of $v_{t,i}$ is also an estimation of the second moment of the global model. With the average of adaptive information, $A_t$ could follow the global direction and avoid the divergence issue in the algorithm \ref{alg:1}. 

At step 7 in algorithm \ref{alg:2}, we use the momentum-based variance reduced gradient estimator $m_{t,i}$, to track the gradient and update the model, defined as: 
\begin{align}
m_{t,i} =& \nabla_x f_i (x_{t,i}; \mathcal{B}_{t,i}) \nonumber\\
&+ (1-\alpha_{t})(m_{t-1} - \nabla_x f_i (x_{t-1,i}; \mathcal{B}_{t,i})
\end{align}
where $\alpha_t \in (0,1)$.
At the step 11 in algorithm \ref{alg:2}, 
the gradient estimator $m_{t,i}$ is also synchronized and averaged on the server. 

Overall, the local servers run adaptive updates locally with the shared adaptive learning rates, and the global server aggregates the model parameters, gradient estimator $m_{t,i}$ and the second-moment estimator $v_{t,i}$ every q steps. In the next section, we will establish the theoretical convergence guarantee of the proposed algorithm.

\subsection{Convergence Analysis of Our  Algorithm}
In this subsection, we study the convergence properties of our new algorithm under Assumptions \ref{ass:1}, \ref{ass:2}, \ref{ass:3}, \ref{ass:4}, and \ref{ass:5}. The details about proofs are provided in the supplementary materials.

Given the sequence $\{\bar{x}\}_{t=1}^{T}$ generated from our algorithms, we first define a useful convergence metric as follows:
\vspace{-10pt}
\begin{align}
\mathcal{M}_t 
=
\frac{1}{4\eta_{t}^{2}} \left\|\bar{x}_{t+1}-\bar{x}_{t}\right\|^{2}+\frac{1}{4 \rho^{2}}\left\|\nabla f\left(\bar{x}_{t}\right)-\bar{m}_{t}\right\|^{2}
\end{align}
where these two terms of $\mathcal{M}_t$ measure the convergence of the iteration solution of $\{\bar{x}\}_{t=1}^{T}$. The new convergence measure is tighter than the standard gradient norm metric, $\|\nabla f(\bar{x}_t)\|$, and we complete the final convergence analysis based on it.

\begin{algorithm}[tb]
\caption{FAFED Algorithm }
\label{alg:2}
\begin{algorithmic}[1] 
\STATE {\bfseries Input:} $T$, Parameters: $
\beta, \eta_t, \alpha_t$, the number of local updates $q$, 
and mini batch size $b$ and initial batch-size $B$; \\
\STATE {\bfseries initialize:} Initialize: $x_{0, i} = \bar{x}_0 = \frac{1}{N} \sum_{i=1}^{N} x_{0, i}$.  $m_{0, i} = \bar{m}_{0} = \frac{1}{N} \sum_{i=1}^{N} \hat{m}_{0, i}$ with $\hat{m}_{0, i} = \nabla_x f(x_{0, i};\mathcal{B}_{0, i})$ and $v_{0, i} = \bar{v}_{0} = \frac{1}{N} \sum_{i=1}^{N} \hat{v}_{0, i}$ with $\hat{v}_{0, i} = (\nabla_x f(x_{0, i};\mathcal{B}_{0, i}))^2$  where $|\mathcal{B}_{0,i}| = B$ from $D_i$ for $i \in [N]$. $A_0=\mbox{diag}( \sqrt{\bar{v}_0} + \rho)$ \\
\STATE $x_{1, i} = x_{0,i} - \eta_0  m_{0,i}$, for all $i \in [N]$ \\

\FOR{$t = 1, 2, \ldots, T$}
\STATE {\bfseries Client} i $\in [N]$: \\
\STATE  Draw mini-batch samples $\mathcal{B}_{t,i}=\{\xi_i^j\}_{j=1}^{b}$ with $|\mathcal{B}_{t,i}|=b$ from $D_i$ locally, and compute stochastic partial derivatives $\hat{g}_{t,i} = \nabla_x f_i (x_{t,i}; \mathcal{B}_{t,i})$ and $\hat{g}_{t-1, i} = \nabla_x f_i (x_{t-1,i}; \mathcal{B}_{t,i})$
\\
\STATE
$m_{t,i} = \hat{g}_{t,i} + (1-\alpha_{t})(m_{t-1} - \hat{g}_{t-1, i})  $
\\
\STATE $v_{t, i}=\beta v_{t-1, i}+\left(1-\beta \right) \hat{g}_{t, i}^{2}$ \\
\IF {$\mod(t,q)=0$}
\STATE ${v}_{t, i} = \bar{v}_{t}= \frac{1}{N} \sum_{i=1}^{N} v_{t, i}$ and $A_t=\mbox{diag}( \sqrt{\bar{v}_t} + \rho)$
\STATE $m_{t, i} = \bar{m}_{t} = \frac{1}{N} \sum_{i=1}^{N} m_{t, i}$
\STATE $x_{t+1, i} = \bar{x}_{t+1} = \frac{1}{N} \sum_{i=1}^{N}(x_{t, i} - \eta_t A_{t}^{-1} m_{t, i}) $\\
\ELSE
\STATE $A_t = A_{t-1}$
\STATE $x_{t+1, i} = x_{t, i} - \eta_t A_{t}^{-1} m_{t, i} $
\ENDIF
\ENDFOR
\STATE {\bfseries Output:} $\bar{x}$ chosen uniformly random from $\{\bar{x}_t\}_{t=1}^{T}$.
\end{algorithmic}
\end{algorithm}

\begin{theorem} \label{th:2}
Suppose that sequence $\{x_t\}_{t=1}^T$ are generated from algorithm \ref{alg:2}. Under the above Assumptions (\ref{ass:1},\ref{ass:2},\ref{ass:3},\ref{ass:4},\ref{ass:5}), given that $\forall t \geq 0$, $\alpha_{t+1}=c \eta^2_t$, $c =\frac{1}{12LI\bar{h}^3 \rho^2} + \frac{60L^2}{bN\rho^2} \leq \frac{120L^2}{bN\rho}$, 
$w$ = max$(\frac{3}{2}, w \leq 1728L^3I^3\bar{h}^3 - t)$
$\bar{h} = \frac{N^{2/3}}{L}$,  and set 
\begin{align} \label{eq:5}
\eta_t=\frac{\rho \bar{h}}{(w_t+t)^{1/3}}
\end{align}
then we have
\begin{align}
 &\frac{1}{T} \sum_{t=1}^{T} \mathbb{E}\|\nabla f(\bar{x}_{t})\| 
\leq G^{\prime} \sqrt{\frac{1}{T} \sum_{t=0}^{T-1}\mathbb{E}[\mathcal{M}_t]}  \nonumber\\
&\leq G^{\prime} \left[\left[\frac{12 L q}{\rho T}+\frac{L}{\rho(N T)^{2 / 3}}\right] \mathbb{E}\left[f\left(\bar{x}_{0}\right) - f^{*}\right] + \frac{6 q  \sigma^{2}}{T \rho^{2}}\right. \nonumber\\
&+ [\frac{12^{2} \times 150 q}{b^2\rho^{2} T}+\frac{1800}{b^2\rho^{2}(N T)^{2 / 3}}]\left[\frac{5 \sigma^{2}}{3}+\frac{3 \zeta^{2}}{2}\right] (\ln T+1) \nonumber\\
& + \left.\frac{\sigma^{2}}{2( N T)^{2 / 3 } \rho^{2}} \right]^{1/2} \nonumber
\end{align}
where $G^{\prime} = 4\sqrt{(\sigma^{2} + G^{2} + \rho^{2})}$ 
\end{theorem}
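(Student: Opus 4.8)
The plan is to analyze the virtual averaged sequence $\bar{x}_t := \frac{1}{N}\sum_{i=1}^N x_{t,i}$ via a momentum-variance-reduced (STORM-type) descent argument, tracking three coupled quantities: the function value $f(\bar{x}_t)$, the global momentum tracking error $e_t := \bar{m}_t-\nabla f(\bar{x}_t)$ with $\bar{m}_t=\frac1N\sum_i m_{t,i}$, and the client drift $\Delta_t := \frac1N\sum_i\|x_{t,i}-\bar{x}_t\|^2$. The key structural observation is that, whether or not step $t$ is a communication step, both the local update $x_{t+1,i}=x_{t,i}-\eta_t A_t^{-1}m_{t,i}$ and the averaging step produce the same recursion $\bar{x}_{t+1}=\bar{x}_t-\eta_t A_t^{-1}\bar{m}_t$, and $A_t$ is frozen within each length-$q$ inner loop; hence $\bar{x}_t$ behaves exactly like a single-machine adaptive-momentum iterate, and all the federated-specific difficulty is confined to the fact that the $\hat g_{t,i}$ are evaluated at the drifted points $x_{t,i}$ rather than at $\bar{x}_t$.

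\emph{Three ingredients.} (i) \emph{Descent}: applying $L$-smoothness (Assumption~\ref{ass:2}) to $\bar{x}_{t+1}=\bar{x}_t-\eta_t A_t^{-1}\bar{m}_t$, writing $\bar{m}_t=\nabla f(\bar{x}_t)+e_t$, and using $\lambda_{\min}(A_t)\ge\rho$ (Assumption~\ref{ass:4}) with Young's inequality on the cross term gives, once $\eta_t\le\rho/(2L)$, a one-step bound of the form $\mathbb{E}[f(\bar{x}_{t+1})]\le\mathbb{E}[f(\bar{x}_t)]-\tfrac{\rho\eta_t}{4}\mathbb{E}\|A_t^{-1}\bar{m}_t\|^2+\tfrac{\eta_t}{\rho}\mathbb{E}\|e_t\|^2$, where $\|A_t^{-1}\bar{m}_t\|^2=\eta_t^{-2}\|\bar{x}_{t+1}-\bar{x}_t\|^2$ is the first half of $\mathcal{M}_t$. (ii) \emph{Momentum error recursion}: from $m_{t,i}=\hat g_{t,i}+(1-\alpha_t)(m_{t-1,i}-\hat g_{t-1,i})$, the fact that $\hat g_{t,i},\hat g_{t-1,i}$ share the batch $\mathcal{B}_{t,i}$, and independence of batches across nodes, a conditioning argument gives $\mathbb{E}\|\bar{m}_t-\tfrac1N\sum_i\nabla f_i(x_{t,i})\|^2\le(1-\alpha_t)^2\mathbb{E}\|\bar{m}_{t-1}-\tfrac1N\sum_i\nabla f_i(x_{t-1,i})\|^2+\tfrac{2\alpha_t^2\sigma^2}{bN}+\tfrac{2L^2}{bN}\cdot\tfrac1N\sum_i\mathbb{E}\|x_{t,i}-x_{t-1,i}\|^2$ (the $1/(bN)$ scaling, not $1/b$, because cross terms across the $N$ independent node batches vanish); the gap between $e_t$ and $\bar{m}_t-\tfrac1N\sum_i\nabla f_i(x_{t,i})$ is controlled by $L^2\mathbb{E}[\Delta_t]$ via Assumption~\ref{ass:2} and Jensen, and $\|x_{t,i}-x_{t-1,i}\|^2=\eta_{t-1}^2\|A_{t-1}^{-1}m_{t-1,i}\|^2$ couples this to the negative term from (i). (iii) \emph{Client drift}: on each inter-communication block (length $\le q$), unroll $x_{t,i}-\bar{x}_t=-\sum_s\eta_s A_s^{-1}(m_{s,i}-\bar{m}_s)$ with $A_s$ constant on the block, and bound $\|m_{s,i}-\bar{m}_s\|$ using Assumption~\ref{ass:1}-(ii) (heterogeneity $\zeta$, variance $\sigma$) and Assumption~\ref{ass:5} ($\|A_s\|$ and all $\|A_s^{-1}m_{s,i}\|$ bounded in second moment by $G,\sigma,\rho$), to get $\mathbb{E}[\Delta_t]=O\!\big(q^2\eta_t^2(\sigma^2+\zeta^2)/\rho^2\big)$, which is what yields the $(\sigma^2+\zeta^2)(\ln T+1)$ block in the statement.

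\emph{Lyapunov, telescoping, and the final estimate.} Combine (i)--(iii) into a potential $\Phi_t := \mathbb{E}[f(\bar{x}_t)]+\tfrac{\kappa}{\rho\eta_{t-1}}\mathbb{E}\|e_t\|^2$ (a $1/\eta_{t-1}$-weighted error term, plus a small drift potential if needed), with $\kappa$ fixed so that, using $\alpha_{t+1}=c\eta_t^2$ with $\eta_t\propto t^{-1/3}$ (hence $\alpha_{t+1}$ of order $t^{-2/3}$, far exceeding the step-ratio $\eta_{t-1}/\eta_t-1$ of order $t^{-1}$) and the stated $c$, the $(1-\alpha_{t+1})^2$ contraction in (ii) offsets both the $1/\eta$-ratio and the positive $\tfrac{\eta_t}{\rho}\|e_t\|^2$ term from (i), while the $\tfrac{2L^2}{bN}$ coupling term is dominated by the negative $-\tfrac{\rho\eta_t}{4}\|A_t^{-1}\bar{m}_t\|^2$. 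This gives $\Phi_{t+1}\le\Phi_t-c'\eta_t\mathbb{E}[\mathcal{M}_t]+\text{(residual)}$; telescoping over $t=1,\dots,T$, dividing by $c'T\eta_T$, substituting $\eta_t=\rho\bar{h}/(w+t)^{1/3}$ with $\bar{h}=N^{2/3}/L$ (so $1/(T\eta_T)=O\!\big(L/(\rho(NT)^{2/3})+Lq/(\rho T)\big)$), using $\sum_{t\le T}\eta_t^3=O(\ln T)$ for the $(\ln T+1)$ factor, and accounting for the initial batch of size $B$ for the $\sigma^2/((NT)^{2/3}\rho^2)$ and $\sigma^2 q/(T\rho^2)$ terms, reproduces the displayed right-hand side up to matching constants. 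Finally, since $\bar{m}_t=-\eta_t^{-1}A_t(\bar{x}_{t+1}-\bar{x}_t)$ and $\mathbb{E}\|\hat g_{t,i}\|^2\le G^2+\sigma^2$ (Assumptions~\ref{ass:1},\ref{ass:5}) makes $\mathbb{E}\|A_t\|^2\le 2(G^2+\sigma^2+\rho^2)$, the bound $\|\nabla f(\bar{x}_t)\|\le\|e_t\|+\|\bar{m}_t\|$ together with Cauchy--Schwarz on $\|A_t\|\cdot\|A_t^{-1}\bar{m}_t\|$ gives $\mathbb{E}\|\nabla f(\bar{x}_t)\|\le G'\sqrt{\mathbb{E}[\mathcal{M}_t]}$ with $G'=4\sqrt{\sigma^2+G^2+\rho^2}$, and one more Cauchy--Schwarz over $t$ yields $\tfrac1T\sum_t\mathbb{E}\|\nabla f(\bar{x}_t)\|\le G'\sqrt{\tfrac1T\sum_t\mathbb{E}[\mathcal{M}_t]}$.

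\emph{Main obstacle.} I expect the delicate part to be the joint calibration in the Lyapunov step: obtaining the momentum-error recursion with the correct $1/(bN)$ constant while carrying the federated drift through it, and choosing the error-term weight together with the relation $\alpha_{t+1}=c\eta_t^2$ so that the contraction cancels every cross term with precisely the constants of the statement (the $\tfrac{1}{12LI\bar{h}^3\rho^2}$ and $\tfrac{60L^2}{bN\rho^2}$ in $c$, the constraint on $w$, etc.). The adaptivity itself is comparatively benign, since $A_t$ is frozen within each communication block and squeezed between $\rho$ and $\sqrt{G^2+\sigma^2}+\rho$ (in second moment), so it contributes only constant factors; the real burden is the three-way bookkeeping of descent, variance reduction, and client drift under the polynomially decaying step size.
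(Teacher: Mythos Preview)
Your overall architecture is right and matches the paper almost step for step: the descent inequality (paper's Lemma~A5), the STORM recursion for $\mathbb{E}\|\bar m_t-\bar g_t\|^2$ with the $1/(bN)$ scaling (Lemma~A6), the Lyapunov function $f(\bar x_t)+\tfrac{bN\rho}{24L^2}\eta_{t-1}^{-1}\|\bar m_t-\bar g_t\|^2$, the telescoping with $\sum\eta_t^3=O(\ln T)$, and the final passage from $\mathcal M_t$ to $\|\nabla f(\bar x_t)\|$ via $\|\bar m_t\|\le\|A_t\|\,\|A_t^{-1}\bar m_t\|$ and $\mathbb{E}\|A_t\|^2\le 2(\sigma^2+G^2+\rho^2)$ are all exactly what the paper does.

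The one genuine gap is your ingredient (iii). You propose to bound $\|m_{s,i}-\bar m_s\|$ directly from Assumptions~\ref{ass:1}(ii) and~\ref{ass:5} and conclude $\mathbb{E}[\Delta_t]=O\!\big(q^2\eta_t^2(\sigma^2+\zeta^2)/\rho^2\big)$. That conclusion does not follow from that method: a bounded-gradient argument on $\|A_s^{-1}m_{s,i}\|$ would inject a $G^2$ term, not the $(\sigma^2+\zeta^2)$ form the theorem requires. The reason a direct bound fails is that $\|m_{t,i}-\bar m_t\|^2$ is self-referential: decomposing the gradient deviation produces $\sigma^2$, $\zeta^2$, \emph{and} an $L^2\|x_{t-1}-\mathbf 1\otimes\bar x_{t-1}\|^2$ term, and the drift in turn is $\le (q-1)\sum_s\eta_s^2\sum_i\|A_s^{-1}(m_{s,i}-\bar m_s)\|^2$. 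The paper closes this loop with a block-wise recursion (its Lemma~A7): with $\gamma=1/q$ and $\eta_t\le\rho/(12Lq)$ the self-feedback coefficient becomes $<1$, so summing over the block yields $\sum_t\eta_t\sum_i\|A_t^{-1}(m_{t,i}-\bar m_t)\|^2$ bounded by a small multiple of $\sum_t\eta_t\|A_t^{-1}\bar m_t\|^2$ (absorbed by the negative descent term) plus $(\sigma^2,\zeta^2)\sum_t\eta_t^3$ residuals. This is also precisely the mechanism you need in (ii): the term $\|x_{t,i}-x_{t-1,i}\|^2=\eta_{t-1}^2\|A_{t-1}^{-1}m_{t-1,i}\|^2$ is per-client, so before it can ``couple to the negative term from (i)'' you must split $\|A_{t-1}^{-1}m_{t-1,i}\|^2\le 2\|A_{t-1}^{-1}(m_{t-1,i}-\bar m_{t-1})\|^2+2\|A_{t-1}^{-1}\bar m_{t-1}\|^2$ and feed the first piece into the same self-bounding lemma. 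Without this recursive step the constants in $c$ and the $\eta_t\le\rho/(12Lq)$ constraint have nothing to play against, and the stated $\big[\tfrac{5\sigma^2}{3}+\tfrac{3\zeta^2}{2}\big](\ln T+1)$ block cannot be recovered.
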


\begin{remark}
(Complexity) Without loss of generality, let $B = bq$ and $b = O(1) (b \geq 1)$, and choose $q=\left(T / N^{2}\right)^{1 / 3}$. Based on the definition of the $\varepsilon$-stationary point, namely, $\mathbb{E}\|\nabla f(x_T)\| \leq \epsilon$ and $\mathbb{E}\big[\mathcal{M}_T\big] \leq \epsilon^2$. we get $T = \tilde{O}(N^{-1}\varepsilon^{-3})$. And $\frac{T}{q} = (N T)^{2 / 3} = \tilde{O}(\varepsilon^{-2})$, 
Because the sample size b is a constant, the total sample cost is $\tilde{O}(N^{-1}\varepsilon^{-3})$ and the communication round is $\tilde{O}(\varepsilon^{-2})$ for finding an $\varepsilon$-stationary point that matches the state of the art of gradient complexity bound given in for solving the problem. And $\tilde{O}(N^{-1}\varepsilon^{-3})$ exhibits a linear speed-up compared with the aforementioned centralized optimal algorithms, such as SPIDER and STORM \cite{fang2018spider, cutkosky2019momentum}.
\end{remark}

\begin{remark}
(Data Heterogeneity) We use the $\zeta$ to present the data heterogeneity. From final results, it is shown that larger $\zeta$ (higher data heterogeneity) will slow down the training.  
\end{remark}

\begin{remark}
Due to Assumption \ref{ass:4} and the definition of $A_t$, the smallest eigenvalue of the adaptive matrix $A_t$ has a lower bound $\rho > 0$. 
It balances the adaptive information in the adaptive learning rate. Generally, we choose $\rho = O(1)$ and we do not choose a very small or large parameter in practice.
\end{remark}

\section{Experimental Results}

In this section, we evaluate our algorithms with language modeling task and image classification tasks. We compare our algorithms with the existing state-of-the-art algorithms, including FedAvg, SCAFFOLD \cite{karimireddy2020scaffold}, STEM \cite{khanduri2021stem}, FedAdam \cite{reddi2020adaptive} and FedAMS \cite{wang2022communication}.  Experiments are implemented using PyTorch, and we run all experiments on CPU machines with 2.3 GHz Intel Core i9 as well as NVIDIA Tesla P40 GPU.

\begin{figure*}[ht]
\centering
\subfigure[Training Loss]{
\hspace{0pt}
\includegraphics[width=.34\textwidth]{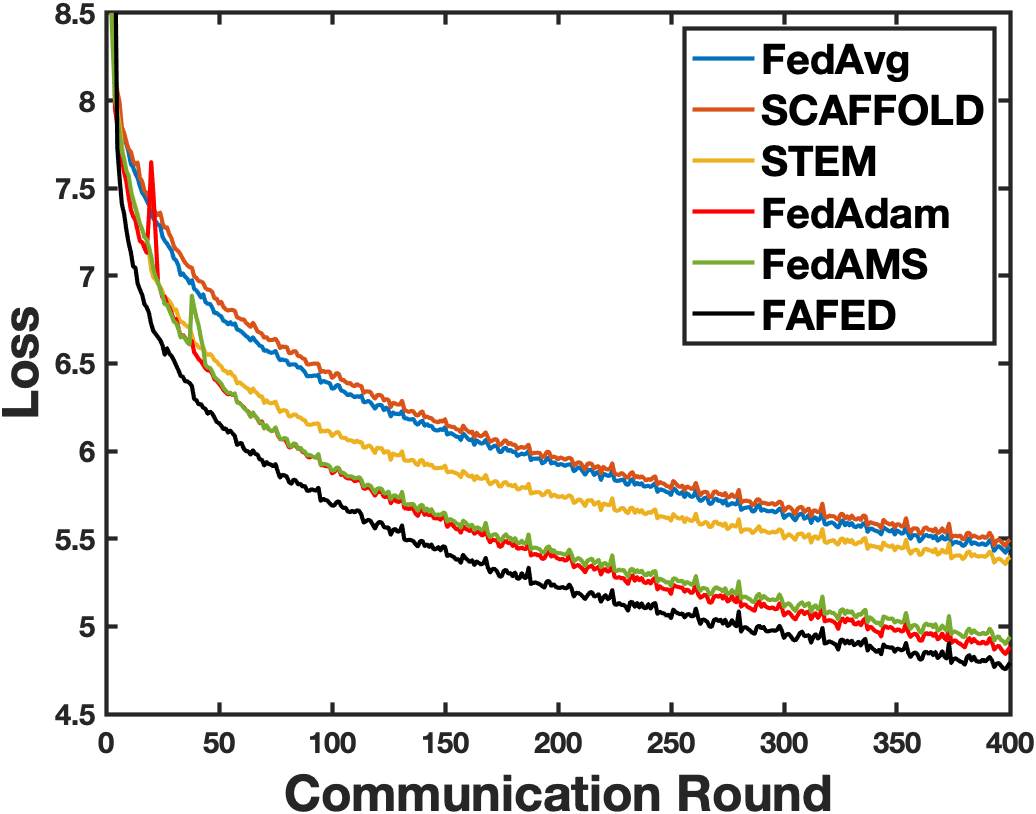}
}
\qquad
\subfigure[Training perplexities]{
\hspace{0pt}
\includegraphics[width=.34\textwidth]{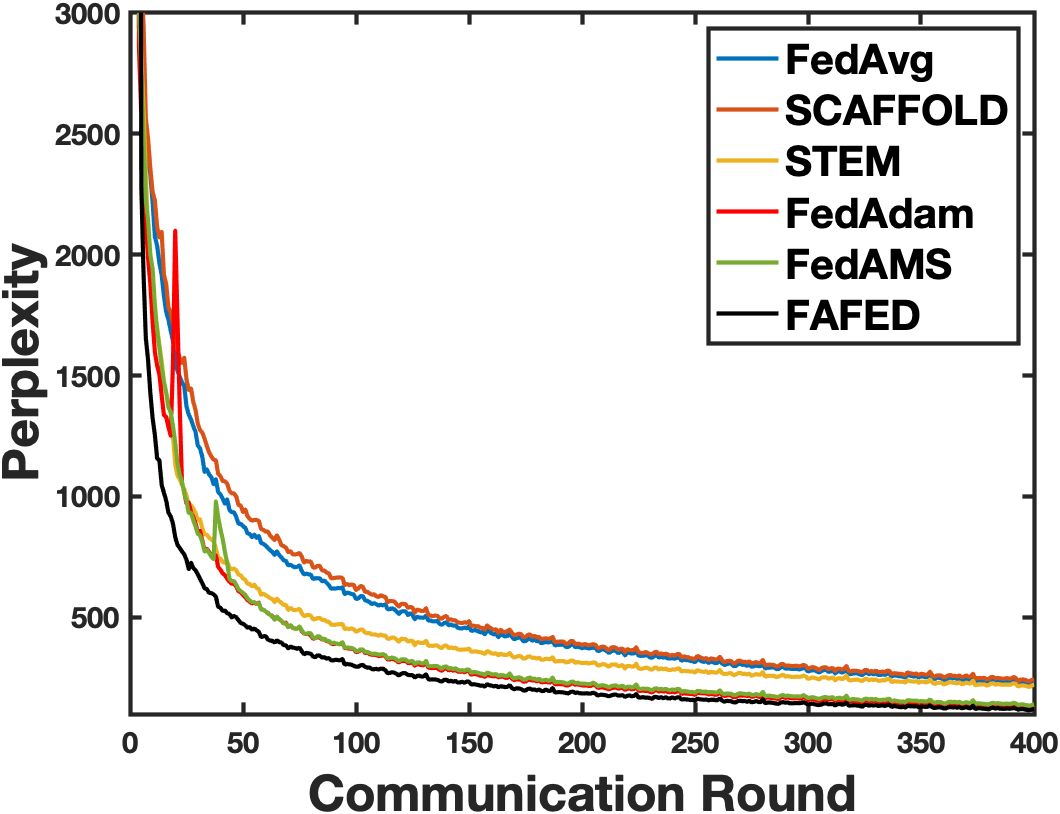}
}
\caption{Experimental results of WikiText2 for language modeling task.}
\label{fig:1}
\end{figure*}

\subsection{Language Modeling Task}
The WikiText2 dataset is used in the experiment and the data is partitioned by 16 worker nodes. Over the WikiText2 dataset, we train a 2-layer LSTM \cite{hochreiter1997long} with 650-dimensional word embeddings and 650 hidden units per layer. We used a batch size of 20 and an inner loop number $q$ of 10 in the experiment, and set the dropout rate as 0.5. To avoid the case of an exploding gradient in LSTM, we also clip the gradients by norm 0.25 \cite{huang2021super}. 

Grid search is used to choose learning rates for each optimizer.
In FedAvg, SCAFFOLD, FedAdam, and FedAMS, we set the learning rate as 10. The global learning rate in the SCAFFOLD is 1. Given that the large global learning rate in FedAdam and FedAms causes the divergence and big fluctuation, we tune it as $0.03 (\approx10^{-1.5})$. In STEM algorithm, we set $\bar{\kappa}$ as 20, $w = \sigma$ = 1, and the step-size is diminished in each epoch as in \cite{khanduri2021stem}. In FAFED algorithm, we set $\rho \bar{h}$ as 1 and $w$ = 1 and decrease the step size as \eqref{eq:5}. In the FedAdam, FedAMS, STEM and FAFED algorithm, the momentum parameters, such as $\alpha_t$, $\beta$, $\beta_1$ and $\beta_2$ are chosen from the set $\{0.1, 0.9 \}$. Their adaptive parameters $\tau$ or $\rho$ are chosen as 0.01. 

Figure \ref{fig:1} shows both training loss and training perplexities. Our FAFED algorithm outperforms all other baseline optimizers. Although FedAdam also has a good performance with an adaptive learning rate, it presents clear fluctuation at the beginning of the training phase because it utilizes the global adaptive method. FedAMS is worse because the adaptive term of FAFED and FedAdam is more flexible, while the adaptive term is monotonically increasing in FedAMS.

\begin{figure*}[t]
\centering
\subfigure[Fashion-MNIST]{
\hspace{0pt}
\includegraphics[width=.31\textwidth]{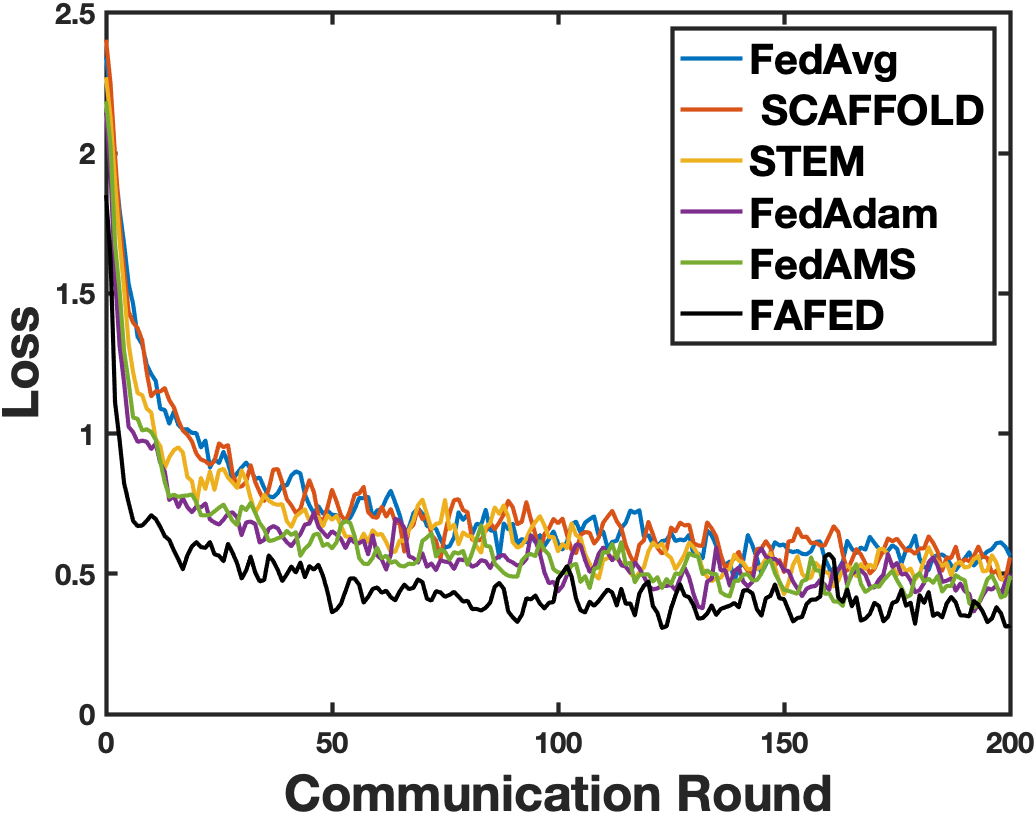}
}
\subfigure[MNIST]{
\hspace{0pt}
\includegraphics[width=.31\textwidth]{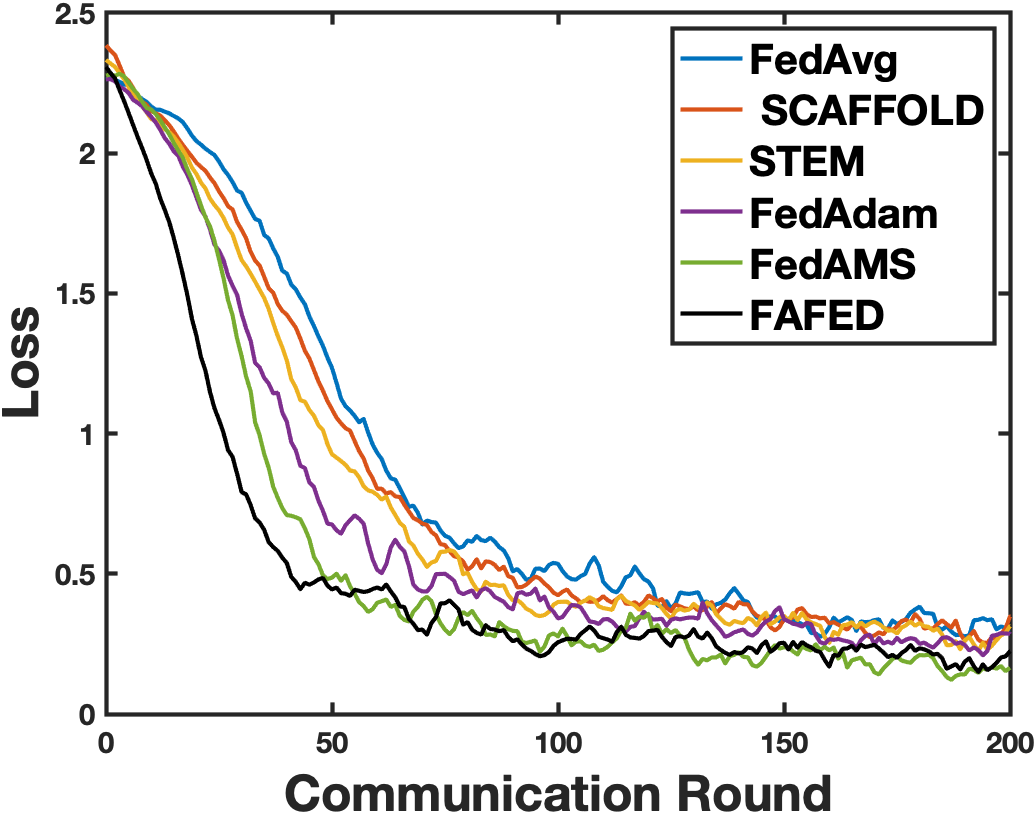}
}
\subfigure[CIFAR-10]{
\hspace{0pt}
\includegraphics[width=.31\textwidth]{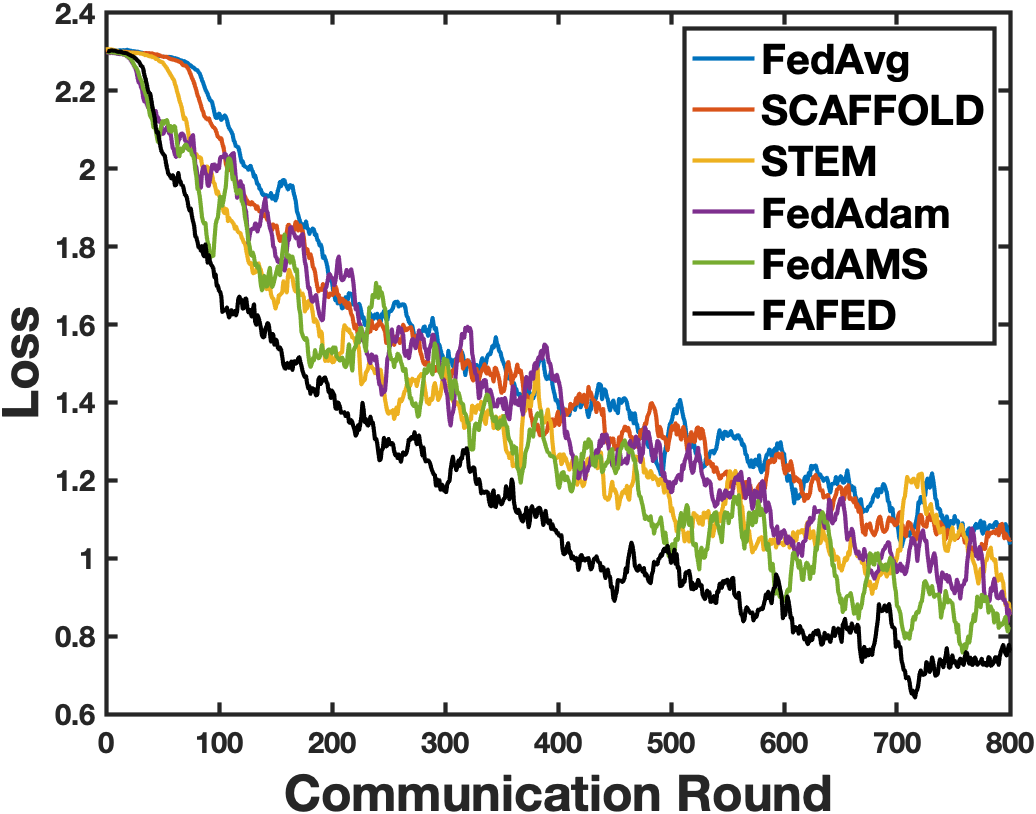}
}
\caption{Training loss vs the number of communication rounds for low heterogeneity setting.}
\label{fig:2}
\end{figure*}

 \subsection{Image Classification Tasks}

In the second task, we conduct image classification tasks on the Fashion-MNIST dataset as in \cite{nouiehed2019solving}, MNIST  dataset and CIFAR-10 dataset with 20 worker nodes in the network. The fashion-MNIST dataset and MNIST  dataset includes $60,000$ training images and $10,000$ testing images classified into $10$ classes. Each image in both datasets contains  $28\times 28$ arrays of the grayscale pixel. CIFAR-10 dataset includes $50,000$ training images and $10,000$ testing images. $60,000$ $32 \times 32$ color images are classified into $10$ categories. Each worker node holds the same Convolutional Neural Network (CNN) model as the classifier. We use cross entropy as the loss function. The network structures are provided in the supplementary material.

We consider three different heterogeneity settings: low heterogeneity, moderate heterogeneity and high heterogeneity.
For the low heterogeneity setting, datasets in different worker nodes have 95\% similarity \cite{karimireddy2020scaffold}. In the real-world application, the data on the different worker nodes are usually completely different, and they even have different categories. Then we consider two more challenging settings. For moderate heterogeneity settings and high heterogeneity settings, the datasets are divided into disjoint sets across all worker nodes. In the moderate heterogeneity setting, each worker node holds part of the data from all the classes, while for the high heterogeneity setting, each worker node only has a part of the total classes (5 out of 10 classes). 

We carefully tune hyperparameters for all methods. We run grid search for step size, and choose the step size in the set $\{0.001, 0.01, 0.02, 0.05,  0.1\}$. We set the global learning rate as 1 for SCAFFOLD. For FedAdam and FedAMS, we set global learning from the set $\{10^{-1.5}, 10^{-2}, 10^{-2.5},\}$, based on the Fig. 2 in \cite{reddi2020adaptive}. The adaptive parameter $\tau$ is chosen as 0.01. Given that the datasets are divided by all worker nodes, the number of data points in each worker node is limited. Heterogeneity settings also slow down the training. Thus, the number of local steps required increases, and the methods perform badly with diminishing step size because it decreases rapidly. We choose the fixed step size for STEM and FAFED. We choose the momentum parameter in the set $\{0.1, 0.9\}$. The $\beta_1$ and $\beta_2$ in FedAdam and FedAMS, and $\beta$ in FAFED are chosen from $ \{0.1, 0.9\}$. The batch-size $b$ is in $ \{5, 50, 100\}$ and the inner loop number $q \in \{5, 10, 20\}$. With the increase of data heterogeneity from low heterogeneity to high heterogeneity, we increase the batch size and decrease the inner loop number.

Discussion: The goal of our experiments is two-fold: (1) To compare the performance of FAFED with other algorithms in different heterogeneity settings during the training phase with training datasets; (2) To demonstrate the model performance on the test datasets. 

In Figures \ref{fig:2}, and figure (Seen in the supplementary materials), we show the performance of FAFED and other baseline methods against the number of communication rounds, namely back-and-forth communication rounds between each worker node and the central server on three datasets with different heterogeneity setting in the image classification task. From Figures, we can find that our algorithms consistently outperform the other baseline algorithms. Compared with FedAdam and FedAMS, our adaptive method has lower fluctuation (e.g., the beginning phase of FedAdam). That is because we use the adaptive learning rate locally, while FEDADAM just scales the model parameters in the central server after multistep training.

Finally, we focus on the performance on the testing datasets. In Tables (Seen in the supplementary mateirals), we show the testing accuracy of FAFED and that of other algorithms on the Fashion-MNIST dataset for different heterogeneity settings after training with the same epochs. Although, with the increasing data heterogeneity, model training becomes more difficult. FAFED performs well under all conditions. It shows the adaptive FL methods adapt well and our method (FAFED) has a good performance in different heterogeneity settings.

\section{Conclusion}
In this work, we proposed a novel adaptive algorithm (i.e., FAFED) based on the momentum-based variance reduced technique in the FL setting. We show that adaptive optimizers can be powerful tools and have a good performance in both theoretical analysis and numerical experiments. In the beginning, we explore how to design the adaptive algorithm in the FL setting. By providing a counter example, we present that a naive combination of the local adaptive method with the periodic model average can lead to divergence, and sharing adaptive learning should be considered. Moreover, we provide a solid convergence analysis for our methods, and prove that our algorithm is the first adaptive FL method to reach the best-known samples complexity $O(\epsilon^{-3})$ and communication complexity $O(\epsilon^{-2})$ to find an $\epsilon$-stationary point without large batches. Finally, we conduct experiments on the language modeling task and image classification tasks with different levels of heterogeneous data. 

\section*{Acknowledgements}
This work was partially supported by NSF IIS 1838627, 1837956, 1956002, 2211492, CNS 2213701, CCF 2217003, DBI 2225775. 


\bibliography{aaai23}

\begin{thebibliography}{50}
\providecommand{\natexlab}[1]{#1}

\bibitem[{Bao, Gu, and Huang(2020)}]{bao2020fast}
Bao, R.; Gu, B.; and Huang, H. 2020.
\newblock Fast oscar and owl regression via safe screening rules.
\newblock In \emph{Inxternational Conference on Machine Learning}, 653--663.
  PMLR.

\bibitem[{Bao et~al.(2022)Bao, Wu, Xian, and Huang}]{bao2022doubly}
Bao, R.; Wu, X.; Xian, W.; and Huang, H. 2022.
\newblock Doubly sparse asynchronous learning for stochastic composite
  optimization.
\newblock In \emph{Proceedings of the Thirty-First International Joint
  Conference on Artificial Intelligence, IJCAI}, 1916--1922.

\bibitem[{Chen, Li, and Li(2020)}]{chen2020toward}
Chen, X.; Li, X.; and Li, P. 2020.
\newblock Toward communication efficient adaptive gradient method.
\newblock In \emph{Proceedings of the 2020 ACM-IMS on Foundations of Data
  Science Conference}, 119--128.

\bibitem[{Cutkosky and Orabona(2019)}]{cutkosky2019momentum}
Cutkosky, A.; and Orabona, F. 2019.
\newblock Momentum-based variance reduction in non-convex sgd.
\newblock \emph{Advances in neural information processing systems}, 32.

\bibitem[{Das et~al.(2022)Das, Acharya, Hashemi, Sanghavi, Dhillon, and
  Topcu}]{das2022faster}
Das, R.; Acharya, A.; Hashemi, A.; Sanghavi, S.; Dhillon, I.~S.; and Topcu, U.
  2022.
\newblock Faster non-convex federated learning via global and local momentum.
\newblock In \emph{Uncertainty in Artificial Intelligence}, 496--506. PMLR.

\bibitem[{Devlin et~al.(2018)Devlin, Chang, Lee, and
  Toutanova}]{devlin2018bert}
Devlin, J.; Chang, M.-W.; Lee, K.; and Toutanova, K. 2018.
\newblock Bert: Pre-training of deep bidirectional transformers for language
  understanding.
\newblock \emph{arXiv preprint arXiv:1810.04805}.

\bibitem[{Dou, Luo, and Yang(2022)}]{dou2022optimal}
Dou, J.~X.; Luo, L.; and Yang, R.~M. 2022.
\newblock An optimal transport approach to deep metric learning (student
  abstract).
\newblock In \emph{Proceedings of the AAAI Conference on Artificial
  Intelligence}, volume~36, 12935--12936.

\bibitem[{Dou et~al.(2023)Dou, Mao, Bao, Liang, Tan, Zhang, Jia, Zhou, and
  Mao}]{dou2023measurement}
Dou, J.~X.; Mao, H.; Bao, R.; Liang, P.~P.; Tan, X.; Zhang, S.; Jia, M.; Zhou,
  P.; and Mao, Z.-H. 2023.
\newblock The Measurement of Knowledge in Knowledge Graphs.
\newblock In \emph{AAAI 2023 Workshop on Representation Learning for
  Responsible Human-Centric AI (R2HCAI).}

\bibitem[{Duchi, Hazan, and Singer(2011)}]{duchi2011adaptive}
Duchi, J.; Hazan, E.; and Singer, Y. 2011.
\newblock Adaptive subgradient methods for online learning and stochastic
  optimization.
\newblock \emph{Journal of machine learning research}, 12(7).

\bibitem[{Fang et~al.(2018)Fang, Li, Lin, and Zhang}]{fang2018spider}
Fang, C.; Li, C.~J.; Lin, Z.; and Zhang, T. 2018.
\newblock Spider: Near-optimal non-convex optimization via stochastic
  path-integrated differential estimator.
\newblock \emph{Advances in Neural Information Processing Systems}, 31.

\bibitem[{Guo et~al.(2020)Guo, Hwa, Lin, and Chung}]{guo2020inflating}
Guo, M.; Hwa, R.; Lin, Y.-R.; and Chung, W.-T. 2020.
\newblock Inflating Topic Relevance with Ideology: A Case Study of Political
  Ideology Bias in Social Topic Detection Models.
\newblock \emph{arXiv preprint arXiv:2011.14293}.

\bibitem[{Guo et~al.(2022)Guo, Yang, Hatamizadeh, Xu, Xu, Li, Zhao, Xu, Harmon,
  Turkbey et~al.}]{guo2022auto}
Guo, P.; Yang, D.; Hatamizadeh, A.; Xu, A.; Xu, Z.; Li, W.; Zhao, C.; Xu, D.;
  Harmon, S.; Turkbey, E.; et~al. 2022.
\newblock Auto-FedRL: Federated Hyperparameter Optimization for
  Multi-institutional Medical Image Segmentation.
\newblock \emph{arXiv preprint arXiv:2203.06338}.

\bibitem[{He et~al.(2019)He, Zhang, Zhang, Zhang, Xie, and Li}]{he2019bag}
He, T.; Zhang, Z.; Zhang, H.; Zhang, Z.; Xie, J.; and Li, M. 2019.
\newblock Bag of tricks for image classification with convolutional neural
  networks.
\newblock In \emph{Proceedings of the IEEE/CVF conference on computer vision
  and pattern recognition}, 558--567.

\bibitem[{Hochreiter and Schmidhuber(1997)}]{hochreiter1997long}
Hochreiter, S.; and Schmidhuber, J. 1997.
\newblock Long short-term memory.
\newblock \emph{Neural computation}, 9(8): 1735--1780.

\bibitem[{Hong et~al.(2021)Hong, Wang, Wang, and Zhou}]{hong2021federated}
Hong, J.; Wang, H.; Wang, Z.; and Zhou, J. 2021.
\newblock Federated robustness propagation: Sharing adversarial robustness in
  federated learning.
\newblock \emph{arXiv preprint arXiv:2106.10196}, 1.

\bibitem[{Huang, Li, and Huang(2021)}]{huang2021super}
Huang, F.; Li, J.; and Huang, H. 2021.
\newblock Super-adam: faster and universal framework of adaptive gradients.
\newblock \emph{Advances in Neural Information Processing Systems}, 34.

\bibitem[{Huang, Wu, and Huang(2021)}]{huang2021efficient}
Huang, F.; Wu, X.; and Huang, H. 2021.
\newblock Efficient mirror descent ascent methods for nonsmooth minimax
  problems.
\newblock \emph{Advances in Neural Information Processing Systems}, 34:
  10431--10443.

\bibitem[{Karimireddy et~al.(2020{\natexlab{a}})Karimireddy, Jaggi, Kale,
  Mohri, Reddi, Stich, and Suresh}]{karimireddy2020mime}
Karimireddy, S.~P.; Jaggi, M.; Kale, S.; Mohri, M.; Reddi, S.~J.; Stich, S.~U.;
  and Suresh, A.~T. 2020{\natexlab{a}}.
\newblock Mime: Mimicking centralized stochastic algorithms in federated
  learning.
\newblock \emph{arXiv preprint arXiv:2008.03606}.

\bibitem[{Karimireddy et~al.(2020{\natexlab{b}})Karimireddy, Kale, Mohri,
  Reddi, Stich, and Suresh}]{karimireddy2020scaffold}
Karimireddy, S.~P.; Kale, S.; Mohri, M.; Reddi, S.; Stich, S.; and Suresh,
  A.~T. 2020{\natexlab{b}}.
\newblock Scaffold: Stochastic controlled averaging for federated learning.
\newblock In \emph{International Conference on Machine Learning}, 5132--5143.
  PMLR.

\bibitem[{Khaled, Mishchenko, and Richt{\'a}rik(2020)}]{khaled2020tighter}
Khaled, A.; Mishchenko, K.; and Richt{\'a}rik, P. 2020.
\newblock Tighter theory for local SGD on identical and heterogeneous data.
\newblock In \emph{International Conference on Artificial Intelligence and
  Statistics}, 4519--4529. PMLR.

\bibitem[{Khanduri et~al.(2021)Khanduri, Sharma, Yang, Hong, Liu, Rajawat, and
  Varshney}]{khanduri2021stem}
Khanduri, P.; Sharma, P.; Yang, H.; Hong, M.; Liu, J.; Rajawat, K.; and
  Varshney, P. 2021.
\newblock Stem: A stochastic two-sided momentum algorithm achieving
  near-optimal sample and communication complexities for federated learning.
\newblock \emph{Advances in Neural Information Processing Systems}, 34.

\bibitem[{Kingma and Ba(2014)}]{kingma2014adam}
Kingma, D.~P.; and Ba, J. 2014.
\newblock Adam: A method for stochastic optimization.
\newblock \emph{arXiv preprint arXiv:1412.6980}.

\bibitem[{Li et~al.(2020)Li, Sahu, Zaheer, Sanjabi, Talwalkar, and
  Smith}]{li2020federated}
Li, T.; Sahu, A.~K.; Zaheer, M.; Sanjabi, M.; Talwalkar, A.; and Smith, V.
  2020.
\newblock Federated optimization in heterogeneous networks.
\newblock \emph{Proceedings of Machine Learning and Systems}, 2: 429--450.

\bibitem[{McMahan et~al.(2017)McMahan, Moore, Ramage, Hampson, and
  y~Arcas}]{mcmahan2017communication}
McMahan, B.; Moore, E.; Ramage, D.; Hampson, S.; and y~Arcas, B.~A. 2017.
\newblock Communication-efficient learning of deep networks from decentralized
  data.
\newblock In \emph{Artificial intelligence and statistics}, 1273--1282. PMLR.

\bibitem[{Nassif et~al.(2019)Nassif, Shahin, Attili, Azzeh, and
  Shaalan}]{nassif2019speech}
Nassif, A.~B.; Shahin, I.; Attili, I.; Azzeh, M.; and Shaalan, K. 2019.
\newblock Speech recognition using deep neural networks: A systematic review.
\newblock \emph{IEEE access}, 7: 19143--19165.

\bibitem[{Nouiehed et~al.(2019)Nouiehed, Sanjabi, Huang, Lee, and
  Razaviyayn}]{nouiehed2019solving}
Nouiehed, M.; Sanjabi, M.; Huang, T.; Lee, J.~D.; and Razaviyayn, M. 2019.
\newblock Solving a class of non-convex min-max games using iterative first
  order methods.
\newblock \emph{Advances in Neural Information Processing Systems}, 32.

\bibitem[{Reddi et~al.(2020)Reddi, Charles, Zaheer, Garrett, Rush,
  Kone{\v{c}}n{\`y}, Kumar, and McMahan}]{reddi2020adaptive}
Reddi, S.; Charles, Z.; Zaheer, M.; Garrett, Z.; Rush, K.; Kone{\v{c}}n{\`y},
  J.; Kumar, S.; and McMahan, H.~B. 2020.
\newblock Adaptive federated optimization.
\newblock \emph{arXiv preprint arXiv:2003.00295}.

\bibitem[{Reddi, Kale, and Kumar(2019)}]{reddi2019convergence}
Reddi, S.~J.; Kale, S.; and Kumar, S. 2019.
\newblock On the convergence of adam and beyond.
\newblock \emph{arXiv preprint arXiv:1904.09237}.

\bibitem[{Staib et~al.(2019)Staib, Reddi, Kale, Kumar, and
  Sra}]{staib2019escaping}
Staib, M.; Reddi, S.; Kale, S.; Kumar, S.; and Sra, S. 2019.
\newblock Escaping saddle points with adaptive gradient methods.
\newblock In \emph{International Conference on Machine Learning}, 5956--5965.
  PMLR.

\bibitem[{Sun et~al.(2022)Sun, Huai, Jha, and Zhang}]{sun2022demystify}
Sun, J.; Huai, M.; Jha, K.; and Zhang, A. 2022.
\newblock Demystify Hyperparameters for Stochastic Optimization with
  Transferable Representations.
\newblock In \emph{Proceedings of the 28th ACM SIGKDD Conference on Knowledge
  Discovery and Data Mining}, 1706--1716.

\bibitem[{Wang, Lin, and Chen(2022)}]{wang2022communication}
Wang, Y.; Lin, L.; and Chen, J. 2022.
\newblock Communication-Efficient Adaptive Federated Learning.
\newblock \emph{arXiv preprint arXiv:2205.02719}.

\bibitem[{Ward, Wu, and Bottou(2019)}]{ward2019adagrad}
Ward, R.; Wu, X.; and Bottou, L. 2019.
\newblock Adagrad stepsizes: Sharp convergence over nonconvex landscapes.
\newblock In \emph{International Conference on Machine Learning}, 6677--6686.
  PMLR.

\bibitem[{Woodworth et~al.(2020)Woodworth, Patel, Stich, Dai, Bullins, Mcmahan,
  Shamir, and Srebro}]{woodworth2020local}
Woodworth, B.; Patel, K.~K.; Stich, S.; Dai, Z.; Bullins, B.; Mcmahan, B.;
  Shamir, O.; and Srebro, N. 2020.
\newblock Is local SGD better than minibatch SGD?
\newblock In \emph{International Conference on Machine Learning}, 10334--10343.
  PMLR.

\bibitem[{Wu, Hu, and Huang(2023)}]{wu2023decentralized}
Wu, X.; Hu, Z.; and Huang, H. 2023.
\newblock Decentralized Riemannian Algorithm for Nonconvex Minimax Problems.
\newblock In \emph{Proceedings of the AAAI Conference on Artificial
  Intelligence}.

\bibitem[{Wu, Huang, and Huang(2022)}]{wu2022fast}
Wu, X.; Huang, F.; and Huang, H. 2022.
\newblock Fast Stochastic Recursive Momentum Methods for Imbalanced Data
  Mining.
\newblock In \emph{2022 IEEE International Conference on Data Mining (ICDM)},
  578--587. IEEE.

\bibitem[{Wu, Bojchevski, and Huang(2022)}]{wu2022adversarial}
Wu, Y.; Bojchevski, A.; and Huang, H. 2022.
\newblock Adversarial Weight Perturbation Improves Generalization in Graph
  Neural Network.
\newblock \emph{arXiv preprint arXiv:2212.04983}.

\bibitem[{Wu, Zhang, and Huang(2022)}]{wu2022retrievalguard}
Wu, Y.; Zhang, H.; and Huang, H. 2022.
\newblock RetrievalGuard: Provably Robust 1-Nearest Neighbor Image Retrieval.
\newblock In \emph{International Conference on Machine Learning}, 24266--24279.
  PMLR.

\bibitem[{Xiong et~al.(2022)Xiong, Cai, Hu, Takabi, and Li}]{xiong2022towards}
Xiong, Z.; Cai, Z.; Hu, C.; Takabi, D.; and Li, W. 2022.
\newblock Towards neural network-based communication system: attack and
  defense.
\newblock \emph{IEEE Transactions on Dependable and Secure Computing}.

\bibitem[{Xiong et~al.(2021)Xiong, Cai, Takabi, and Li}]{xiong2021privacy}
Xiong, Z.; Cai, Z.; Takabi, D.; and Li, W. 2021.
\newblock Privacy threat and defense for federated learning with non-iid data
  in AIoT.
\newblock \emph{IEEE Transactions on Industrial Informatics}, 18(2):
  1310--1321.

\bibitem[{Xiong, Li, and Cai(2023)}]{xiong2023}
Xiong, Z.; Li, W.; and Cai, Z. 2023.
\newblock Federated Generative Model on Multi-Source Heterogeneous Data in IoT.
\newblock In \emph{Proceedings of the AAAI Conference on Artificial
  Intelligence}.

\bibitem[{Xu and Huang(2022)}]{xu2022coordinating}
Xu, A.; and Huang, H. 2022.
\newblock Coordinating momenta for cross-silo federated learning.
\newblock In \emph{Proceedings of the AAAI Conference on Artificial
  Intelligence}, volume~36, 8735--8743.

\bibitem[{Xu et~al.(2022)Xu, Li, Guo, Yang, Roth, Hatamizadeh, Zhao, Xu, Huang,
  and Xu}]{xu2022closing}
Xu, A.; Li, W.; Guo, P.; Yang, D.; Roth, H.~R.; Hatamizadeh, A.; Zhao, C.; Xu,
  D.; Huang, H.; and Xu, Z. 2022.
\newblock Closing the Generalization Gap of Cross-silo Federated Medical Image
  Segmentation.
\newblock In \emph{Proceedings of the IEEE/CVF Conference on Computer Vision
  and Pattern Recognition}, 20866--20875.

\bibitem[{Yang, Fang, and Liu(2021)}]{yang2021achieving}
Yang, H.; Fang, M.; and Liu, J. 2021.
\newblock Achieving linear speedup with partial worker participation in non-iid
  federated learning.
\newblock \emph{arXiv preprint arXiv:2101.11203}.

\bibitem[{Yu, Jin, and Yang(2019)}]{yu2019linear}
Yu, H.; Jin, R.; and Yang, S. 2019.
\newblock On the linear speedup analysis of communication efficient momentum
  SGD for distributed non-convex optimization.
\newblock In \emph{International Conference on Machine Learning}, 7184--7193.
  PMLR.

\bibitem[{Yu, Yang, and Zhu(2019)}]{yu2019parallel}
Yu, H.; Yang, S.; and Zhu, S. 2019.
\newblock Parallel restarted SGD with faster convergence and less
  communication: Demystifying why model averaging works for deep learning.
\newblock In \emph{Proceedings of the AAAI Conference on Artificial
  Intelligence}, volume~33, 5693--5700.

\bibitem[{Zeiler(2012)}]{zeiler2012adadelta}
Zeiler, M.~D. 2012.
\newblock Adadelta: an adaptive learning rate method.
\newblock \emph{arXiv preprint arXiv:1212.5701}.

\bibitem[{Zhang et~al.(2019)Zhang, Karimireddy, Veit, Kim, Reddi, Kumar, and
  Sra}]{zhang2019adam}
Zhang, J.; Karimireddy, S.~P.; Veit, A.; Kim, S.; Reddi, S.~J.; Kumar, S.; and
  Sra, S. 2019.
\newblock Why ADAM beats SGD for attention models.
\newblock \emph{arXiv preprint arXiv:1912.03194}.

\bibitem[{Zhang et~al.(2020)Zhang, Hong, Dhople, Yin, and Liu}]{zhang2020fedpd}
Zhang, X.; Hong, M.; Dhople, S.; Yin, W.; and Liu, Y. 2020.
\newblock Fedpd: A federated learning framework with optimal rates and
  adaptivity to non-iid data.
\newblock \emph{arXiv preprint arXiv:2005.11418}.

\bibitem[{Zhou et~al.(2022)Zhou, Liang, Gu, Yin, and Yao}]{zhou2022multi}
Zhou, Y.; Liang, X.; Gu, Y.; Yin, Y.; and Yao, L. 2022.
\newblock Multi-classifier interactive learning for ambiguous speech emotion
  recognition.
\newblock \emph{IEEE/ACM Transactions on Audio, Speech, and Language
  Processing}, 30: 695--705.

\bibitem[{Zinkevich et~al.(2010)Zinkevich, Weimer, Li, and
  Smola}]{zinkevich2010parallelized}
Zinkevich, M.; Weimer, M.; Li, L.; and Smola, A. 2010.
\newblock Parallelized stochastic gradient descent.
\newblock \emph{Advances in neural information processing systems}, 23.

\end{thebibliography}

\newpage
\onecolumn
\section{Partial Results for Image Classification Tasks}
\vspace*{-10pt}
\begin{figure*}[h]
\centering
\subfigure[Fashion-MNIST]{
\hspace{0pt}
\includegraphics[width=.31\textwidth]{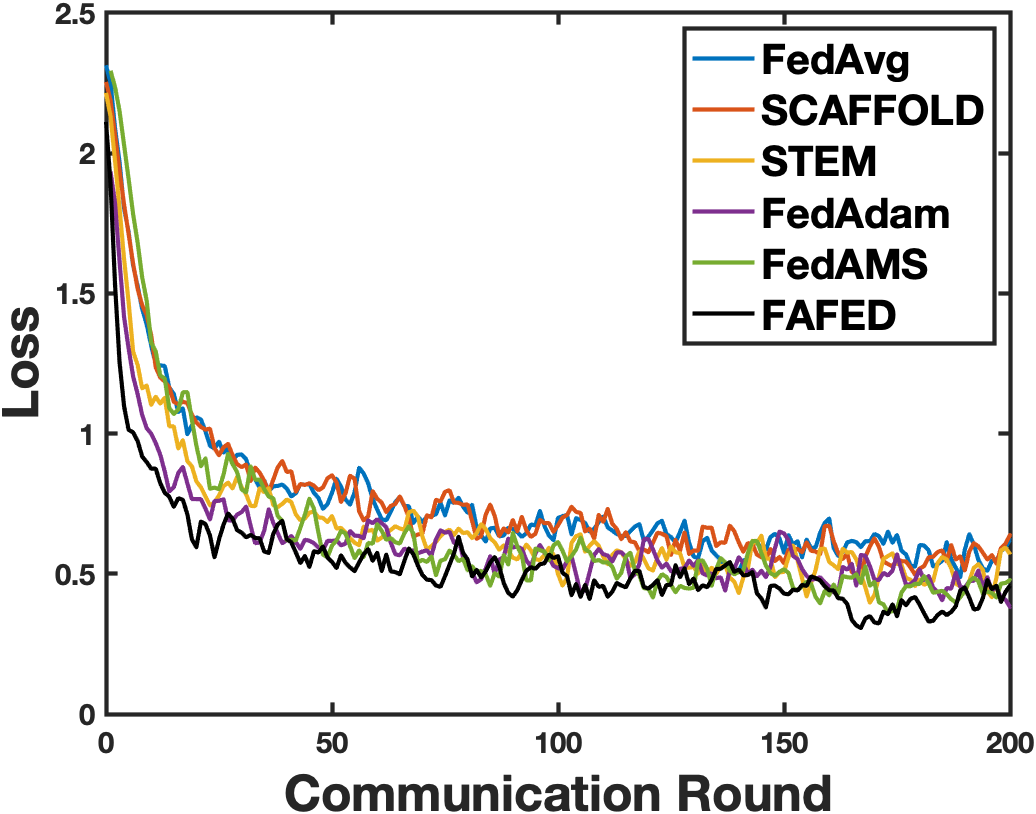}
}
\subfigure[MNIST]{
\hspace{0pt}
\includegraphics[width=.31\textwidth]{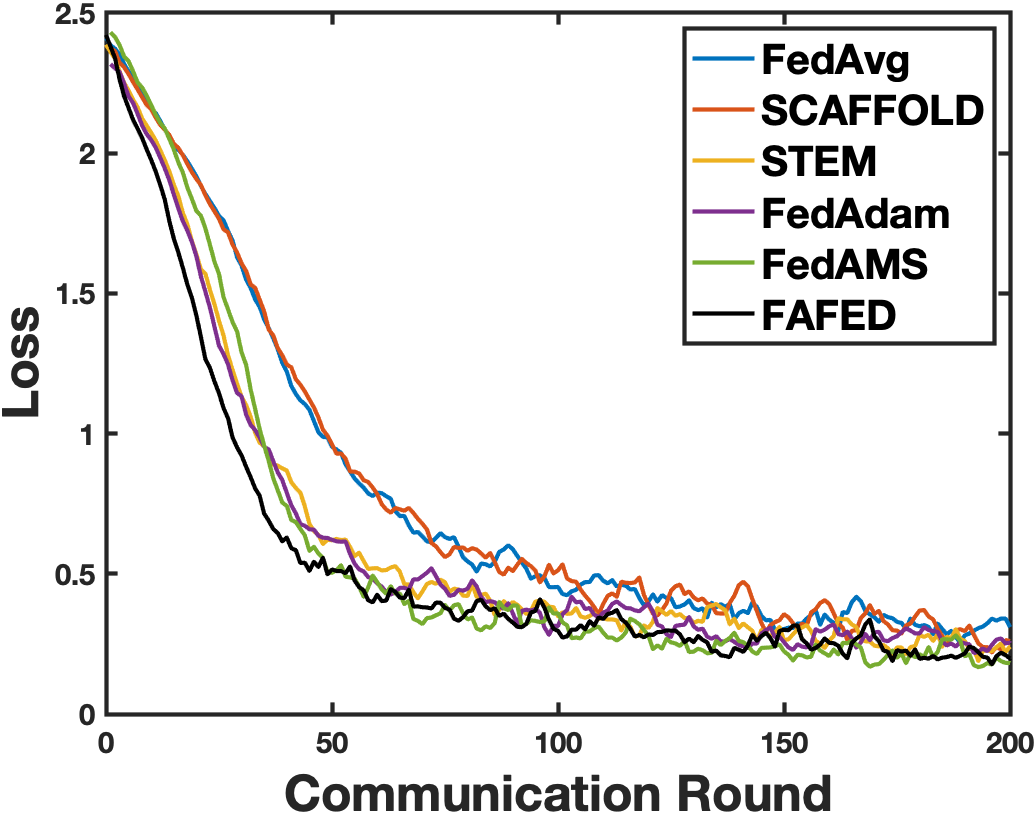}
}
\subfigure[CIFAR-10]{
\hspace{0pt}
\includegraphics[width=.31\textwidth]{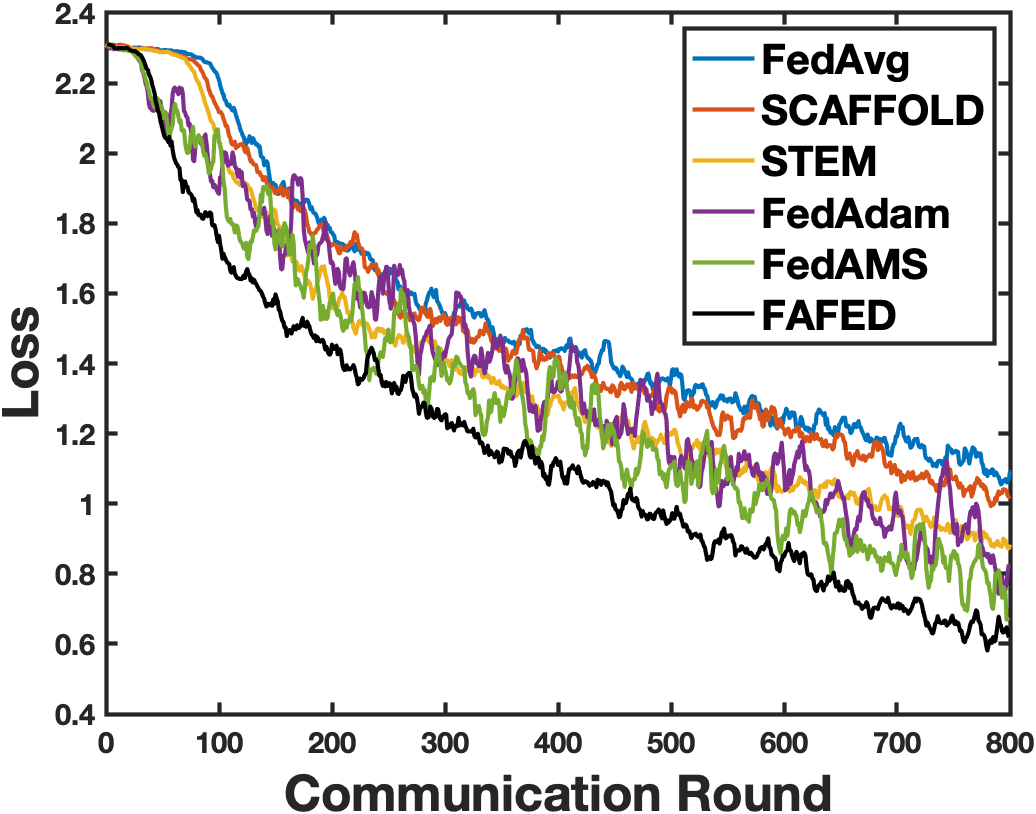}
}
\vspace*{-6pt}
\caption{Training loss vs the number of communication rounds for medium heterogeneity setting.}
\label{fig:3}
\vspace*{-6pt}
\end{figure*}
\vspace{-10pt}
\begin{figure*}[h]
\centering
\subfigure[Fashion-MNIST]{
\hspace{0pt}
\includegraphics[width=.31\textwidth]{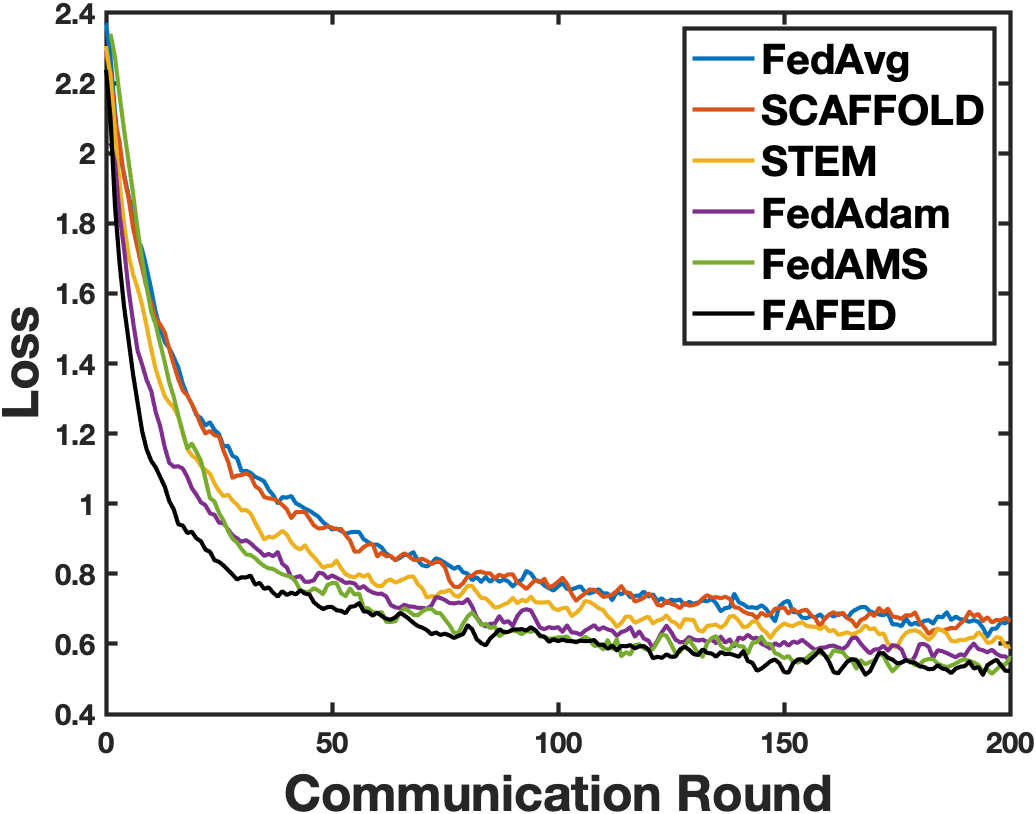}
}
\subfigure[MNIST]{
\hspace{0pt}
\includegraphics[width=.31\textwidth]{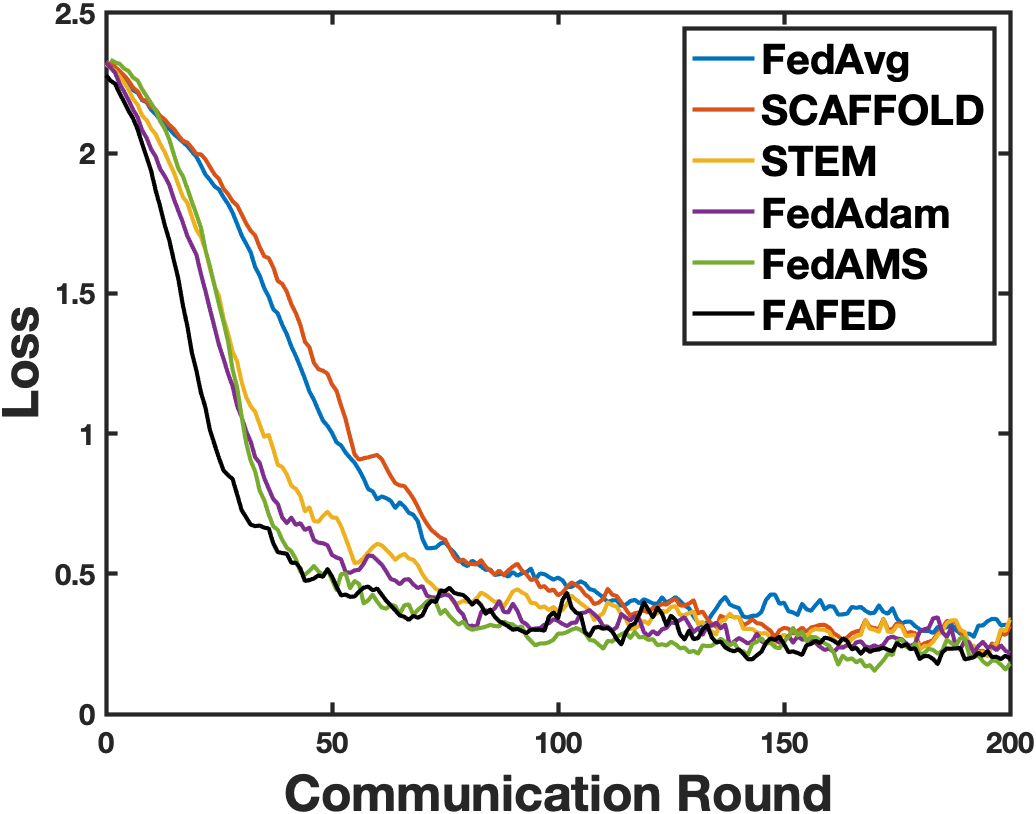}
}
\subfigure[CIFAR-10]{
\hspace{0pt}
\includegraphics[width=.31\textwidth]{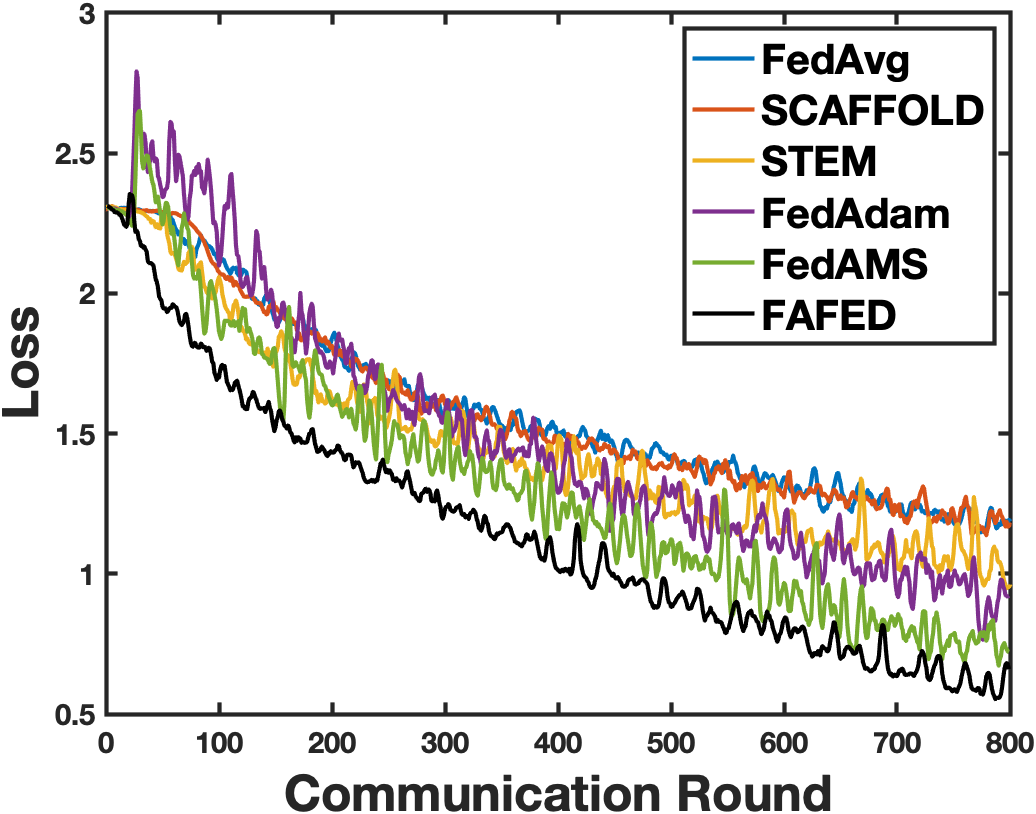}
}
\vspace*{-6pt}
\caption{Training loss vs the number of communication rounds for high heterogeneity setting.}
\label{fig:4}
\vspace*{-6pt}
\end{figure*}
\begin{table}[H]
    \centering
    \setlength{\tabcolsep}{20pt}
    \begin{subtable}[Low heterogeneity]{
    \begin{tabular}{l | c }
        \hline
        Algorithms & Accuracy \\
        \hline 
        FedAvg & 0.8451 \\
        SCAFFOLD & 0.8496\\
        STEM & 0.8562\\
        FedAdam & 0.8586\\
        FedAMS & 0.8697\\
        FAFED & \textbf{0.8816}\\
        \hline
      \end{tabular}}
      \label{tab:2:1}
    \end{subtable}
    \vspace*{-6pt}
    \\
    \centering
    \begin{subtable}[Medium heterogeneity]{        \begin{tabular}{l | c }\hline
        Algorithms & Accuracy \\
        \hline 
        FedAvg & 0.8454\\
        SCAFFOLD & 0.8461\\
        STEM & 0.8551\\ 
        FedAdam & 0.8581\\
        FedAMS & 0.8615\\
        FAFED & \textbf{0.8654}\\
        \hline
      \end{tabular}}
    \end{subtable}
    \vspace*{-6pt}
    \\
    \centering
    \begin{subtable}[High heterogeneity]{        \begin{tabular}{l | c }\hline
        Algorithms &  Accuracy \\
        \hline 
        FedAvg & 0.7958\\
        SCAFFOLD & 0.8034\\
        STEM & 0.8053 \\
        FedAdam & 0.8040\\ 
        FedAMS & 0.8015\\ 
        FAFED & \textbf{0.8188}\\
        \hline
      \end{tabular}} 
    \end{subtable}
    \vspace*{-10pt}
     \caption{Testing accuracy of different algorithms on Fashion-MNIST dataset for different heterogeneity settings.}
     \label{tab:2}
\end{table}

Deep learning has various applications, such as natural language processing \cite{devlin2018bert, guo2020inflating, dou2023measurement}, image classification \cite{he2019bag, wu2022retrievalguard, wu2022adversarial, dou2022optimal, wu2023decentralized} and speech recognition \cite{ nassif2019speech, zhou2022multi}, which attract much attention to propose different optimization methods \cite{fang2018spider, cutkosky2019momentum, bao2020fast, wu2022fast} to improve the performance of models training.

Here we provide partial results in image classification tasks. Figure \ref{fig:3} shows the results in the medium heterogeneity setting and Figure \ref{fig:4} shows the results in the high heterogeneity setting. Table \ref{tab:2} shows the performance of methods on the Fashion-MNIST testing datasets.

\section{Basic Lemma}
In this section, we provide the detailed convergence analysis of our algorithm. For convenience, in the subsequent analysis, we define $g_{t,i} = \nabla f_i(x_{t,i})$ and $a_t = [a_{t,1}^{\top}, a_{t,2}^{\top}, \cdots, a_{t,N}^{\top}]^{\top} \in \mathbb{R}^{Nd}$ and $\bar{a}_t = \sum_{i=1}^{N} a_i \in \mathbb{R}^{d}$ for $a_t \in \{ x_t, m_t, g_t \}$. $\otimes$ denotes the Kronecker product and $s_t$ denotes the $s_{t}=\lfloor t / q\rfloor q$. 


\begin{lemma} \label{lem:A1} 
For $x_t \in \mathbb{R}^{Nd}$ and $\bar{x}_t \in \mathbb{R}^{d}$, denoting $\mathbf{1} \in \mathbb{R}^{N}$ as the vector of all ones, we have 
\begin{align}
\|x -  \mathbf{1} \otimes \bar{x} \|^{2} \leq \|x\|^{2}
\end{align}
\end{lemma}
\begin{proof}
\begin{align}
\|x - \mathbf{1} \otimes \bar{x} \|^{2} &= \|(\mathbf{I} - \frac{1}{N}\mathbf{1}\mathbf{1}^{\top})x \|^2  \nonumber\\
&\leq \|\mathbf{I} - \frac{1}{N}\mathbf{1}\mathbf{1}^{\top}\|^2  \|x \|^2 \nonumber\\
&\leq \|x\|^{2}
\end{align}
where the first inequality follows from the Cauchy-Schwarz inequality and the last inequality follows the fact that $\|\mathbf{I} - \frac{1}{N}\mathbf{1}\mathbf{1}^{\top}\| \leq 1$
\end{proof}

According to algorithm \ref{alg:2}, we could update the model parameter as follows :
\begin{align}
\bar{x}_{t+1}
=\bar{x}_{t}-\eta_{t} A_{t}^{-1} \bar{m}_{t}
\end{align}

Then we consider the upper bound of $A_t$
\begin{lemma} \label{lem:A2}
Suppose the adaptive matrices sequence $\{A_t\}_{t=1}^{T}$ be generated from Algorithm \ref{alg:2}. Under the above Assumptions, we have 
 $\mathbb{E}\left\|A_{t}\right\|^{2} \leq 2\left(\sigma^{2}+G^{2}+\rho^{2}\right)$.
\end{lemma}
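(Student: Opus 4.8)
\textbf{Proof proposal for Lemma \ref{lem:A2}.}

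The plan is to bound $\|A_t\|$ directly through its definition $A_t = \mathrm{diag}(\sqrt{\bar v_t} + \rho)$, where $\bar v_t$ is (at communication rounds) the average over worker nodes of the local second-moment estimates $v_{t,i}$, and otherwise equals the $\bar v$ from the most recent communication round. Since $A_t$ is a diagonal matrix with nonnegative entries, its spectral norm is the largest diagonal entry, so $\|A_t\| = \max_j (\sqrt{\bar v_{t,j}} + \rho) \leq \|\sqrt{\bar v_t}\|_\infty + \rho \leq \|\sqrt{\bar v_t}\| + \rho$ (treating $\sqrt{\bar v_t}$ as a vector). Using $(a+b)^2 \le 2a^2 + 2b^2$, this gives $\|A_t\|^2 \le 2\|\sqrt{\bar v_t}\|^2 + 2\rho^2 = 2\sum_j \bar v_{t,j} + 2\rho^2$. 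Thus it suffices to show $\mathbb{E}\big[\sum_j \bar v_{t,j}\big] = \mathbb{E}\|\bar v_t\|_1 \le \sigma^2 + G^2$ (as a scalar bound, where $\bar v_{t,j}$ is the $j$-th coordinate).

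The key step is therefore to bound $\mathbb{E}\|\bar v_t\|_1$. First I would unroll the recursion $v_{t,i} = \beta v_{t-1,i} + (1-\beta)\hat g_{t,i}^2$ (coordinate-wise square), noting that the averaging step at communication rounds does not change the average: $\bar v_t = \frac1N\sum_i v_{t,i}$ is preserved, and the recursion for $\bar v_t$ in terms of $\bar v_{t-1}$ and $\frac1N\sum_i \hat g_{t,i}^2$ holds at every step (with the understanding that the local $v_{t,i}$ may get reset to $\bar v_t$ but this is consistent). Unrolling back to the initialization $\bar v_0 = \frac1N\sum_i (\nabla f(x_{0,i};\mathcal{B}_{0,i}))^2$, each $\bar v_t$ is a convex combination of terms of the form $\frac1N\sum_i (\nabla f_i(x_{s,i};\mathcal{B}_{s,i}))^2$ (the weights $(1-\beta)\beta^{t-s}$ for $s\ge 1$ plus $\beta^t$ on the initial term sum to $1$). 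For each such term, taking expectation and using $\mathbb{E}\|\nabla f_i(x;\mathcal{B})\|^2 \le \mathbb{E}\|\nabla f_i(x;\mathcal{B}) - \nabla f_i(x)\|^2 + \|\nabla f_i(x)\|^2 \le \sigma^2 + G^2$ by Assumption \ref{ass:1}(ii) and Assumption \ref{ass:5} (the mini-batch average only helps, reducing the variance term), we get $\mathbb{E}\big[\frac1N\sum_i \|\nabla f_i(x_{s,i};\mathcal{B}_{s,i})\|^2\big] \le \sigma^2 + G^2$. Since $\|\bar v_t\|_1$ is the sum of these coordinate contributions and the weights form a convex combination, $\mathbb{E}\|\bar v_t\|_1 \le \sigma^2 + G^2$, and combining with the display above yields $\mathbb{E}\|A_t\|^2 \le 2(\sigma^2 + G^2 + \rho^2)$.

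The main obstacle — really more of a bookkeeping subtlety than a genuine difficulty — is handling the periodic averaging/reset of $v_{t,i}$ correctly when unrolling the recursion, i.e., confirming that $\bar v_t$ satisfies a clean recursion across communication boundaries and that $A_t$ on non-communication steps ($A_t = A_{t-1}$) is still controlled by some earlier $\bar v_{s}$ with $s = s_t$, so the same bound applies. One must also be slightly careful that the expectation is taken over all the randomness in the sample draws and the $x_{s,i}$ (which are themselves random), but since the bound $\sigma^2 + G^2$ holds conditionally on any realization of $x_{s,i}$, the tower property disposes of this. Everything else is the elementary inequalities $\|\mathrm{diag}(u)\| = \|u\|_\infty \le \|u\|_2$ and $(a+b)^2 \le 2a^2+2b^2$.
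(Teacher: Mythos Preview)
Your proposal is correct and follows essentially the same route as the paper: reduce $\|A_t\|$ to the largest diagonal entry, apply $(a+b)^2\le 2a^2+2b^2$ to split off $\rho$, unroll the exponential-moving-average recursion for $\bar v_t$ as a convex combination, and bound each term by $\mathbb{E}\|\nabla f_i(x_{s,i};\mathcal{B}_{s,i})\|^2 \le \sigma^2 + G^2$ via Assumptions~\ref{ass:1} and~\ref{ass:5}. The only cosmetic difference is that the paper works with $\|\bar v_t\|_\infty$ and then passes to the $\ell_2$ norm of the stochastic gradient, whereas you pass through $\|\bar v_t\|_1$ first; both arrive at the same scalar bound. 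Your handling of the bookkeeping (that $A_t=A_{s_t}$ on non-communication steps, and that averaging preserves the recursion for $\bar v_t$) is exactly what the paper needs and uses implicitly.
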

\begin{proof}
\begin{align}
&\mathbb{E}\|A_{t}\|^{2} = \mathbb{E}\|A_{s_{t}}\|^{2} = \mathbb{E}\|\sqrt{v_{s_t}}+\rho\|_{\infty}^{2} \leq 2 \mathbb{E}\|\bar{v}_{s_t}\|_{\infty} + 2 \rho^{2}
\end{align}
where the second inequality holds by that spectral norm of a matrix $A_{t}$ is the largest singular value of $A_{t}$ (.i.e, $\|A_{t}\|_2=\sqrt{\lambda_{\max }\left(A_{t}^* A_{t}\right)}=\sigma_{\max }(A_{t})$) and $A_{t}$ is a diagnoal matrix. 
Following the definition of $v_{s_t}$, we have 
\begin{align}
\mathbb{E}\|\bar{v}_{s_t}\|_{\infty} &= \mathbb{E}\|\beta \bar{v}_{s_{t}-1} + (1 - \beta) \frac{1}{N} \sum_{i=1}^{N}[ \nabla_x f_i (x_{s_{t}-1,i}; \mathcal{B}_{s_{t}-1,i})]^2\|_{\infty} \nonumber \nonumber\\
&\leq \beta \mathbb{E}\left\|\bar{v}_{s_{t}-1}\right\|_{\infty} + (1-\beta) \mathbb{E}\|\frac{1}{N} \sum_{i=1}^{N} [\nabla_x f_i (x_{s_{t}-1,i}; \mathcal{B}_{s_{t}-1,i})]^2\|_{\infty} \nonumber\\
&\leq \beta \mathbb{E}\left\|\bar{v}_{s_{t}-1}\right\|_{\infty} + (1-\beta) \frac{1}{N} \sum_{i=1}^{N}\mathbb{E}\| [\nabla_x f_i (x_{s_{t}-1,i}; \mathcal{B}_{s_{t}-1,i})]^2\|_{\infty} \nonumber\\
&\leq \beta \mathbb{E}\|\bar{v}_{s_{t}-1}\|_{\infty} + (1-\beta) \frac{1}{N} \sum_{i=1}^{N} \mathbb{E}\| \nabla_x f_i (x_{s_{t}-1,i}; \mathcal{B}_{s_{t}-1,i})\|_{2}^{2}
\end{align}
Then for the last term, we have 
\begin{align}
&\frac{1}{N} \sum_{i=1}^{N} \mathbb{E}\|\nabla_x f_i (x_{s_{t}-1,i}; \mathcal{B}_{s_{t}-1,i})\|^{2} \nonumber\\
\leq& \frac{1}{N} \sum_{i=1}^{N} [2 \mathbb{E}\|\nabla_x f_i (x_{s_{t}-1,i}; \mathcal{B}_{s_{t}-1,i}) - g_{s_{t}-1,i}\|^{2} + 2 \mathbb{E}\|g_{s_{t}-1,i}\|^2] \nonumber\\
\leq& 2 \sigma^{2} + 2 G^{2}
\end{align}
where the last inequality follows the Assumptions \ref{ass:1} and \ref{ass:5}. 
Therefore, taking the recursive expansion and $\beta \in (0, 1)$, we have 
\begin{align}
\mathbb{E}\|A_{t}\|^{2} \leq 2(\sigma^{2} + G^{2} + \rho^{2})
\end{align}
\end{proof}

\begin{lemma} \label{lem:A3}
For $i \in [N] $, we have  
\begin{align}
\mathbb{E}\|\nabla_x f_i (x_{t,i}; \mathcal{B}_{t,i}) - g_{t,i} \|^2 \leq& \frac{\sigma^2}{b} \\
\mathbb{E}\|g_{t} - \mathbf{1} \otimes \bar{g}_{t} \|^2
\leq& 12 L^{2} \mathbb{E}\|x_{t-1}-\bar{x}_{t-1}\|^{2} + 6K \zeta^{2}
\end{align}

\end{lemma}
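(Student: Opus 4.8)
\textbf{Proof proposal for Lemma~\ref{lem:A3}.}

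The first bound is the standard mini-batch variance reduction. The plan is to write $\nabla_x f_i(x_{t,i};\mathcal{B}_{t,i}) - g_{t,i} = \frac{1}{b}\sum_{j=1}^b \big(\nabla_x f_i(x_{t,i};\xi_i^j) - \nabla f_i(x_{t,i})\big)$, note that the summands are mean-zero and independent conditioned on $x_{t,i}$, so the cross terms vanish in expectation; then apply Assumption~\ref{ass:1}-(ii) termwise to get $\mathbb{E}\|\cdot\|^2 \le \sigma^2/b$. This is routine and I would dispatch it in one line.

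For the second bound the plan is to exploit Lemma~\ref{lem:A1}, which gives $\mathbb{E}\|g_t - \mathbf{1}\otimes\bar g_t\|^2 \le \mathbb{E}\|g_t - \mathbf{1}\otimes g_{s_t,1}\|^2$ by choosing the ``center'' $\bar x$ appropriately — actually, more precisely, $\|g_t - \mathbf{1}\otimes \bar g_t\|^2 = \|(\mathbf{I}-\frac1N\mathbf{1}\mathbf{1}^\top)\otimes I_d\, g_t\|^2$, so I can replace $\bar g_t$ by any fixed reference vector at the cost of a constant factor. A natural reference is the gradient evaluated at the most recent synchronization point $\bar x_{s_t}$. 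Since at $t=s_t$ all local models coincide ($x_{s_t,i}=\bar x_{s_t}$), I would decompose $g_{t,i} - (\text{ref})$ into three pieces: (a) $\nabla f_i(x_{t,i}) - \nabla f_i(\bar x_t)$, controlled by $L\|x_{t,i}-\bar x_t\|$ via Assumption~\ref{ass:2}; (b) $\nabla f_i(\bar x_t) - \nabla f_i(\bar x_{s_t})$, controlled by $L\|\bar x_t - \bar x_{s_t}\|$; and (c) $\nabla f_i(\bar x_{s_t}) - \nabla f(\bar x_{s_t})$, which is bounded by $\zeta$ via the heterogeneity part of Assumption~\ref{ass:1}-(ii). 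Squaring, summing over $i$, and using $\|a+b+c\|^2\le 3(\|a\|^2+\|b\|^2+\|c\|^2)$ produces terms $12L^2\mathbb{E}\|x_{t-1}-\bar x_{t-1}\|^2$ (absorbing the constant from Lemma~\ref{lem:A1} and shifting the iterate index appropriately, since consecutive iterates differ by a controlled step) and $6K\zeta^2$ where $K\le q$ counts the inner-loop steps since the last averaging.

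The main obstacle is bookkeeping rather than any deep inequality: I must carefully track which iterate index appears (the statement has $x_{t-1}-\bar x_{t-1}$, not $x_t - \bar x_t$, so the argument must route through the $t-1$ iterate — presumably because $m_{t,i}$ and hence the update at step $t$ uses gradients at $x_{t-1,i}$ as well), and I must correctly identify the constant $K$ in terms of $q$ and the position of $t$ within its communication block, making sure the ``consensus error accumulates over at most $q$ steps'' heuristic is turned into the stated $6K\zeta^2$ with the right numerical constants. I would also need the elementary fact $\mathbb{E}\|\frac1N\sum_i a_i\|^2 \le \frac1N\sum_i \mathbb{E}\|a_i\|^2$ when passing between the stacked-vector norm and per-node norms. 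Everything else is Cauchy--Schwarz and Lipschitz smoothness.
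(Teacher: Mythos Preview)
Your treatment of the first inequality is correct and matches the paper exactly.

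For the second inequality, however, you have taken an unnecessary detour that introduces a term you cannot absorb. The paper's proof never touches the synchronization point $\bar x_{s_t}$. It simply expands
\[
g_{t,i}-\bar g_t=\big[g_{t,i}-\nabla f_i(\bar x_t)\big]+\big[\nabla f_i(\bar x_t)-\nabla f(\bar x_t)\big]+\big[\nabla f(\bar x_t)-\bar g_t\big],
\]
applies $\|a+b+c\|^2\le 3(\|a\|^2+\|b\|^2+\|c\|^2)$, bounds the first and third pieces by Lipschitzness (each giving $L^2\sum_i\|x_{t,i}-\bar x_t\|^2$) and the middle piece by the heterogeneity assumption (giving $N\zeta^2$ after writing $\nabla f(\bar x_t)=\frac1N\sum_j\nabla f_j(\bar x_t)$). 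This yields $6L^2\mathbb{E}\|x_t-\mathbf 1\otimes\bar x_t\|^2+3N\zeta^2$ at index $t$; the $12$, the $6K$, and the index $t-1$ in the displayed statement are typos (compare with how the bound is actually invoked later, e.g.\ in the derivation leading to the $\frac{2N\sigma^2}{b}+6N\zeta^2+12L^2\mathbb{E}\|x_{t-1}-\mathbf 1\otimes\bar x_{t-1}\|^2$ line, where an extra factor $2$ is applied \emph{outside} and the shift to $t-1$ comes from evaluating at $x_{t-1}$).

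Your decomposition instead routes through $\bar x_{s_t}$ and produces a piece (b) $\nabla f_i(\bar x_t)-\nabla f_i(\bar x_{s_t})$, which after squaring and summing gives $NL^2\|\bar x_t-\bar x_{s_t}\|^2$. That quantity is neither zero nor controlled by $\|x_{t-1}-\mathbf 1\otimes\bar x_{t-1}\|^2$ or by $\zeta^2$; it measures how far the \emph{averaged} iterate has drifted since the last communication, which is a genuinely different object. Your remark about ``shifting the iterate index appropriately'' does not close this gap. You were also misled by the symbol $K$: it is not an inner-loop counter but simply $N$, so there is no need to build a proof that accumulates heterogeneity over $q$ steps.
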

\begin{proof}
(1) we have
\begin{align}
&\mathbb{E}\|\nabla_x f_i (x_{t,i}; \mathcal{B}_{t,i}) - g_{t,i} \|^2 \nonumber\\
=& \mathbb{E}\|\frac{1}{b} \sum_{\xi_{t,i} \in \mathcal{B}_{t,i}}(\nabla f_i (x_{t,i} ; \xi_{t,i}) - g_{t,i}) \|^2 \nonumber\\
=& \frac{1}{b^2} \mathbb{E}\|\sum_{\xi_{t,i} \in \mathcal{B}_{t,i}}(\nabla f_i (x_{t,i} ; \xi_{t,i}) - g_{t,i}) \|^2 \nonumber\\
=& \frac{1}{b^2} \sum_{\xi_{t,i} \in \mathcal{B}_{t,i}} \mathbb{E}\|\nabla f_i (x_{t,i} ; \xi_{t,i}) - g_{t,i}\|^2 \nonumber\\
\leq& \frac{\sigma^2}{b}
\end{align}
where the third equality is due to $ \mathbb{E}_{\mathcal{B}_{t,i}}[\nabla f_i (x_{t,i} ; \xi_{t,i}) - g_{t,i}] = 0$ and the last inequality follows Assumptions \ref{ass:1}. 

(2)
\begin{align}
&\mathbb{E}\|g_{t} - \mathbf{1} \otimes \bar{g}_{t} \|^2 \nonumber\\
=& \sum_{i=1}^{N} \mathbb{E}\|g_{t,i} - \bar{g}_{t} \|^2 \nonumber\\
\leq& 3 \sum_{i=1}^{N}\mathbb{E}[\|g_{t,i}-\nabla f_i(\bar{x}_{t})\|^{2} +  \|\nabla f(\bar{x}_{t}) - \bar{g}_{t} \|^2 + \| \nabla f_i(\bar{x}_{t}) - \nabla f(\bar{x}_{t}) \|^2]\nonumber\\
\leq& 3 \sum_{i=1}^{N}\mathbb{E}[\|g_{t,i} - \nabla f_i(\bar{x}_{t})\|^{2} + \frac{1}{N} \sum_{j=1}^{N}\|\nabla f_j(\bar{x}_{t}) - g_{t,j} \|^2 + \frac{1}{N} \sum_{j=1}^{N}\| \nabla f_i(\bar{x}_{t}) - \nabla f_j(\bar{x}_{t}) \|^2]\nonumber\\
\leq& 6 L^{2} \mathbb{E}\|x_{t}- \mathbf{1} \otimes \bar{x}_{t}\|^{2} + 3 \sum_{i=1}^{N} \frac{1}{N} \sum_{j=1}^{N} \mathbb{E}\|\nabla f_i\left(\bar{x}_{t}\right)-\nabla f_j(\bar{x}_{t})\|^{2} \nonumber\\
\leq& 6 L^{2} \mathbb{E}\|x_{t} - \mathbf{1} \otimes \bar{x}_{t}\|^{2} + 3N \zeta^{2}
\end{align}
where the third inequality is due to Assumption  \ref{ass:2}  and the last inequality is due to Assumption \ref{ass:1}. 
\end{proof}

\begin{lemma} \label{lem:A4}
For $t \in [\lfloor t / q\rfloor q, \lfloor t / q\rfloor (q+1)]$, and $x_t$ is generated from Algorithm \ref{alg:2}, we have  
\end{lemma}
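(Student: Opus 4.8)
The plan is to bound the client-drift (consensus) error $\mathbb{E}\|x_t - \mathbf{1}\otimes\bar{x}_t\|^2$ that has accumulated since the last synchronization step $s_t = \lfloor t/q\rfloor q$. The starting observation is that the averaging at step $t=s_t$ forces $x_{s_t,i}=\bar{x}_{s_t}$ for every $i$, so the consensus error vanishes at the beginning of each communication round, and within the round the adaptive matrix is frozen at $A_{s_t}$ (step 14 of Algorithm \ref{alg:2}). Hence for $s_t \le t < s_t+q$ I would unroll the local update $x_{\tau+1,i}=x_{\tau,i}-\eta_\tau A_{s_t}^{-1}m_{\tau,i}$ to obtain $x_{t,i}-\bar{x}_t = -\sum_{\tau=s_t}^{t-1}\eta_\tau A_{s_t}^{-1}(m_{\tau,i}-\bar{m}_\tau)$, where $\bar{m}_\tau=\frac1N\sum_i m_{\tau,i}$.

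From there I would apply Lemma \ref{lem:A1}, the Cauchy--Schwarz inequality over the at most $q$ summands, and Assumption \ref{ass:4} (which gives $\|A_{s_t}^{-1}\|\le 1/\rho$) to get $\mathbb{E}\|x_t-\mathbf{1}\otimes\bar{x}_t\|^2 \le \frac{q}{\rho^2}\sum_{\tau=s_t}^{t-1}\eta_\tau^2\,\mathbb{E}\|m_\tau-\mathbf{1}\otimes\bar{m}_\tau\|^2$. This reduces the lemma to controlling the deviation of the momentum estimators $\|m_\tau-\mathbf{1}\otimes\bar{m}_\tau\|^2$ across the round.

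Next I would expand the momentum recursion $m_{\tau,i}=\hat g_{\tau,i}+(1-\alpha_\tau)(m_{\tau-1,i}-\hat g_{\tau-1,i})$, subtract its average, and write $m_{\tau,i}-\bar{m}_\tau = (1-\alpha_\tau)(m_{\tau-1,i}-\bar{m}_{\tau-1}) + (\hat g_{\tau,i}-\tfrac1N\textstyle\sum_j\hat g_{\tau,j}) - (1-\alpha_\tau)(\hat g_{\tau-1,i}-\tfrac1N\textstyle\sum_j\hat g_{\tau-1,j})$, again using that the momentum is synchronized at $\tau=s_t$. Young's inequality together with the smoothness Assumption \ref{ass:2} then relates the gradient-difference terms to the stochastic variance $\sigma^2/b$, the heterogeneity $\zeta^2$, and the client drift $\|x_\tau-\mathbf{1}\otimes\bar{x}_\tau\|^2$ through Lemma \ref{lem:A3}. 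Summing over $\tau$ in the round and using that $\{\eta_\tau\}$ is nonincreasing (so $\eta_\tau^2$ can be pulled out of the inner sums), I expect to close a joint recursion in $\sum_\tau\mathbb{E}\|x_\tau-\mathbf{1}\otimes\bar{x}_\tau\|^2$ and $\sum_\tau\mathbb{E}\|m_\tau-\mathbf{1}\otimes\bar{m}_\tau\|^2$, the cross terms being absorbed because $q\eta_\tau^2 L^2/\rho^2$ is made small by the prescribed choice of $\eta_t$ and $q$.

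The main obstacle is precisely this coupling: the consensus error feeds into the gradient-difference bound of Lemma \ref{lem:A3}, which feeds into the momentum-deviation bound, which feeds back into the consensus error. Getting a clean closed form requires choosing the Young's-inequality weights so that the $(1-\alpha_\tau)$ contraction in the momentum recursion and the smallness of $q\eta_\tau^2$ together dominate, and then telescoping carefully over the (at most $q$) steps of the round; the bookkeeping of constants is the delicate part, while each individual inequality is routine.
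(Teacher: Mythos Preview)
Your first step is exactly what the paper does, and in fact that is \emph{all} Lemma~\ref{lem:A4} proves in the paper (the lemma statement is truncated in the source): starting from $x_{s_t,i}=\bar{x}_{s_t}$, unrolling the local update, and applying Cauchy--Schwarz over the at most $q-1$ summands yields
\[
\sum_{i=1}^{N}\|x_{t,i}-\bar{x}_t\|^2 \;\le\; (q-1)\sum_{s=s_t}^{t-1}\eta_s^2\sum_{i=1}^{N}\|A_s^{-1}(m_{s,i}-\bar{m}_s)\|^2.
\]
The only cosmetic difference is that the paper keeps $A_s^{-1}$ inside the norm rather than pulling out the factor $1/\rho$ as you do; this matters later because Lemma~\ref{lem:A7} is phrased directly in terms of $\|A_t^{-1}(m_{t,i}-\bar{m}_t)\|^2$.

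Everything from your second paragraph onward is not part of Lemma~\ref{lem:A4} in the paper. The momentum-deviation recursion, the use of Young's inequality and Lemma~\ref{lem:A3}, and the closing of the coupled drift/momentum recursion are precisely the content of the paper's Lemma~\ref{lem:A7}, which is proved separately and then combined with Lemma~\ref{lem:A4} in the main theorem. Your sketch of that part is broadly in line with the paper's argument (expand the recursion, bound the noise/heterogeneity via Lemma~\ref{lem:A3}, choose the Young weight $\gamma=1/q$, and absorb the feedback using $\eta_t\le \rho/(12Lq)$ and $c\le 120L^2/(bN\rho^2)$), so there is no genuine gap---you have simply folded two lemmas into one.
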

\begin{proof}
(1) if $t = s_t = \lfloor t / q\rfloor q$, we have 
\begin{align}
\sum_{i=1}^{N}\left\|x_{s_{t},i} - \bar{x}_{s_{t}}\right\|^{2}=0 
\end{align}

(2) if  $t \geq \lfloor t / q\rfloor q$, we have
\begin{align}
x_{t,i} = x_{s_{t},i} - \sum_{s=s_t}^{t-1} \eta_{s} A_{s}^{-1} m_{s, i} \quad \bar{x}_{t} = \bar{x}_{s_{t}} - \sum_{s=s_t}^{t-1} \eta_{s} A_{s}^{-1} \bar{m}_{s}   \nonumber
\end{align}

\begin{align}
\sum_{i=1}^{N}\left\|x_{t,i} - \bar{x}_{t}\right\|^{2} &= \sum_{i=1}^{N}\left\|x_{s_{t},i} - \bar{x}_{s_{t}} - \left(\sum_{s = s_t}^{t-1} \eta_{s} A_{s}^{-1} m_{s,i} - \sum_{s=s_{t}}^{t-1}\eta_{s} A_{s}^{-1} \bar{m}_{s}\right)\right\|^{2} \nonumber\\
&= \sum_{i=1}^{N}\left\|\sum_{s=s_{t}}^{t-1} \eta_{s} A_{s}^{-1}\left[m_{s,i}-\bar{m}_{s}\right]\right\|^{2} \nonumber\\
&\leq (q-1) \sum_{s=s_{t}}^{t-1} \eta_{s}^{2} \sum_{i=1}^{N}\|A_{s}^{-1}(m_{s,i}-\bar{m}_{s})\|^{2}
\end{align}
\end{proof}

\begin{lemma} \label{lem:A5}
Suppose the sequence $\{x_t\}_0^{T}$ be generated from Algorithms 1. We have
\begin{align}
\mathbb{E}f(\bar{x}_{t+1}) &  \leq \mathbb{E}f(\bar{x}_{t}) - (\frac{3 \rho}{4 \eta_t} - \frac{L}{2})\mathbb{E}\|\bar{x}_{t+1} - \bar{x}_{t}\|^{2} - \frac{\eta_t}{4\rho}\mathbb{E}\|\nabla f(\bar{x}_{t}) - \bar{m}_{t}\|^{2} + \frac{5 \eta_t}{2 \rho}\mathbb{E}\|\bar{g}_t - \bar{m}_{t} \|^{2} \nonumber\\
&+  \frac{5 \eta_{t}L^{2}}{2 \rho N} \mathbb{E}\|x_t - \mathbf{1} \otimes  \bar{x}_{t}\|^{2}
\end{align}
\end{lemma}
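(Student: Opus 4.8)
The plan is to establish a one-step descent inequality for the averaged iterate $\bar x_t$ by combining $L$-smoothness of $f$ with the explicit form of the averaged update, and then to repackage the resulting cross-term error so that it splits into a negative $\|\nabla f(\bar x_t)-\bar m_t\|^2$ piece plus the momentum-tracking and consensus-error pieces appearing in the claim.

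First I would invoke the identity recorded just before Lemma~\ref{lem:A2}, namely $\bar x_{t+1}-\bar x_t = -\eta_t A_t^{-1}\bar m_t$; note this holds uniformly on communication and non-communication rounds because $A_t$ is frozen across each inner loop and averaging commutes with the linear local update. Since $f$ is $L$-smooth (Assumption~\ref{ass:2}),
\[
f(\bar x_{t+1}) \le f(\bar x_t) + \langle \nabla f(\bar x_t),\, \bar x_{t+1}-\bar x_t\rangle + \tfrac{L}{2}\|\bar x_{t+1}-\bar x_t\|^2 .
\]
Split the inner product as $\langle \bar m_t, \bar x_{t+1}-\bar x_t\rangle + \langle \nabla f(\bar x_t)-\bar m_t, \bar x_{t+1}-\bar x_t\rangle$. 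For the first summand, substitute $\bar m_t = -\eta_t^{-1}A_t(\bar x_{t+1}-\bar x_t)$ and use Assumption~\ref{ass:4} ($A_t$ diagonal, $\lambda_{\min}(A_t)\ge\rho$) to obtain $\langle \bar m_t, \bar x_{t+1}-\bar x_t\rangle \le -\tfrac{\rho}{\eta_t}\|\bar x_{t+1}-\bar x_t\|^2$. For the second summand, apply Young's inequality with parameter $\epsilon = 2\eta_t/\rho$, giving $\langle \nabla f(\bar x_t)-\bar m_t, \bar x_{t+1}-\bar x_t\rangle \le \tfrac{\eta_t}{\rho}\|\nabla f(\bar x_t)-\bar m_t\|^2 + \tfrac{\rho}{4\eta_t}\|\bar x_{t+1}-\bar x_t\|^2$. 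Collecting these with the smoothness term yields $f(\bar x_{t+1}) \le f(\bar x_t) - (\tfrac{3\rho}{4\eta_t}-\tfrac{L}{2})\|\bar x_{t+1}-\bar x_t\|^2 + \tfrac{\eta_t}{\rho}\|\nabla f(\bar x_t)-\bar m_t\|^2$.

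The key bookkeeping step is to write $\tfrac{\eta_t}{\rho}\|\nabla f(\bar x_t)-\bar m_t\|^2 = \tfrac{5\eta_t}{4\rho}\|\nabla f(\bar x_t)-\bar m_t\|^2 - \tfrac{\eta_t}{4\rho}\|\nabla f(\bar x_t)-\bar m_t\|^2$, leaving the negative half as the $-\tfrac{\eta_t}{4\rho}\|\nabla f(\bar x_t)-\bar m_t\|^2$ term of the claim, and to expand the positive half through the consensus gradient $\bar g_t = \tfrac1N\sum_{i}\nabla f_i(x_{t,i})$: $\|\nabla f(\bar x_t)-\bar m_t\|^2 \le 2\|\bar g_t-\bar m_t\|^2 + 2\|\nabla f(\bar x_t)-\bar g_t\|^2$. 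The first term is carried as the momentum-tracking error; for the second, Jensen's inequality together with per-node $L$-smoothness give $\|\nabla f(\bar x_t)-\bar g_t\|^2 = \big\|\tfrac1N\sum_i(\nabla f_i(\bar x_t)-\nabla f_i(x_{t,i}))\big\|^2 \le \tfrac{L^2}{N}\sum_i\|\bar x_t-x_{t,i}\|^2 = \tfrac{L^2}{N}\|x_t - \mathbf{1}\otimes\bar x_t\|^2$. Using $\tfrac{5\eta_t}{4\rho}\cdot 2 = \tfrac{5\eta_t}{2\rho}$ reproduces exactly the two error coefficients in the statement; taking expectations completes the proof.

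The only thing requiring care, rather than a real obstacle, is getting the constants to land exactly as stated: the Young parameter $\epsilon = 2\eta_t/\rho$ must be matched with the $5/4$--$1/4$ split so that $\tfrac{3\rho}{4\eta_t}$, $\tfrac{\eta_t}{4\rho}$, $\tfrac{5\eta_t}{2\rho}$ and $\tfrac{5\eta_t L^2}{2\rho N}$ all appear simultaneously. Beyond that the argument only uses Cauchy--Schwarz/Young, Jensen, and Assumptions~\ref{ass:2} and~\ref{ass:4}; Assumptions~\ref{ass:1},~\ref{ass:3} and~\ref{ass:5} are not needed at this stage and enter only in the companion lemmas (e.g.\ the bound on $\|A_t\|$ and the momentum-error recursion).
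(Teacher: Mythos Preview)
Your proposal is correct and follows essentially the same approach as the paper: $L$-smoothness descent, splitting the inner product into $\langle \bar m_t,\bar x_{t+1}-\bar x_t\rangle$ (bounded via $A_t\succeq\rho I_d$) and the cross term (bounded via Young with the same parameter), then the $\tfrac{5}{4}$--$\tfrac{1}{4}$ split and the insertion of $\bar g_t$ with Jensen and per-node smoothness. The constants and the logic match the paper's proof line by line.
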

\begin{proof}
\begin{align}
f(\bar{x}_{t+1}) &\leq f(\bar{x}_{t})+\langle\nabla f(\bar{x}_{t}), \bar{x}_{t+1}-\bar{x}_{t}\rangle+\frac{L}{2}\left\|\bar{x}_{t+1}-\bar{x}_{t}\right\|^{2}  \nonumber\\
&=f(\bar{x}_{t})+\underbrace{ \langle\nabla f(\bar{x}_{t})-\bar{m}_{t}, \bar{x}_{t+1}-\bar{x}_{t}\rangle}_{(1)}+\underbrace{\langle \bar{m}_{t}, \bar{x}_{t+1}-\bar{x}_{t}\rangle}_{(2)}+\frac{L}{2}\|\bar{x}_{t+1}-\bar{x}_{t}\|^{2}
\end{align}

For the term (1), by the Cauchy-Schwarz inequality and Young's inequality, we have
\begin{align}
(1) &= \langle\nabla f(\bar{x}_{t})-\bar{m}_{t}, \bar{x}_{t+1}-\bar{x}_{t}\rangle \nonumber\\
&\leq \|\nabla f\left(\bar{x}_{t}\right)-\bar{m}_{t}\|\|\bar{x}_{t+1}-\bar{x}_{t}\|  \nonumber \\
&\leq \frac{\eta_{t}}{\rho}\|\nabla f(\bar{x}_{t})-\bar{m}_{t}\|^{2}+\frac{\rho}{4 \eta_{t}}\|\bar{x}_{t+1}-\bar{x}_{t}\|^{2}
\end{align}

For the term (2), 
$A_{t} = \operatorname{diag}(\sqrt{\bar{v}_{s_t}}+\rho)$,  and $\bar{x}_{t+1} = \bar{x}_{t} - \eta_t A_{t}^{-1} \bar{m}_t$.
According to the definition of $A_t$, and assumption \ref{ass:4}, i.e., $A_t \succ \rho I_{d}$ for any $t\geq 1$,
we have
\begin{align}
\langle \bar{m}_{t}, \frac{1}{\eta_{t}}(\bar{x}_{t}-\bar{x}_{t+1})\rangle \geq \rho\|\frac{1}{\eta_{t}}(\bar{x}_{t}-\bar{x}_{t+1})\|^{2}
\end{align}
Then we obtain
\begin{align}
(2)  = \langle\bar{m}_{t}, \bar{x}_{t+1}-\bar{x}_{t}\rangle \leq - \frac{\rho}{\eta_{t}}\|\bar{x}_{t+1}-\bar{x}_{t}\|^{2}
\end{align}

Then we have 
\begin{align}
f(\bar{x}_{t+1})  &\leq f(\bar{x}_{t})  + \frac{\eta_{t}}{\rho}\|\nabla f(\bar{x}_{t}) - \bar{m}_{t}\|^{2} + \frac{\rho}{4 \eta_{t}}\|\bar{x}_{t+1} - \bar{x}_{t}\|^{2} - \frac{\rho}{\eta_{t}}\|\bar{x}_{t+1}-\bar{x}_{t}\|^{2}+\frac{L}{2}\|\bar{x}_{t+1}-\bar{x}_{t}\|^{2} \nonumber\\
&\leq f(\bar{x}_{t}) -  \frac{\eta_{t}}{4\rho}\|\nabla f(\bar{x}_{t})-\bar{m}_{t}\|^{2} + \frac{5\eta_{t}}{4\rho}\|\nabla f(\bar{x}_{t})-\bar{m}_{t}\|^{2}-( \frac{3\rho}{4\eta_{t}}-\frac{L}{2})\|\bar{x}_{t+1}-\bar{x}_{t}\|^{2}  \nonumber\\
&\leq f(\bar{x}_{t})-( \frac{3\rho}{4\eta_{t}}-\frac{L}{2})\|\bar{x}_{t+1}-\bar{x}_{t}\|^{2} -  \frac{\eta_t}{4\rho}\|\nabla f(\bar{x}_{t})-\bar{m}_{t}\|^{2} + \frac{5 \eta_t}{2 \rho}\|\bar{g}_t - \bar{m}_{t} \|^{2} \nonumber\\
&+  \frac{5 \eta_{t}}{2 \rho} \| \nabla f(\bar{x}_{t})- \bar{g}_t\|^{2}
\end{align}

Taking expectation on both sides and considering the last term
\begin{align}
 \mathbb{E}\| \nabla f(\bar{x}_{t}) - \bar{g}_t \|^{2} &\leq \frac{1}{N} \sum_{i=1}^{N} \mathbb{E}\|\nabla f_i(\bar{x}_{t}) - g_{t,i}\|^{2} \nonumber\\
 &\leq \frac{L^{2}}{N} \sum_{i=1}^{N} \mathbb{E}\|x_{t,i} - \bar{x}_{t}\|^{2} \nonumber\\
 & = \frac{L^{2}}{N}\mathbb{E}\|x_t - \mathbf{1} \otimes \bar{x}_{t}\|^{2}
\end{align}

Therefore, we obtain
\begin{align}
\mathbb{E}f(\bar{x}_{t+1}) &  \leq \mathbb{E}f(\bar{x}_{t}) - (\frac{3 \rho}{4 \eta_t} - \frac{L}{2})\mathbb{E}\|\bar{x}_{t+1} - \bar{x}_{t}\|^{2} - \frac{\eta_t}{4\rho}\mathbb{E}\|\nabla f(\bar{x}_{t}) - \bar{m}_{t}\|^{2} + \frac{5 \eta_t}{2 \rho}\mathbb{E}\|\bar{g}_t - \bar{m}_{t} \|^{2} \nonumber\\
&+  \frac{5 \eta_{t}L^{2}}{2 \rho N} \mathbb{E}\|x_t - \mathbf{1} \otimes \bar{x}_{t}\|^{2}
\end{align}
\end{proof}
\begin{lemma} \label{lem:A6}
Assume that the stochastic partial derivatives $m_{t}$  be generated from Algorithm \ref{alg:2}, we have
\begin{align}
\mathbb{E}\|\bar{m}_{t} - \bar{g}_t\|^2 = (1-\alpha_{t})^{2}\mathbb{E}\|\bar{m}_{t-1} - \bar{g}_{t-1}\|^{2} +  \frac{2(1 - \alpha_t)^2 L^2}{N^2 b} \mathbb{E}\|x_{t} - x_{t-1}\|^2
 +  \frac{2\alpha_{t}^2\sigma^2}{Nb}    
\end{align}
\end{lemma}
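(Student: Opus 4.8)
The plan is to derive a one-step STORM-type recursion for the gradient-tracking error $\bar m_t-\bar g_t$ and then bound the associated noise variance by decomposing it first across worker nodes and then across the samples in each mini-batch. Averaging step~7 of Algorithm~\ref{alg:2} over $i$, and noting that the periodic averaging in steps~10--12 only overwrites the per-node $m_{t,i},v_{t,i}$ by their averages (so $\bar m_t$ is unchanged and $\tfrac1N\sum_i m_{t-1,i}=\bar m_{t-1}$ holds regardless of whether $t$ or $t-1$ is a communication round), one gets $\bar m_t=\hat{\bar g}_t+(1-\alpha_t)(\bar m_{t-1}-\hat{\bar g}_{t-1}')$, where $\hat{\bar g}_t=\tfrac1N\sum_i\nabla_x f_i(x_{t,i};\mathcal B_{t,i})$ and $\hat{\bar g}_{t-1}'=\tfrac1N\sum_i\nabla_x f_i(x_{t-1,i};\mathcal B_{t,i})$, both using the same fresh batch $\mathcal B_{t,i}$. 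Subtracting $\bar g_t$ and inserting $\pm(1-\alpha_t)\bar g_{t-1}$ yields
$$\bar m_t-\bar g_t=(1-\alpha_t)(\bar m_{t-1}-\bar g_{t-1})+\Delta_t,\qquad \Delta_t:=(\hat{\bar g}_t-\bar g_t)-(1-\alpha_t)(\hat{\bar g}_{t-1}'-\bar g_{t-1}).$$
Letting $\mathcal F_t$ be the $\sigma$-field generated by all randomness up to and including the choice of $x_{t,i}$ (so $x_{t,i},x_{t-1,i},g_{t,i},g_{t-1,i},\bar m_{t-1}$ are $\mathcal F_t$-measurable but $\mathcal B_{t,i}$ is fresh), Assumption~\ref{ass:1} gives $\mathbb E[\Delta_t\mid\mathcal F_t]=0$, so the cross term vanishes and $\mathbb E\|\bar m_t-\bar g_t\|^2=(1-\alpha_t)^2\mathbb E\|\bar m_{t-1}-\bar g_{t-1}\|^2+\mathbb E\|\Delta_t\|^2$.

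Next I would bound $\mathbb E\|\Delta_t\|^2$. Writing $\Delta_t=\tfrac1N\sum_i\Delta_{t,i}$ with $\Delta_{t,i}=(\nabla_x f_i(x_{t,i};\mathcal B_{t,i})-g_{t,i})-(1-\alpha_t)(\nabla_x f_i(x_{t-1,i};\mathcal B_{t,i})-g_{t-1,i})$, the $\Delta_{t,i}$ are conditionally independent across $i$ with mean zero (distinct nodes draw independent batches), so $\mathbb E\|\Delta_t\|^2=\tfrac1{N^2}\sum_i\mathbb E\|\Delta_{t,i}\|^2$. Within a node, $\Delta_{t,i}=\tfrac1b\sum_{\xi\in\mathcal B_{t,i}}\delta_{t,i}(\xi)$ is a sum of i.i.d.\ conditionally-mean-zero terms, hence $\mathbb E\|\Delta_{t,i}\|^2=\tfrac1b\mathbb E\|\delta_{t,i}(\xi)\|^2$ for one sample. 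Finally I would split $\delta_{t,i}(\xi)=\alpha_t(\nabla_x f_i(x_{t,i};\xi)-g_{t,i})+(1-\alpha_t)\big[(\nabla_x f_i(x_{t,i};\xi)-\nabla_x f_i(x_{t-1,i};\xi))-(g_{t,i}-g_{t-1,i})\big]$ and apply $\|a+b\|^2\le2\|a\|^2+2\|b\|^2$: the first piece is bounded by $\sigma^2$ (Assumption~\ref{ass:1}), and for the second the variance is at most the second moment $\mathbb E\|\nabla_x f_i(x_{t,i};\xi)-\nabla_x f_i(x_{t-1,i};\xi)\|^2\le L^2\|x_{t,i}-x_{t-1,i}\|^2$ by per-sample $L$-smoothness (Assumption~\ref{ass:2}). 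This gives $\mathbb E\|\delta_{t,i}(\xi)\|^2\le2\alpha_t^2\sigma^2+2(1-\alpha_t)^2L^2\mathbb E\|x_{t,i}-x_{t-1,i}\|^2$; summing over nodes with $\|x_t-x_{t-1}\|^2=\sum_i\|x_{t,i}-x_{t-1,i}\|^2$ yields $\mathbb E\|\Delta_t\|^2\le\tfrac{2\alpha_t^2\sigma^2}{Nb}+\tfrac{2(1-\alpha_t)^2L^2}{N^2b}\mathbb E\|x_t-x_{t-1}\|^2$, and plugging this into the recursion produces the stated bound (which is really the inequality form of the claimed identity).

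The step I expect to be the main obstacle is getting the conditioning exactly right: one must choose $\mathcal F_t$ so that $x_{t,i}$ and $x_{t-1,i}$ (hence $g_{t,i},g_{t-1,i},\bar m_{t-1}$) are measurable while $\mathcal B_{t,i}$ is still fresh, and then exploit that the \emph{same} batch $\mathcal B_{t,i}$ enters both $\hat g_{t,i}$ and $\hat g_{t-1,i}$ — this is precisely what makes $\mathbb E[\Delta_t\mid\mathcal F_t]=0$ and what legitimizes applying the $L$-smoothness bound sample-wise inside the variance (rather than losing a factor to $\sigma^2$). A secondary point requiring care is verifying, as noted above, that the periodic averaging steps do not disturb the recursion, so that the lemma holds uniformly in $t$ whether or not $t$ is a communication round.
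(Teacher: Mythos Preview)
Your proposal is correct and follows essentially the same route as the paper's proof: write $\bar m_t-\bar g_t=(1-\alpha_t)(\bar m_{t-1}-\bar g_{t-1})+\Delta_t$ with $\mathbb E[\Delta_t\mid\mathcal F_t]=0$, use independence across nodes to get the $1/N^2$, then split each per-node noise into the $\alpha_t$-weighted pure-noise piece (bounded via Assumption~\ref{ass:1}) and the $(1-\alpha_t)$-weighted difference piece (bounded via per-sample smoothness), with the mini-batch contributing the $1/b$ factor. The only cosmetic difference is that the paper applies the $2\|a\|^2+2\|b\|^2$ split at the mini-batch level before reducing to single samples, whereas you first reduce $\Delta_{t,i}$ to a single-sample $\delta_{t,i}(\xi)$ and then split; both orders give the same bound, and your explicit handling of the filtration and of the periodic-averaging step is more careful than the paper's.
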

\begin{proof}
Recall that $ \bar{m}_{t} = \frac{1}{N}\sum_{i=1}^{N} [\nabla_x f_i(x_{t,i}; \mathcal{B}_{t,i}) + (1 - \alpha_{t})(\bar{m}_{t-1} - \nabla_x f_i (x_{t-1,i}; \mathcal{B}_{t,i}))]$, we have
\begin{align}
\mathbb{E}\|\bar{m}_{t} - \bar{g}_t\|^2 &= \mathbb{E}\|\frac{1}{N}\sum_{i=1}^{N} [\nabla_x f_i (x_{t,i}; \mathcal{B}_{t,i}) + (1-\alpha_{t})(\bar{m}_{t-1}  - \frac{1}{N}\sum_{i=1}^{N} \nabla_x f_i(x_{t-1,i}; \mathcal{B}_{t,i}))] - \bar{g}_t \|^{2} \nonumber\\
&= \mathbb{E}\|\frac{1}{N}\sum_{i=1}^{N} [(\nabla_x f_i (x_{t,i}; \mathcal{B}_{t,i}) - g_{t,i}) -(1-\alpha_{t})(f_i (x_{t-1,i}; \mathcal{B}_{t,i}) - g_{t-1,i} ] +  (1-\alpha_{t})(\bar{m}_{t-1} - \bar{g}_{t-1})\|^2
\end{align}
Given that $\mathbb{E}[(\nabla_x f_i (x_{t,i}; \mathcal{B}_{t,i}) - g_{t,i}) -(1-\alpha_{t})(f_i (x_{t-1,i}; \mathcal{B}_{t,i}) - g_{t-1,i} )] = 0$
\begin{align}
\mathbb{E}\|\bar{m}_{t} - \bar{g}_t\|^2 &= (1-\alpha_{t})^{2}\mathbb{E}\|\bar{m}_{t-1} - \bar{g}_{t-1}\|^{2} + \frac{1}{N^2} \sum_{i=1}^{N}\mathbb{E} \|(\nabla_x f_i (x_{t,i}; \mathcal{B}_{t,i}) - g_{t,i}) \nonumber\\
&-(1-\alpha_{t})(f_i (x_{t-1,i}; \mathcal{B}_{t,i}) - g_{t-1,i})\|^2 \nonumber\\
&=(1-\alpha_{t})^{2}\mathbb{E}\|\bar{m}_{t-1} - \bar{g}_{t-1}\|^{2} + \frac{1}{N^2} \sum_{i=1}^{N} \mathbb{E}\|(1 - \alpha_t)[(\nabla_x f_i (x_{t,i}; \mathcal{B}_{t,i}) - g_{t,i}) \nonumber\\
&- (\nabla_x f_i (x_{t-1,i}; \mathcal{B}_{t,i}) - g_{t-1,i})] + \alpha_t(\nabla_x f_i (x_{t,i}; \mathcal{B}_{t,i}) - g_{t,i}) \|^2 \nonumber\\
&\leq (1-\alpha_{t})^{2}\mathbb{E}\|\bar{m}_{t-1} - \bar{g}_{t-1}\|^{2} + \frac{2(1 - \alpha_t)^2}{N^2} \sum_{i=1}^{N} \mathbb{E}\|(\nabla_x f_i (x_{t,i}; \mathcal{B}_{t,i}) - g_{t,i}) \nonumber\\
&- (\nabla_x f_i (x_{t-1,i}; \mathcal{B}_{t,i}) - g_{t-1,i})\|^2 +  \frac{2\alpha_t^2}{N^2}\sum_{i=1}^{N} \mathbb{E}\|\nabla_x f_i (x_{t,i}; \mathcal{B}_{t,i}) - g_{t,i} \|^2 \nonumber\\
&\leq (1-\alpha_{t})^{2}\mathbb{E}\|\bar{m}_{t-1} - \bar{g}_{t-1}\|^{2} +  \frac{2(1 - \alpha_t)^2}{N^2b^2} \sum_{i=1}^{N}\sum_{\xi_{t,i} \in \mathcal{B}_{t,i}} \mathbb{E}\|\nabla f_i (x_{t,i} ; \xi_{t,i}) \nonumber\\
&-\nabla f_i(x_{t-1, i} ; \xi_{t,i})\|^2 + \frac{2\alpha_t^2 }{N^2}\sum_{i=1}^{N} \mathbb{E}\|\nabla_x f_i (x_{t,i}; \mathcal{B}_{t,i}) - g_{t,i} \|^2 \nonumber\\
&\leq (1-\alpha_{t})^{2}\mathbb{E}\|\bar{m}_{t-1} - \bar{g}_{t-1}\|^{2} +  \frac{2(1 - \alpha_t)^2 L^2}{N^2b} \sum_{i=1}^{N} \mathbb{E}\|x_{t,i} - x_{t-1,i}\|^2
 + \frac{2\alpha_{t}^2\sigma^2}{Nb} \nonumber\\
 &= (1-\alpha_{t})^{2}\mathbb{E}\|\bar{m}_{t-1} - \bar{g}_{t-1}\|^{2} +  \frac{2(1 - \alpha_t)^2 L^2}{N^2 b} \mathbb{E}\|x_{t} - x_{t-1}\|^2
 +  \frac{2\alpha_{t}^2\sigma^2}{Nb} 
 \end{align}
 where the last inequality is due to the Assumption \ref{ass:2} and Lemma \ref{lem:A2}
\end{proof}
\begin{lemma} \label{lem:A7}
Assume that the stochastic partial derivatives $m_t$ be generated from Algorithm \ref{alg:2}, we have
\begin{align}
\frac{15\rho}{72N}\sum_{t=s_t}^{\bar{s}} \eta_{t} \sum_{i=1}^{N} \mathbb{E} \|A_{t}^{-1}(m_{t,i}-\bar{m}_{t}) \|^{2} \leq \frac{\rho}{8} \sum_{t=s_t}^{\bar{s}} \frac{1}{\eta_{t}} \mathbb{E}\|\bar{x}_{t+1} - \bar{x}_t\|^{2} + \left[\frac{\rho \sigma^{2} c^{2}}{8 b L^{2}} + \frac{3 \rho \zeta^{2} c^{2}}{8 L^{2}}\right] \sum_{t=s_t}^{\bar{s}} \eta_{t}^{3}
\end{align}
\end{lemma}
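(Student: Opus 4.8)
The plan is to control the accumulated "momentum consensus error" $\sum_{t=s_t}^{\bar s}\eta_t\sum_{i=1}^N\mathbb E\|A_t^{-1}(m_{t,i}-\bar m_t)\|^2$ over a single communication round $[s_t,\bar s]$, exploiting that at the synchronization step Algorithm~\ref{alg:2} sets $m_{s_t,i}=\bar m_{s_t}$, so this error starts at zero. First, since $A_t$ is frozen at $A_{s_t}$ inside the round and $\lambda_{\min}(A_t)\ge\rho$ by Assumption~\ref{ass:4}, we have $\|A_t^{-1}v\|\le\tfrac1\rho\|v\|$, so it suffices to bound $\delta_t:=\sum_{i=1}^N\mathbb E\|m_{t,i}-\bar m_t\|^2$; the $\tfrac1{\rho^2}$ produced here, times the weight $\tfrac{15\rho}{72N}$, accounts for the $\rho$-scaling of the claimed bound.

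Next I would subtract the averaged momentum recursion from the local one (Step~7 of Algorithm~\ref{alg:2}):
\[
m_{t,i}-\bar m_t=(1-\alpha_t)(m_{t-1,i}-\bar m_{t-1})+[(\hat g_{t,i}-\bar{\hat g}_t)-(1-\alpha_t)(\hat g_{t-1,i}-\bar{\hat g}_{t-1})],
\]
and split the bracket as $(1-\alpha_t)[(\hat g_{t,i}-\hat g_{t-1,i})-(\bar{\hat g}_t-\bar{\hat g}_{t-1})]+\alpha_t(\hat g_{t,i}-\bar{\hat g}_t)$. Young's inequality with weight $\tfrac{\alpha_t}{1-\alpha_t}$ gives $\|m_{t,i}-\bar m_t\|^2\le(1-\alpha_t)\|m_{t-1,i}-\bar m_{t-1}\|^2+\tfrac1{\alpha_t}\|\text{bracket}\|^2$, and unrolling from $s_t$ (where the error vanishes) expresses $\delta_t$ as a sum of bracket-contributions over the round. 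For the Lipschitz piece I would use Assumption~\ref{ass:2}, noting that Algorithm~\ref{alg:2} evaluates $\hat g_{t,i}$ and $\hat g_{t-1,i}$ on the \emph{same} mini-batch $\mathcal B_{t,i}$, so $\mathbb E\|\hat g_{t,i}-\hat g_{t-1,i}\|^2\le L^2\mathbb E\|x_{t,i}-x_{t-1,i}\|^2=L^2\eta_{t-1}^2\mathbb E\|A_{t-1}^{-1}m_{t-1,i}\|^2$, and then split $\|A_{t-1}^{-1}m_{t-1,i}\|^2\le2\|A_{t-1}^{-1}(m_{t-1,i}-\bar m_{t-1})\|^2+\tfrac2{\eta_{t-1}^2}\|\bar x_t-\bar x_{t-1}\|^2$ (using $\bar x_t-\bar x_{t-1}=-\eta_{t-1}A_{t-1}^{-1}\bar m_{t-1}$): the first part is a fed-back $\delta$-term, while the second, after reindexing $t-1\mapsto t$ and exchanging the order of the double summation (and monotonicity of $\eta_t$), collects into $\tfrac\rho8\sum_t\tfrac1{\eta_t}\|\bar x_{t+1}-\bar x_t\|^2$. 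For the residual piece $\alpha_t(\hat g_{t,i}-\bar{\hat g}_t)$ I would invoke Lemma~\ref{lem:A3}: $\sum_i\mathbb E\|\hat g_{t,i}-\bar{\hat g}_t\|^2$ is $O(N\sigma^2/b+N\zeta^2+L^2\sum_i\|x_{t,i}-\bar x_t\|^2)$, and since $\alpha_t=c\eta_{t-1}^2$ this produces exactly the $\big[\tfrac{\rho\sigma^2c^2}{8bL^2}+\tfrac{3\rho\zeta^2c^2}{8L^2}\big]\sum_t\eta_t^3$ tail, the consensus part $L^2\sum_i\|x_{t,i}-\bar x_t\|^2$ being absorbed via Lemma~\ref{lem:A4} as a still-higher order term in $\eta$.

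The final step is to multiply the resulting inequality by $\tfrac{15\rho}{72N}\eta_t$, sum over $t\in[s_t,\bar s]$, and absorb all the $\delta$-type terms that surfaced on the right-hand side (from the $(1-\alpha_t)$ carry-over, from the Lipschitz piece, and from Lemma~\ref{lem:A4}) back into the left-hand side. I expect the absorption to be the main obstacle, and it is precisely what dictates the constants $c$, $w$ and $\bar h=N^{2/3}/L$ in the step-size rule~\eqref{eq:5}: the coefficients of the fed-back $\delta$-terms are of order $\eta_tL^2/\rho$ and $\eta_{t-1}^2L^2/(\alpha_t\rho^2)$ times combinatorial factors like $q$, and one must verify that the schedule (with $c\le\tfrac{120L^2}{bN\rho}$ and $w\ge\tfrac32$) keeps their total strictly below the left-hand coefficient $\tfrac{15}{72}=\tfrac5{24}$, so that a fixed positive fraction of the left side survives and the stated inequality follows with the clean constant $\tfrac\rho8$ and the $\eta_t^3$ tail. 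Everything else reduces to Young's inequality, Cauchy–Schwarz over the at most $q$ inner-loop steps, and index shifts justified by monotonicity of $\eta_t$.
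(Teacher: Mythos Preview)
Your high-level plan (unroll the consensus error from the synchronization point, split off Lipschitz and variance pieces, reabsorb the fed-back consensus terms) is correct and matches the paper. However, there is a genuine gap at the step where you apply Young's inequality.

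You choose the Young weight $\gamma=\alpha_t/(1-\alpha_t)$ so that the carry-over becomes exactly $(1-\alpha_t)$, but the price $1+1/\gamma=1/\alpha_t$ then multiplies the bracket. The Lipschitz part of the bracket is $O(L^2\eta_{t-1}^2)$ times $\sum_i\|A_{t-1}^{-1}m_{t-1,i}\|^2$, and since $\alpha_t=c\eta_{t-1}^2$, the prefactor becomes $L^2/c$, which is \emph{not} small: with $c\le 120L^2/(bN\rho^2)$ one has $L^2/(c\rho^2)\ge bN/120$. After splitting $\|A_{t-1}^{-1}m_{t-1,i}\|^2$ into a consensus piece and the $\|\bar x_t-\bar x_{t-1}\|^2$ piece, the consensus piece feeds back a $\delta_{t-1}$ with coefficient of order $N$, so the effective recursion reads $\delta_t\le(1-\alpha_t+O(N))\delta_{t-1}+\cdots$. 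Unrolling over up to $q$ inner steps then produces a factor $(1+O(N))^q$, and the absorption you describe cannot be carried out. Relatedly, the variance tail you obtain has coefficient $\alpha_t/\alpha_t\cdot\alpha_t=c\eta_{t-1}^2$, i.e.\ linear in $c$, whereas the lemma's statement has $c^2$; this discrepancy is another symptom of the wrong Young weight.

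The paper's proof takes $\gamma=1/q$ instead. Then the explicit carry-over is only $(1+\gamma)(1-\alpha_t)^2\le 1+1/q$, while the price $1+1/\gamma=1+q$ on the bracket is neutralized by the standing bound $\eta_t\le\rho/(12Lq)$: the Lipschitz feedback contributes at most $\tfrac{4L^2}{\rho^2}(1+q)\eta_{t-1}^2\le\tfrac{1}{18q}$, so the total carry-over is $1+\tfrac{19}{18q}$ and after $q$ steps is bounded by $e^{19/18}\le3$. With this choice all the fed-back consensus terms come with small coefficients (the paper checks $1-\tfrac{144L^2q^4c^2}{\rho^2}(\tfrac{\rho}{12Lq})^6\ge\tfrac{22}{72}$, using $c\le120L^2/(bN\rho^2)$), and the variance piece retains its $\alpha_t^2$ factor, producing the $c^2$ in the tail. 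The rest of your outline (Lemma~\ref{lem:A3}, Lemma~\ref{lem:A4}, Cauchy--Schwarz over at most $q$ steps, index shifts via monotonicity of $\eta_t$) is exactly what the paper does once the right Young weight is in place.
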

\begin{proof}
\begin{align} \label{eq:30}
\sum_{i=1}^{N}\mathbb{E}\|A_{t}^{-1}(m_{t,i}-\bar{m}_{t}) \|^{2} &\leq \sum_{i=1}^{N}\mathbb{E} \|A_{t}^{-1} \left[ \nabla_x f_i (x_{t,i}; \mathcal{B}_{t,i}) ] + (1-\alpha_{t})(m_{t-1,i} - \nabla_x f_i(x_{t-1,i}; \mathcal{B}_{t,i})) \right. \nonumber\\
&- \frac{1}{N} \sum_{i=1}^{N}[\nabla_x f_i (x_{t,i}; \mathcal{B}_{t,i}) + (1-\alpha_{t})(m_{t-1,i} - \nabla_x f_i(x_{t-1,i}; \mathcal{B}_{t,i}))]] \|^{2} \nonumber\\
&= \sum_{i=1}^{N} \mathbb{E} \|A_{t}^{-1}[(1-\alpha_{t})(m_{t-1,i} - \bar{m}_{t-1}) + [ \nabla_x f_i (x_{t,i}; \mathcal{B}_{t,i}) - \frac{1}{N} \sum_{i=1}^{N}  \nabla_x f_i (x_{t,i}; \mathcal{B}_{t,i}) \nonumber\\
&- (1-\alpha_{t})(\nabla_x f_i (x_{t-1,i}; \mathcal{B}_{t,i}) - \frac{1}{N} \sum_{i=1}^{N}  \nabla_x f_i (x_{t-1,i}; \mathcal{B}_{t,i}))]] \|^{2} \nonumber\\
&\leq (1 + \gamma)(1 - \alpha_{t})^{2} \sum_{i=1}^{N} \mathbb{E}\|A_{t}^{-1}(m_{t-1,i} - \bar{m}_{t-1}) \|^{2} + (1 + \frac{1}{\gamma}) \frac{1}{\rho^{2} } \mathbb{E} \| [\nabla_x f_i (x_{t,i}; \mathcal{B}_{t,i}) \nonumber\\
-& \frac{1}{N} \sum_{i=1}^{N} \nabla_x f_i (x_{t,i}; \mathcal{B}_{t,i})] - (1-\alpha_{t})[ \nabla_x f_i (x_{t-1,i}; \mathcal{B}_{t,i}) - \frac{1}{N} \sum \nabla_x f_i (x_{t-1,i}; \mathcal{B}_{t,i})] \|^{2}.
\end{align}
where the second inequality is due to Young's inequality and $A_t \succ \rho I_{d}$. For the second term, we have 
\begin{align} \label{eq:31}
& \sum_{i=1}^{N} \mathbb{E} \| \nabla_x f_i (x_{t,i}; \mathcal{B}_{t,i}) - \frac{1}{N} \sum_{i=1}^{N} \nabla_x f_i (x_{t,i}; \mathcal{B}_{t,i})
- (1-\alpha_{t})[ \nabla_x f_i (x_{t-1,i}; \mathcal{B}_{t,i}) - \frac{1}{N} \sum_{i=1}^{N} \nabla_x f_i (x_{t-1,i}; \mathcal{B}_{t,i})] \|^{2}\nonumber\\
=&\sum_{i=1}^{N} \mathbb{E} \|[\nabla_x f_i (x_{t,i}; \mathcal{B}_{t,i}) - \frac{1}{N} \sum_{i=1}^{N} \nabla_x f_i (x_{t,i}; \mathcal{B}_{t,i})] - [\nabla_x f_i (x_{t-1,i}; \mathcal{B}_{t,i}) - \frac{1}{N} \sum_{i=1}^{N} \nabla_x f_i (x_{t-1,i}; \mathcal{B}_{t,i})] \nonumber\\
+& a_{t}[\nabla_x f_i (x_{t-1,i}; \mathcal{B}_{t,i}) - \frac{1}{N} \sum_{i=1}^{N} \nabla_x f_i (x_{t-1,i}; \mathcal{B}_{t,i})] \|^{2} \nonumber\\
\leq& 2 \sum_{i=1}^{N} \mathbb{E} \|[\nabla_x f_i (x_{t,i}; \mathcal{B}_{t,i}) - \frac{1}{N} \sum_{i=1}^{N} \nabla_x f_i (x_{t,i}; \mathcal{B}_{t,i})] - [\nabla_x f_i (x_{t-1,i}; \mathcal{B}_{t,i}) \nonumber\\
-& \frac{1}{N} \sum_{i=1}^{N} \nabla_x f_i (x_{t-1,i}; \mathcal{B}_{t,i})]\|^{2}+2 \alpha_{t}^{2} \sum_{i=1}^{N} \mathbb{E}\|\nabla_x f_i (x_{t-1,i}; \mathcal{B}_{t,i}) - \frac{1}{N} \sum_{i=1}^{N} \nabla_x f_i (x_{t-1,i}; \mathcal{B}_{t,i})\|^{2} \nonumber\\
\leq& 2 \sum_{i=1}^{N} \mathbb{E} \| \nabla_x f_i (x_{t,i}; \mathcal{B}_{t,i}) - \nabla_x f_i (x_{t-1,i}; \mathcal{B}_{t,i}) \|^{2} + 2 \alpha_{t}^{2} \sum_{i=1}^{N} \mathbb{E}\| \nabla_x f_i (x_{t-1,i}; \mathcal{B}_{t,i}) - \frac{1}{N} \sum_{i=1}^{N} \nabla_x f_i (x_{t-1,i}; \mathcal{B}_{t,i})\|^{2} \nonumber\\
\leq& 2 L^{2} \mathbb{E} \|x_{t}-x_{t-1}\|^{2} + 2 \alpha_{t}^{2} \sum_{i=1}^{N} \mathbb{E} \|\nabla_x f_i (x_{t-1,i}; \mathcal{B}_{t,i}) - \frac{1}{N} \sum_{i=1}^{N} \nabla_x f_i (x_{t-1,i}; \mathcal{B}_{t,i})\|^{2}
\end{align}
where the second inequality is due to Lemma \ref{lem:A1}. The last inequality is due to Assumption \ref{ass:2}. For the last term, we have
\begin{align} \label{eq:32}
&\sum_{i=1}^{N} \mathbb{E} \|\nabla_x f_i (x_{t-1,i}; \mathcal{B}_{t,i}) - \frac{1}{N} \sum_{j=1}^{N} \nabla_x f_j (x_{t-1,j}; \mathcal{B}_{t,j})\|^{2} \nonumber\\
=& \sum_{i=1}^{N}\mathbb{E}\| [\nabla_x f_i (x_{t-1,i}; \mathcal{B}_{t,i}) - g_{t-1,i}] - \frac{1}{N} \sum_{j=1}^{N} [\nabla_x f_j (x_{t-1,j}; \mathcal{B}_{t,j}) - g_{t-1,j}] + [g_{t-1,i} - \bar{g}_{t-1}] \|^{2} \nonumber\\
\leq & 2 \sum_{i=1}^{N}\mathbb{E} \| [\nabla_x f_i (x_{t-1,i}; \mathcal{B}_{t,i}) - g_{t-1,i}] - \frac{1}{N} \sum_{j=1}^{N} [\nabla_x f_j (x_{t-1,j}; \mathcal{B}_{t,j}) -  g_{t-1,j}]\|^{2} 
+ 2\sum_{i=1}^{N}\mathbb{E}\|g_{t-1,i} - \bar{g}_{t-1} \|^2 \nonumber\\
\leq& 2 \sum_{i=1}^{N}\mathbb{E} \|\nabla_x f_i (x_{t-1,i}; \mathcal{B}_{t,i}) - g_{t-1,i} \|^2 + 2\sum_{i=1}^{N}\mathbb{E}\|g_{t-1,i} - \bar{g}_{t-1} \|^2 \nonumber\\
\leq& \frac{2 N \sigma^{2}}{b} + 6 N \zeta^{2} + 12 L^{2} \mathbb{E}\|x_{t-1} - \mathbf{1} \otimes \bar{x}_{t-1}\|^{2}
\end{align}
where the second inequality is due to Lemma \ref{lem:A1} and the last inequality is due to Lemma \ref{lem:A2}. Therefore, by combining above inequalities \ref{eq:30}, \ref{eq:31}, \ref{eq:32} and the definition of $A_t$, when $\mod(t,q) \neq 0 $ we have 
\begin{align}
\sum_{i=1}^{N} \mathbb{E} \|A_{t}^{-1} (m_{t,i} - \bar{m}_t)\|^{2} & \leq (1 - \alpha_{t})^{2}(1 + \gamma) \sum_{i=1}^{N} \mathbb{E} \|A_{t-1}^{-1} (m_{t-1,i} - \bar{m}_{t-1})\|^{2} + \frac{2 L^{2}}{ \rho^{2}}(1+\frac{1}{\gamma})  \mathbb{E}\|x_{t}-x_{t-1}\|^{2} \nonumber\\
&+ \frac{4 N \sigma^{2}}{b \rho^{2}}(1+\frac{1}{\gamma}) \alpha_{t}^{2}+\frac{12 N}{\rho^{2}} \zeta^{2}(1 + \frac{1}{\gamma}) \alpha_{t}^{2} + 24 L^{2}(1+\frac{1}{\gamma}) \frac{\alpha_{t}^{2}}{\rho^{2}}  \mathbb{E}\|x_{t-1} - \mathbf{1} \otimes \bar{x}_{t-1}\|^{2} \nonumber\\
&\leq (1-\alpha_{t})^{2}(1 + \gamma) \sum_{i=1}^{N} \mathbb{E} \|A_{t-1}^{-1} (m_{t-1,i} - \bar{m}_{t-1})\|^{2} + \frac{4 N \sigma^{2}}{b \rho^{2}}(1+\frac{1}{\gamma}) \alpha_{t}^{2}  \nonumber\\
&+\frac{12 N}{\rho^{2}} \zeta^{2}(1 + \frac{1}{\gamma}) \alpha_{t}^{2} + \frac{2 L^{2}}{ \rho^{2}}(1+\frac{1}{\gamma})\sum_{i=1}^{N} \mathbb{E}\|\eta_{t-1} A_{t-1}^{-1} m_{t-1,i}\|^{2}  \nonumber\\
&+ 24 L^{2}(1+\frac{1}{\gamma}) \frac{\alpha_{t}^{2}}{\rho^{2}} (q-1) \sum_{s=s_{t}}^{t-1} \eta_{s}^{2} \sum_{i=1}^{N}\mathbb{E}\|A_{s}^{-1}(m_{s,i}-\bar{m}_{s})\|^{2} \nonumber\\
&\leq (1-\alpha_{t})^{2}(1 + \gamma) \sum_{i=1}^{N} \mathbb{E} \|A_{t-1}^{-1} (m_{t-1,i} - \bar{m}_{t-1})\|^{2} \nonumber\\
&+ \frac{4 L^{2}}{ \rho^{2}}(1+\frac{1}{\gamma}) \sum_{i=1}^{N} \mathbb{E}\big[\|\eta_{t-1} A_{t-1}^{-1} (m_{t-1,i} - \bar{m}_{t-1})\|^{2} + \|\eta_{t-1} A_{t-1}^{-1} \bar{m}_{t-1}\|^{2}\big] \nonumber\\
&+ \frac{4 N \sigma^{2}}{b \rho^{2}}(1+\frac{1}{\gamma}) \alpha_{t}^{2}+\frac{12 N}{\rho^{2}} \zeta^{2}(1 + \frac{1}{\gamma}) \alpha_{t}^{2} \nonumber\\
&+ 24 L^{2}(1+\frac{1}{\gamma}) \frac{\alpha_{t}^{2}}{\rho^{2}} (q-1) \sum_{s=s_{t}}^{t-1} \eta_{s}^{2} \sum_{i=1}^{N}\mathbb{E}\|A_{s}^{-1}(m_{s,i}-\bar{m}_{s})\|^{2} 
\end{align}
where the second inequality is due to  Lemma \ref{lem:A3}. Then we have 

\begin{align} \label{eq:35}
\sum_{i=1}^{N}\mathbb{E}\|A_{t}^{-1}(m_{t,i}-\bar{m}_{t})\|^{2} &= [(1-\alpha_{t})^{2}(1 + \gamma) + \frac{4 L^{2}}{\rho^{2}}(1+\frac{1}{\gamma})\eta_{t-1}^2]\sum_{i=1}^{N} \mathbb{E} \|A_{t-1}^{-1} (m_{t-1,i} - \bar{m}_{t-1})\|^{2} \nonumber\\
&+ \frac{4 N L^{2}}{\rho^{2}}(1+\frac{1}{\gamma})\eta_{t-1}^2\mathbb{E}\|A_{t-1}^{-1} \bar{m}_{t-1}\|^{2} 
+ \frac{4 N \sigma^{2}}{b \rho^{2}}(1+\frac{1}{\gamma}) \alpha_{t}^{2}+\frac{12 N}{\rho^{2}} \zeta^{2}(1 + \frac{1}{\gamma}) \alpha_{t}^{2} \nonumber\\
&+ 24 L^{2}(1+\frac{1}{\gamma}) \frac{\alpha_{t}^{2}}{\rho^{2}} (q-1) \sum_{s=s_{t}}^{t-1} \eta_{s}^{2} \sum_{i=1}^{N}\mathbb{E}\|A_{s}^{-1}(m_{s,i}-\bar{m}_{s})\|^{2} 
\end{align}
Set $\gamma=\frac{1}{q}$ and $\eta_{t} \leq \frac{\rho}{12 L q}$, and given that $\alpha_t \in (0,1)$, 
\begin{align} \label{eq:36}
(1-\alpha_t)^{2}(1+\gamma) + \frac{4 L^{2}}{\rho^{2}}(1+\frac{1}{\gamma}) \eta_{t-1}^{2} & \leq
1+\frac{1}{q} + \frac{4 L^{2}}{\rho^{2}}(1+q) \eta_{t-1}^{2} \nonumber\\
& \leq 1 + \frac{1}{q} + \frac{q+1}{36q^{2}} \nonumber\\
& \leq 1+\frac{19}{18 q}
\end{align}
Put the \eqref{eq:36}  in \eqref{eq:35}, and considering $\gamma=\frac{1}{q}$ and $\eta_{t} \leq \frac{\rho}{12 L q}$, we have 
\begin{align}
\sum_{i=1}^{N}\mathbb{E}\|A_{t}^{-1}(m_{t,i}-\bar{m}_{t})\|^{2}
&\leq (1+\frac{19}{18 q}) \sum_{i=1}^{N} \mathbb{E}\|A_{t-1}^{-1}(m_{t-1, i} - \bar{m}_{t-1}) \|^{2} + \frac{4 N L^{2}}{\rho^{2}}(1 + \frac{1}{\gamma}) \eta_{t-1}^{2} \mathbb{E}\|A_{t-1}^{-1} \bar{m}_{t-1}\|^{2} \nonumber\\
&+ \frac{4 N \sigma^{2}}{\rho^{2} b}(1 + \frac{1}{\gamma}) \alpha_{t}^{2} + \frac{12 N \zeta^{2}}{\rho^{2}}(1+\frac{1}{\gamma}) \alpha_{t}^{2} \nonumber\\
&+ 24 L^{2}(1+\frac{1}{\gamma}) \frac{\alpha_t^{2}}{\rho^{2}}(q-1) \sum_{s=s_t}^{t-1} \eta_{s}^{2} \sum_{i=1}^{N}\mathbb{E}\| A_{s}^{-1}(m_{s,i}-\bar{m}_{s})\|^{2} \nonumber\\
&\leq (1+\frac{19}{18 q}) \sum_{i=1}^{N}\mathbb{E}\|A_{t-1}^{-1}(m_{t-1, i}-\bar{m}_{t-1})\|^{2} + \frac{2N L}{3 \rho} \eta_{t-1} \mathbb{E}\|A_{t-1}^{-1} \bar{m}_{t-1}  \|^{2} +  \frac{2N\sigma^2 c^2}{3\rho b L} \eta_{t-1}^{3} \nonumber\\
&+ \frac{2N \zeta^{2} c^{2}}{L \rho} \eta_{t-1}^{3} +48 \frac{L^{2} q^{2} c^{2} \eta_{t-1}^{4}}{\rho^{2}}  \sum_{s=s_t}^{t-1} \eta_{s}^{2} \sum_{i=1}^{N} \mathbb{E}\|A_{s}^{-1}(m_{s,i}-\bar{m}_{s}) \|^{2}
\end{align}

We know that when $\mod(t,q) \neq 0$ (i.e.  $t = s_t$), $\sum_{i=1}^{N}\|A_{t}^{-1} (m_{t,i} - \bar{m}_t)\|^{2}=0$

\begin{align}
\sum_{i=1}^{N} \mathbb{E}\|A_{t}^{-1} (m_{t,i}-\bar{m}_{t}) \|^{2} &\leq \frac{2 N L}{3 \rho} \sum_{s=s_t}^{t-1}(1+\frac{19}{18q})^{t-1-s} \eta_{s} \mathbb{E}\|A_{s}^{-1}\bar{m}_{s} \|^{2} + \left[\frac{2N \sigma^{2} c^{2}}{3 \rho b L} + \frac{2N \zeta^{2} c^{2}}{L \rho}\right] \sum_{s=s_t}^{t-1}(1+\frac{19}{18 q})^{t-1-s} \eta_{s}^{3} \nonumber\\
&+ \frac{48 L^{2} q^{2} c^{2}}{\rho^{2}}  \sum_{s=s_ t}^{t-1}(1+\frac{19}{18 q})^{t-1-s} \eta_{s}^{4} \sum_{\bar{s}=s_t}^{s} \eta_{\bar{s}}^{2} \sum_{i=1}^{N} \mathbb{E} \|A_{\bar{s}}^{-1}(m_{\bar{s},i}-\bar{m}_{\bar{s}})\|^{2} \nonumber\\
&\leq \frac{2 N L}{3 \rho} \sum_{s=s_{t}}^{t-1}(1+\frac{19}{18q})^{q} \eta_{s} \mathbb{E} \|A_{s}^{-1}\bar{m}_{s}\|^{2} + \left[\frac{2 N \sigma^{2} c^{2}}{3 \rho b L} + \frac{2N \zeta^{2} c^{2}}{L \rho}\right] \sum_{s=s_{t}}^{t-1}\left(1+\frac{19}{18q}\right)^{q} \eta_{s}^{3} \nonumber\\
&+ \frac{48L^{2} q^{3} c^{2}}{\rho^{2}}(\frac{\rho}{12 L q})^{5}(1+\frac{19}{18 q})^{q} \sum_{s=s_{t}}^{t} \eta_{s} \sum_{i=1}^{N} \mathbb{E}\|A_{s}^{-1}(m_{s,i} - \bar{m}_{s}) \|^{2} \nonumber\\
&\leq \frac{2 N L}{\rho} \sum_{s=s_t}^{t} \eta_{s} \mathbb{E}\|A_{s}^{-1}\bar{m}_{s}\|^{2} + \left[\frac{2 N \sigma^{2} c^{2}}{\rho b L} 
+ \frac{6 N \zeta^{2} c^{2}}{L \rho} \right]\sum_{s=s_t}^{t} \eta_{s}^{3} \nonumber\\
&+ \frac{144 L^{2} q^{3} c^{2}}{\rho^{2}}(\frac{\rho}{12 L q})^5 \sum_{s=s_t}^{t} \eta_{s} \sum_{i=1}^{N} \mathbb{E}\|A_{s}^{-1}(m_{s,i}-\bar{m}_{s}) \|^{2}
\end{align}
where the third inequality is due to $(1+19 / 18q)^{q} \leq e^{19 / 18} \leq 3$. Multiplying $\eta_t$ on both side and summing over $[s_t, \bar{s}]$ in one inner loop, we have
\begin{align}
\sum_{t=s_t}^{\bar{s}} \eta_{t} \sum_{i=1}^{N} \mathbb{E} \|A_{t}^{-1}(m_{t,i}-\bar{m}_{t}) \|^{2}  &\leq \frac{2 N L}{\rho} \sum_{t=s_{t}}^{\bar{s}} \eta_{t} \sum_{s=s_{t}}^{t} \eta_{s} \mathbb{E}\|A_{s}^{-1}\bar{m}_{s} \|^{2} + \left[\frac{2 N \sigma^{2} c^{2}}{\rho b L} + \frac{6 N \zeta^{2} c^{2}}{L \rho} \right] \sum_{t=s_t}^{\bar{s}} \eta_{t} \sum_{s=s_t}^{t} \eta_{s}^{3} \nonumber\\
&+ \frac{144 L^{2} q^{3} c^{2}}{\rho^{2}}(\frac{\rho}{12 L q})^5 \sum_{t=s_t}^{\bar{s}} \eta_{t} \sum_{s=s_t}^{t} \eta_{s} \sum_{i=1}^{N} \mathbb{E}\|A_{s}^{-1}(m_{s,i}-\bar{m}_{s})\|^{2}     
\end{align}
Finally,
\begin{align}
\sum_{t=s_t}^{\bar{s}} \eta_{t} \sum_{i=1}^{N} \mathbb{E} \|A_{t}^{-1}(m_{t,i}-\bar{m}_{t}) \|^{2} 
&\leq \frac{2 N L}{\rho} (\sum_{t=s_t}^{\bar{s}} \eta_{t}) \sum_{t=s_{t}}^{\bar{s}} \eta_{t} \mathbb{E}\|A_{t}^{-1}\bar{m}_{t} \|^{2} + \left[\frac{2 N \sigma^{2} c^{2}}{\rho b L} + \frac{6 N \zeta^{2} c^{2}}{L \rho} \right](\sum_{t=s_t}^{\bar{s}} \eta_{t}) \sum_{t=s_t}^{\bar{s}} \eta_{t}^{3} \nonumber\\
&+ \frac{144 L^{2} q^{3} c^{2}}{\rho^{2}} (\frac{\rho}{12 L q})^{5}(\sum_{t=s_ t}^{\bar{s}} \eta_{t}) \sum_{t=s_t}^{\bar{s}} \eta_{t} \sum_{i=1}^{N} \mathbb{E}\|A_{t}^{-1}(m_{t,i} - \bar{m}_{t})\|^{2} \nonumber\\
&\leq \frac{N}{6} \sum_{t=s_t}^{\bar{s}} \eta_{t} \mathbb{E}\|A_{t}^{-1}\bar{m}_{t}\|^{2} + \left[\frac{N \sigma^{2} c^{2}}{6 b L^{2}}  + \frac{N \zeta^{2} c^{2}}{2 L^{2}} \right] \sum_{t=s_t}^{\bar{s}} \eta_{t}^{3} \nonumber\\
&+ \frac{144 L^{2} q^{4} c^{2}}{\rho^{2}}(\frac{\rho}{12 L q})^{6} \sum_{t=s_{t}}^{\bar{s}} \eta_{t} \sum_{i=1}^{N} \mathbb{E}\|A_{t}^{-1}(m_{t,i}-\bar{m}_{t}) \|^{2}
\end{align}

Therefore,
\begin{align}
[1 - \frac{144 L^{2} q^{4} c^{2}}{\rho^{2}}(\frac{\rho}{12 L q})^{6} ]\sum_{t=s_t}^{\bar{s}} \eta_{t} \sum_{i=1}^{N} \mathbb{E} \|A_{t}^{-1}(m_{t,i}-\bar{m}_{t}) \|^{2} \leq& \frac{N}{6} \sum_{t=s_t}^{\bar{s}} \eta_{t} \mathbb{E}\|A_{t}^{-1}\bar{m}_{t}\|^{2} \nonumber\\
+& \left[\frac{N \sigma^{2} c^{2}}{6 b L^{2}}  + \frac{N \zeta^{2} c^{2}}{2 L^{2}} \right] \sum_{t=s_t}^{\bar{s}} \eta_{t}^{3}
\end{align}
Given that $c \leq \frac{120L^2}{bN\rho^2}$, and $1 - \frac{144 L^{2} q^{4} c^{2}}{\rho^{2}}(\frac{\rho}{12 L q})^{6} \geq \frac{22}{72}$. By multiply $\frac{3\rho}{4N}$ on both size, we have
\begin{align} \label{eq:42}
\frac{15\rho}{72N}\sum_{t=s_t}^{\bar{s}} \eta_{t} \sum_{i=1}^{N} \mathbb{E} \|A_{t}^{-1}(m_{t,i}-\bar{m}_{t}) \|^{2} &\leq \frac{\rho}{8} \sum_{t=s_t}^{\bar{s}} \eta_{t} \mathbb{E}\|A_{t}^{-1}\bar{m}_{t}\|^{2}  \nonumber\\
&+ \left[\frac{\rho \sigma^{2} c^{2}}{8 b L^{2}} + \frac{3 \rho \zeta^{2} c^{2}}{8 L^{2}} \right] \sum_{t=s_t}^{\bar{s}} \eta_{t}^{3}
\end{align}
\end{proof}
\section{Proof of Theorem}
\begin{proof}
Set 
$\eta_{t}=\frac{\rho \bar{h}}{\left(w_t + t\right)^{1 / 3}}, \quad \alpha_{t+1} = c \cdot \eta_t^{2}$,  $ c=\frac{1}{12 L q \bar{h}^{3} \rho^{2}} + \frac{60 L^{2}}{b N \rho^{2}} \leq \frac{120 L^{2}}{b N \rho^{2}}$, $\bar{h}=\frac{ N^{2 / 3}}{L}$ and \\
$w_t$ = max $(\frac{3}{2}, 1728 L^3 q^3 \bar{h}^3 - t)$. So, it is clear that $\eta_t \leq \frac{\rho}{12Lq}$ and 

\begin{align}
\eta_{t}^{-1} - \eta_{t-1}^{-1}  &= \frac{(w_t + t)^{1 / 3}}{\rho \bar{h}} - \frac{(w_{t-1} + t-1)^{1 / 3}}{\rho \bar{h}}  \nonumber\\
&\leq \frac{1}{ 3 \rho \bar{h}(w_t +(t-1))^{2 / 3}} \nonumber\\
&\leq \frac{1}{3\rho \bar{h} (w_t / 3 + t)^{2 / 3}} = \frac{3^{2 / 3}}{3 \rho \bar{h}(w_t + t)^{2 / 3}} \nonumber\\
&= \frac{3^{2 / 3}}{3 \bar{h}^{3}\rho^{3}} \cdot \frac{\rho^{2} \bar{h}^{2}}{(w_t + t)^{2 / 3}} = \frac{3^{2 / 3}}{3 \rho^{3} \bar{h}^{3}} \eta_{t}^{2} \nonumber\\
&\leq \frac{\eta_t}{12 \rho^2 \bar{h}^3 L q}
\end{align}

where the first inequality holds by the concavity of function $f(x)=x^{1/3}$, \emph{i.e.}, $(x+y)^{1/3}\leq x^{1/3} + \frac{y}{3x^{2/3}}$. The second inequality follows that $w_t \geq \frac{3}{2}$. And the last inequality holds by
$\eta_{t} \leq \frac{\rho}{12 L q}$, 

\begin{align}
&\frac{\mathbb{E}\|\bar{m}_{t+1} - \bar{g}_{t+1}\|^2}{\eta_{t}} - \frac{\mathbb{E}\|\bar{m}_{t} - \bar{g}_t\|^2}{\eta_{t-1}}  \nonumber\\
\leq& \left[\frac{(1 - \alpha_{t+1})^{2}}{\eta_{t}} - \frac{1}{\eta_{t-1}}\right] \mathbb{E}\|\bar{m}_{t} - \bar{g}_t\|^{2} + \frac{2 (1-\alpha_{t+1})^{2} L^{2}}{b N^{2} \eta_{t}}  \sum_{i=1}^{N}\mathbb{E}\|x_{t+1} - x_{t}\|^{2} 
+ \frac{2 \alpha_{t+1}^{2} \sigma^{2}}{b N \eta_t} \nonumber\\
\leq& \left[\frac{(1 - \alpha_{t+1})^{2}}{\eta_{t}} - \frac{1}{\eta_{t-1}}\right] \mathbb{E}\|\bar{m}_{t} - \bar{g}_t\|^{2} + \frac{4 (1-\alpha_{t+1})^{2} L^{2}}{b N^{2}} \eta_{t} \sum_{i=1}^{N}\mathbb{E}\|A_{t}^{-1}(m_{t,i} - \bar{m}_{t})\|^{2}\nonumber\\
+& \frac{4(1-\alpha_{t+1})^{2}}{b N} L^{2} \eta_{t}\mathbb{E}\|A_{t}^{-1} \bar{m}_t\|^{2} + \frac{2 \alpha_{t+1}^{2} \sigma^{2}}{b N \eta_t} \nonumber\\
\leq& [\eta_{t}^{-1} - \eta_{t-1}^{-1} - c \eta_{t}] \mathbb{E}\|\bar{m}_{t} - \bar{g}_t\|^2 + \frac{4(1-\alpha_{t+1})^{2} L^{2}}{b N^{2}}\eta_{t}\sum_{i=1}^{N}\mathbb{E} \|A_{t}^{-1}(m_{t,i} -  \bar{m}_{t})\|^{2}] \nonumber\\
+& \frac{4(1-\alpha_{t+1})^{2}}{b N} L^{2} \eta_{t}\mathbb{E}\|A_{t}^{-1} \bar{m}_t\|^{2} + \frac{2 \alpha_{t+1}^{2} \sigma^{2}}{b N \eta_t} \nonumber\\
\leq& - \frac{60 L^{2}}{b N \rho^{2}} \eta_{t}\mathbb{E}\|\bar{m}_{t} - \bar{g}_t\|^2 + \frac{4 L^{2}}{b N^{2}} \eta_{t}\sum_{i=1}^{N}\mathbb{E}\|A_{t}^{-1}(m_{t,i}-\bar{m}_{t})\|^{2}+\frac{4 L^{2} \eta_{t}}{b N} \mathbb{E}\|A_{t}^{-1}\bar{m}_{t}\|^{2} + \frac{2 \sigma^{2} c^{2} \eta_{t}^{3}}{b N}.
\end{align}

Therefore, we have 
\begin{align}
&\frac{b N \rho}{24 L^{2}}[\frac{\mathbb{E}\|\bar{m}_{t+1} - \bar{g}_{t+1}\|^2}{\eta_{t}} - \frac{\mathbb{E}\|\bar{m}_{t} - \bar{g}_t\|^{2}}{\eta_{t-1}}] \nonumber\\
\leq& -\frac{5}{2 \rho} \eta_{t} \mathbb{E}\|\bar{m}_{t} - \bar{g}_t\|^2 + \frac{\rho}{6 N} \eta_{t} \sum_{i=1}^{N}\mathbb{E}\|A_{t}^{-1}(m_{t,i}-\bar{m}_{t})\|^{2}+\frac{\eta_{t} \rho}{6} \mathbb{E}\left\|A^{-1} \bar{m}_{t}\right\|^{2} 
+ \frac{\sigma^{2} c^{2} \eta_{t}^{3} \rho}{12 L^{2}}
\end{align}

Next, we define a Lyapunov function, for any $t \leq 1$, we have 

$\Gamma_{t}=f(\bar{x}_{t})+\frac{b N \rho}{24 L^{2}} \frac{\|\bar{m}_{t} - \bar{g}_t\|^{2}}{\eta_{t-1}}$

\begin{align}
\mathbb{E}[\Gamma_{t+1}-\Gamma_{t}] &= \mathbb{E}[f(\bar{x}_{t+1})-f(\bar{x}_{t}) + \frac{\rho b N}{24 L^{2}}(\frac{\|\bar{m}_{t+1} - \bar{g}_{t+1}\|^{2}}{\eta_{t}}-\frac{\|\bar{m}_{t} - \bar{g}_t\|^{2}}{\eta_{t-1}})] \nonumber\\
&\leq - (\frac{3 \rho}{4 \eta_{t}} - \frac{L}{2})\mathbb{E}\|\bar{x}_{t+1} - \bar{x}_{t}\|^{2} -  \frac{\eta_{t}}{4\rho}\mathbb{E}\|\nabla f(\bar{x}_{t}) - \bar{m}_{t}\|^{2} + \frac{5 \eta_{t} L^{2}(q-1)}{2\rho N} \sum_{s=s_t}^{t} \eta_{s}^{2} \sum_{i=1}^{N} \mathbb{E}\|A_{s}^{-1}(m_{s,i} - \bar{m}_{s})\|^{2} \nonumber\\
&+ \frac{\rho}{6 N} \eta_{t} \sum_{i=1}^{N} \mathbb{E}\|A_{t}^{-1}(m_{t,i}-\bar{m}_{t})\|^{2} + \frac{\eta_{t} \rho}{6} \mathbb{E}\|A_{t}^{-1} \bar{m}_{t}\|^{2}+\frac{\sigma^{2} c^{2} \eta_{t}^{3} \rho}{12 L^{2}} \nonumber\\
&= - (\frac{3 \rho}{4 \eta_{t}}-\frac{L}{2})\mathbb{E}\|\bar{x}_{t+1}-\bar{x}_{t}\|^{2} -  \frac{\eta_t}{4\rho}\mathbb{E}\|\nabla f\left(\bar{x}_{t}\right)-\bar{m}_{t}\|^{2} + \frac{5 \eta_{t} L^{2}(q-1)}{2 \rho N} \sum_{s=s_{t}}^{t} \eta_{s}^{2} \sum_{i=1}^{N}\mathbb{E}\|A_{s}^{-1}(m_{s,i}-\bar{m}_{s})\|^{2} \nonumber\\
&+ \frac{\rho}{6 N} \eta_{t} \sum_{i=1}^{N}\mathbb{E}\|A_{t}^{-1}(m_{t,i}-\bar{m}_{t})\|^{2}+\frac{\rho}{6 \eta_{t}} \mathbb{E}\|\bar{x}_{t+1}-\bar{x}_{t}\|^{2}+\frac{\sigma^{2} c^{2} \eta_{t}^{3} \rho}{12 L^{2}}  \nonumber\\
&\leq - \frac{5 \rho}{12 \eta_{t}}\mathbb{E}\|\bar{x}_{t+1} - \bar{x}_{t}\|^{2} -  \frac{\eta_{t}}{4\rho}\mathbb{E}\|\nabla f(\bar{x}_{t})-\bar{m}_{t}\|^{2} + \frac{5\eta_{t} L^{2}(q-1)}{2 \rho N} \sum_{s=s_{t}}^{t} \eta_{s}^{2} \sum_{i=1}^{N} \mathbb{E}\|A_{s}^{-1}(m_{s,i}-\bar{m}_{s})\|^{2} \nonumber\\
&+ \frac{\rho}{6 N} \eta_{t} \sum_{i=1}^{N} \mathbb{E}\|A_{t}^{-1}(m_{t,i}-\bar{m}_{t})\|^{2} + \frac{\sigma^{2} c^{2} \eta_{t}^{3} \rho}{12 L^{2}} 
\end{align}

where the first inequality holds by Lemma \ref{lem:A3} and $\eta_{t} \leq \frac{\rho}{12Lq}$, and the second inequality holds by $ \frac{L}{2} \leq \frac{\rho}{24 \eta_{t} q} \leq \frac{\rho}{24 \eta_{t}}$. Summing the above over $t=s_{t}$ to $\bar{s}, \bar{s} \in [\lfloor t / q \rfloor q, ( \lfloor t / q \rfloor + 1) q]$, we have 
\begin{align} \label{eq:47}
\mathbb{E}[\Gamma_{\bar{s}+1}-\Gamma_{s_t}] &\leq \sum_{t = s_t}^{\bar{s}} [-\frac{5 \rho}{12 \eta_{t}}\mathbb{E}\|\bar{x}_{t+1}-\bar{x}_{t}\|^{2} -  \frac{\eta_{t}}{4\rho}\mathbb{E}\|\nabla f(\bar{x}_{t}) - \bar{m}_{t}\|^{2}] + \sum_{t = s_{t}}^{\bar{s}} \frac{\sigma^{2} c^{2} \eta_{t}^{3}}{12 L^{2}} \rho \nonumber\\
&+ \sum_{t = s_t}^{\bar{s}} \frac{5\eta_{t} L^{2}(q-1)}{2\rho N} \sum_{s=s_{t}}^{t} \eta_{s}^{2} \sum_{i=1}^{N}\mathbb{E}\|A_{s}^{-1}(m_{s,i} - \bar{m}_{s})\|^{2} + \frac{\rho}{6 N} \sum_{t=s_t}^{\bar{s}} \eta_{t} \sum_{i=1}^{N} \mathbb{E}\|A_{t}^{-1}(m_{t,i} - \bar{m}_{t})\|^{2} \nonumber\\
&\leq \sum_{t = s_t}^{\bar{s}} [-\frac{5 \rho}{12 \eta_{t}}\mathbb{E}\|\bar{x}_{t+1}-\bar{x}_{t}\|^{2}-\frac{1}{4} \frac{\eta_{t}}{\rho}\mathbb{E}\|\nabla f(\bar{x}_{t})-\bar{m}_{t}\|^{2}]+\sum_{t=s_t}^{\bar{s}} \frac{\sigma^{2} c^{2} \eta_{t}^{3} \rho}{12 L^{2}} \nonumber\\
&+ \frac{5 L^{2}(q-1)}{2 \rho N}(\sum_{t=s_{t}}^{\bar{s}} \eta_{t}) \sum_{t=s_{t}}^{\bar{s}} \eta_{t}^{2} \sum_{i=1}^{N} \mathbb{E}\|A_{t}^{-1}(m_{t,i}-\bar{m}_{t})\|^{2} +  \frac{\rho}{6 N}\sum_{t=s_t}^{\bar{s}} \eta_{t} \sum_{i=1}^{N} \mathbb{E}\|A_{t}^{-1} (m_{t,i}-\bar{m}_{t})\|^{2} \nonumber\\ 
&\leq \sum_{t = s_t}^{\bar{s}} [-\frac{5 \rho}{12 \eta_{t}}\mathbb{E}\|\bar{x}_{t+1}-\bar{x}_{t}\|^{2}-\frac{1}{4} \frac{\eta_{t}}{\rho}\mathbb{E}\left\|\nabla f(\bar{x}_{t}\right)-\bar{m}_{t}\|^{2}] + \frac{\rho}{6 N} \sum_{t=s_{t}}^{\bar{s}} \eta_{t} \sum_{i=1}^{N}\mathbb{E}\left\|A_{t}^{-1}\left(m_{t,i}-\bar{m}_{t}\right)\right\|^{2}  \nonumber\\
&+ \frac{5 L^{2}(q-1)}{2 \rho N}\left(q \times \frac{\rho}{12 L q} \times \frac{\rho}{12 L q}\right) \sum_{t=s_{t}}^{\bar{s}} \eta_{t} \sum\mathbb{E}\|A_{t}^{-1}\left(m_{t,i} - \bar{m}_{t}\right)\|^{2} +\sum_{t= s_t}^{\bar{s}} \frac{\sigma^{2} c^{2} \eta_{t}^{3} \rho}{12 L^{2}}
\nonumber\\
&\leq \sum_{t=s_{t}}^{\bar{s}} [-\frac{5 \rho}{12 \eta_{t}}\mathbb{E}\|\bar{x}_{t+1}-\bar{x}_{t}\|^{2} -  \frac{\eta_{t}}{4\rho}\mathbb{E}\|\nabla f(\bar{x}_{t})-\bar{m}_{t}\|^{2}]+\sum_{t=s_t}^{\bar{s}} \frac{\sigma^{2} c^{2} \eta_{t}^{3} \rho}{12 L^{2}} \nonumber\\
&+ \frac{9}{48} \frac{\rho}{N} \sum_{t=s_{t}}^{\bar{s}} \eta_{t} \sum_{i=1}^{N} \mathbb{E}\|A_{t}^{-1}(m_{t,i}-\bar{m}_{t})\|^{2} \nonumber\\
&\leq \sum_{t=s_{t}}^{\bar{s}} [-\frac{5 \rho}{12 \eta_{t}}\mathbb{E}\|\bar{x}_{t+1}-\bar{x}_{t}\|^{2} -  \frac{\eta_{t}}{4\rho}\mathbb{E}\|\nabla f(\bar{x}_{t})-\bar{m}_{t}\|^{2}]+\sum_{t=s_t}^{\bar{s}} \frac{\sigma^{2} c^{2} \eta_{t}^{3} \rho}{12 L^{2}} \nonumber\\
&+ \frac{\rho}{8} \sum_{t=s_t}^{\bar{s}} \eta_{t} \mathbb{E}\|A_{t}^{-1}\bar{m}_{t}\|^{2} + \left[\frac{\rho \sigma^{2} c^{2}}{8 b L^{2}} + \frac{3 \rho \zeta^{2} c^{2}}{8 L^{2}} \right] \sum_{t=s_t}^{\bar{s}} \eta_{t}^{3} 
\end{align} 
where the last inequality holds by Lemma \ref{lem:A7} and the fact that $\frac{9}{48} < \frac{15}{72}$. Then summing over from the beginning, we have  
\begin{align}
\mathbb{E}\left[\Gamma_{T}-\Gamma_{0}\right] &\leq \sum_{t=0}^{T-1}\left[-\frac{\rho}{4 \eta_{t}}\mathbb{E}\left\|\bar{x}_{t+1}-\bar{x}_{t}\right\|^{2} -  \frac{\eta_{t}}{4\rho}\mathbb{E}\left\|\nabla\left(\bar{x}_{t}\right)-\bar{m}_{t}\right\|^{2}\right]+\sum_{t=0}^{T-1} \frac{\sigma^{2} c^{2} \eta_{t}^3 \rho}{12 L^{2}} \nonumber\\
&+ \frac{\rho \sigma^{2} c^{2}}{8 b L^{2}} \sum_{t=0}^{T-1} \eta_{t}^{3} + \frac{3 \rho \zeta^{2} c^{2}}{8 L^{2}} \sum_{t=0}^{T-1} \eta_{t}^{3}
\end{align}
Then we move terms and obtain
\begin{align}
\sum_{t=0}^{T-1}\mathbb{E}\left[\frac{\rho}{4 \eta_{t}}\left\|\bar{x}_{t+1}-\bar{x}_{t}\right\|^{2}+ \frac{\eta_{t}}{4\rho}\left\|\nabla f\left(\bar{x}_{t}\right)-\bar{m}_{t}\right\|^{2}\right] 
&\leq 
\mathbb{E}\left[\Gamma_{0}-\Gamma_{T}\right] + \frac{5 \sigma^{2} c^{2} \rho}{24 L^{2}} \sum_{t=0}^{T-1}\eta^{3}_t +\frac{3 \rho \zeta^{2} c^{2}}{8 L^{2}} \sum_{t=0}^{T-1}\eta_{t}^{3} \nonumber\\
&\leq 
\mathbb{E}\left[f(\bar{x}_0)- f^{*}\right]+\frac{b N \rho}{24 L^{2}} \frac{\|\bar{m}_{0} - \bar{g}_0\|^{2}}{\eta_{0}} \nonumber\\
&+\frac{5 \sigma^{2} c^{2} \rho}{24 L^{2}} \sum_{t=0}^{T-1}\eta^{3}_t + \frac{3 \rho \zeta^{2} c^{2}}{8 L^{2}} \sum_{t=1}^{T-1}\eta_{t}^{3}
\end{align}

Then consider that
$\sum_{t=0}^{T-1} \eta_{t}^{3} = \sum_{t=0}^{T-1} \frac{\rho^{3} \bar{h}^{3}}{w_t + t} 
\leq \sum_{t=0}^{T-1} \frac{\rho^{3} \bar{h}^{3}}{1 + t} \leq \rho^{3} \bar{h}^{3} (\ln T + 1)$, since $w_t \geq \frac{3}{2}>1$. Taking Lemma \ref{lem:A3} and dividing the above by $\rho \eta_T T$, we have
\begin{align} \label{eq:50}
&\frac{1}{T} \sum_{t=0}^{T-1}\mathbb{E}\left[ \frac{1}{4\eta_{t}^{2}}\left\|\bar{x}_{t+1,i}-\bar{x}_{t}\right\|^{2}+\frac{1}{4 \rho^{2}}\left\|\nabla f\left(\bar{x}_{t}\right)-\bar{m}_{t}\right\|^{2}\right] \nonumber\\
\leq& \frac{\mathbb{E}\left[f(\bar{x}_0)- f^{*}\right]}{\eta_{T} T \rho}+\frac{ \sigma^{2} b}{\eta_{T} T 24 L^{2} B \eta_{0}} +\frac{\rho^{3}}{\eta_T T L^{2}}\left[\frac{5 \sigma^{2}}{24}+\frac{3 \zeta^{2}}{8}\right] c^{2} \bar{h}^{3} (\ln T +1)
\end{align}
    
For the first term in \ref{eq:50}, we have 
\begin{align}
&\frac{1}{\eta_{T} T}=\frac{\left(w_T + T\right)^{1 / 3}}{\rho \bar{h} T} \leq \frac{w_T^{1/3}}{\rho \bar{h} T}+\frac{1}{\rho \bar{h} T^{2/ 3}} \leq \frac{12 L q}{\rho T}+\frac{L}{\rho(N T)^{2 / 3}}
\end{align}

For the second term in \ref{eq:50}, set B = qb, we have
\begin{align}
\frac{\sigma^{2} b}{\eta_{T} T 24 L^{2} B \eta_{0}} &\leq \left(\frac{12 L q}{\rho T} + \frac{L}{\rho( N T)^{2 / 3}}\right) \times \frac{\sigma^{2}}{24 L^{2}} \times \frac{b w_{0}^{1 / 3}}{B \bar{h} \rho} \nonumber\\
&\leq \left(\frac{12 L q}{\rho T}+\frac{L}{\rho(N T)^{2 / 3}}\right) \times \frac{\sigma^{2}}{24 L^{2}} \times \frac{12 L q b}{B \rho} \nonumber\\
&\leq \frac{6 b q^{2} \sigma^{2}}{B T \rho^{2}}+\frac{b q \sigma^{2}}{2(N T)^{2/3} B \rho^{2}}=\frac{6 q \sigma^{2}}{T \rho^{2}} + \frac{\sigma^{2}}{2(N T)^{2/3 }\rho^{2}}
\end{align}
For the third term
\begin{align}
\frac{\rho^{3} c^{2} \bar{h}^{3}}{8 \eta_{T} T L^{2}} &\leq \left(\frac{12 L q}{\rho T}+\frac{L}{\rho(N T)^{2 / 3}}\right) \times\left(\frac{120 L^{2}}{b N \rho^{2}}\right)^{2} \times \frac{N^{2} \rho^{3}}{8 L^{3} \cdot L^{2}} \nonumber\\
&=\left(\frac{12 L q}{\rho T}+\frac{L}{\rho(N T)^{2 / 3}}\right) \times\left(\frac{1800}{b^2\rho L }\right)\nonumber\\
&=\frac{12^{2} \times 150 q}{b^2\rho^{2} T}+\frac{1800}{b^2 \rho^{2}(N T)^{2 / 3}}
\end{align}

Let $\mathcal{M}_t = \frac{1}{4\eta_{t}^{2}} \left\|\bar{x}_{t+1}-\bar{x}_{t}\right\|^{2}+\frac{1}{4 \rho^{2}}\left\|\nabla f\left(\bar{x}_{t}\right)-\bar{m}_{t}\right\|^{2}$
\begin{align}
\frac{1}{T} \sum_{t=0}^{T-1}\mathbb{E}[\mathcal{M}_t] =&
\frac{1}{T} \sum_{t=0}^{T-1}\mathbb{E}\left[\frac{1}{4\eta_{t}^{2}} \left\|\bar{x}_{t+1}-\bar{x}_{t}\right\|^{2}+\frac{1}{4 \rho^{2}}\left\|\nabla f\left(\bar{x}_{t}\right)-\bar{m}_{t}\right\|^{2}\right] \nonumber\\
&\leq \left[\frac{12 L q}{\rho T}+\frac{L}{\rho(N T)^{2 / 3}}\right] \mathbb{E}\left[f\left(\bar{x}_{0}\right)-f^{*}\right]+\frac{6 q \sigma^{2}}{T \rho^{2}}+\frac{\sigma^{2}}{2(N T)^{2 / 3 } \rho^{2}}\nonumber\\
&+\left[\frac{12^{2} \times 150 q}{b^2 \rho^{2} T}+\frac{1800}{b^2\rho^{2}(N T)^{2 / 3}}\right]\left[\frac{5 \sigma^{2}}{3}+\frac{3 \zeta^{2}}{2}\right] (\ln T +1)
\end{align}


and, if we  let b as $O(1) (b \geq 1)$, and choose $q=\left(T / N^{2}\right)^{1 / 3}$. To let the right hand is less than $\varepsilon^2$, we get $T = O(N^{-1}\varepsilon^{-3})$
and $\frac{T}{q} = (N T)^{2 / 3} = \varepsilon^{-2}$.

Then with Jensen's inequality:
\begin{align}
 &\frac{1}{\eta_{t}}\left\|\bar{x}_{t}-\bar{x}_{t+1}\right\|+\frac{1}{\rho}\left\|\nabla f\left(\bar{x}_{t}\right)-\bar{m}_{t}\right\| \nonumber\\  
=& \left\|A_{t}^{-1} \bar{m}_{t}\right\| + \frac{1}{\rho}\left\|\nabla f(\bar{x}_{t}) - \bar{m}_{t}\right\| \nonumber\\
=& \frac{1}{\|A_t\|}\left\|A_{t}\right\|\left\|A_{t}^{-1} \bar{m}_{t}\right\| + \frac{1}{\rho}\left\|\nabla f\left(\bar{x}_{t}\right)-\bar{m}_{t}\right\| \nonumber\\
\geq& \frac{1}{\|A_t\|}\left\|\bar{m}_{t}\right\| + \frac{1}{\|A_t\|}\left\|\nabla f\left(\bar{x}_{t}\right)-\bar{m}_{t}\right\| \nonumber\\
\geq& \frac{1}{\|A_t\|}\left\|\nabla f\left(\bar{x}_{t}\right)\right\|
\end{align}
where the first inequality holds by and $\|A_t\| \geq \rho$. Then we have 
\begin{align}
 \frac{1}{T} \sum_{t=1}^{T} \mathbb{E}\|\nabla f(\bar{x}_{t})\| &\leq \frac{1}{T} \sum_{t=1}^{T} \mathbb{E} \|A_{t}\|\left[\frac{1}{\eta_{t}}\left\|\bar{x}_{t}-\bar{x}_{t+1}\right\|^{2}+\frac{1}{\rho}\left\|\nabla f\left(\bar{x}_{t}\right)-\bar{m}_{t}\right\|\right] \nonumber\\    
&\leq \frac{1}{T} \sum_{t=1}^{T} \mathbb{E}\left[\frac{\lambda}{2}\left\|A_{t}\right\|^2 + \frac{1}{2 \lambda} \left[\frac{1}{\eta_{t}}\left\|\bar{x}_{t}-\bar{x}_{t+1}\right\|^{2}+\frac{1}{\rho}\left\|\nabla f\left(\bar{x}_{t}\right)-\bar{m}_{t}\right\|\right]^2\right]  \nonumber\\
&= \frac{\lambda}{2}\frac{1}{T} \sum_{t=1}^{T} \mathbb{E}\left\|A_{t}\right\|^2 + \frac{1}{2 \lambda} \frac{1}{T} \sum_{t=1}^{T} \mathbb{E}\left[\frac{1}{\eta_{t}}\left\|\bar{x}_{t}-\bar{x}_{t+1}\right\|^{2}+\frac{1}{\rho}\left\|\nabla f\left(\bar{x}_{t}\right)-\bar{m}_{t}\right\|\right]^2  \nonumber\\
&= \sqrt{\frac{1}{T} \sum_{t=1}^{T} \mathbb{E}\left\|A_{t}\right\|^{2}} \sqrt{\frac{1}{T} \sum_{t=1}^{T} \mathbb{E} \left[\frac{1}{\eta_{t}}\left\|\bar{x}_{t}-\bar{x}_{t+1}\right\|+\frac{1}{\rho}\left\|\nabla f\left(\bar{x}_{t}\right)-\bar{m}_{t}\right\|\right]^2} \nonumber\\
&\leq \sqrt{\frac{1}{T} \sum_{t=1}^{T} \mathbb{E} \left\|A_{t}\right\|^{2}} \sqrt{\frac{1}{T}\sum_{t=1}^{T} \mathbb{E} \left[\frac{2}{\eta_{t}^{2}}\left\|\bar{x}_{t} - \bar{x}_{t+1}\right\|^{2} + \frac{2}{\rho^{2}}\left\|\nabla f\left(\bar{x}_{t}\right)-\bar{m}_{t}\right\|^{2}\right]} \nonumber\\
&\leq 4\sqrt{(\sigma^{2} + G^{2} + \rho^{2})} \sqrt{\frac{1}{T} \sum_{t=0}^{T-1}\mathbb{E}[\mathcal{M}_t}] 
\end{align}
where $\lambda = \sqrt{\frac{1}{T} \sum_{t=1}^{T} \mathbb{E} \left[\frac{1}{\eta_{t}}\left\|\bar{x}_{t}-\bar{x}_{t+1}\right\|+\frac{1}{\rho}\left\|\nabla f\left(\bar{x}_{t}\right)-\bar{m}_{t}\right\|\right]^2} / \sqrt{\frac{1}{T} \sum_{t=1}^{T} \mathbb{E}\left\|A_{t}\right\|^{2}} $
\end{proof}

\section{Model Architectures of Image Classification Task}
\vspace*{-6pt}
\begin{table*} [h]
  \centering
  \vspace*{-6pt}
  \caption{Model Architecture for MNIST \cite{huang2021efficient}}
  \label{tab:3}
    \vspace*{-6pt}
\begin{tabular}{ll}
\hline Layer Type & Shape \\
\hline Convolution + ReLU & $5 \times 5 \times 20 $ \\
Max Pooling & $2 \times 2$ \\
Convolution + ReLU & $5 \times 5 \times 50$ \\
Max Pooling & $2 \times 2$ \\
Fully Connected + ReLU & 500 \\
Fully Connected + ReLU & 10 \\
\hline
\end{tabular}
\end{table*}
\vspace*{-6pt}
\begin{table*} [h]
  \centering
 \vspace*{-6pt}
  \caption{ Model Architecture for Fashion-MNIST \citep{nouiehed2019solving}}
  \label{tab:4}
 \vspace*{-6pt}
\begin{tabular}{ll}
\hline Layer Type & Shape \\
\hline Convolution + Tanh & $3 \times 3 \times 5 $ \\
Max Pooling & $2 \times 2$ \\
Convolution + Tanh & $3 \times 3 \times 10$ \\
Max Pooling & $2 \times 2$ \\
Fully Connected + Tanh & 100 \\
Fully Connected + Tanh & 10 \\
\hline
\end{tabular}
\end{table*}
\vspace*{-6pt}
\begin{table*} [ht]
  \centering
    \vspace*{-6pt}
  \caption{ Model Architecture for CIFAR-10 \citep{huang2021efficient}}
  \label{tab:5}
    \vspace*{-6pt}
\begin{tabular}{llc}
\hline Layer Type & Shape & padding\\
\hline Convolution + ReLU & $3 \times 3 \times 16 $ & 1 \\
Max Pooling & $2 \times 2$ \\
Convolution + ReLU & $3 \times 3 \times 32$ & 1\\
Max Pooling & $2 \times 2$ \\
Convolution + ReLU & $3 \times 3 \times 64$ & 1\\
Max Pooling & $2 \times 2$ \\
Fully Connected + ReLU & 512 \\
Fully Connected + ReLU & 64 \\
Fully Connected + ReLU & 10 \\
\hline
\end{tabular}
\end{table*}
\end{document}